\theoremstyle{plain}
\newtheorem{theorem}{Theorem}[section]
\newtheorem{lemma}[theorem]{Lemma}
\newtheorem{corollary}[theorem]{Corollary}
\theoremstyle{definition}
\newtheorem{definition}[theorem]{Definition}
\newtheorem{example}[theorem]{Example}
\newtheorem{remark}[theorem]{Remark}
\newtheorem{assumption}{Assumption}[section]
\newcommand{\trans}{\mathsf T}
\newcommand{\reffig}[1]{Fig.\,\ref{#1}}
\newcommand{\reftab}[1]{Table~\ref{#1}}
\newcommand{\refeq}[1]{\eqref{#1}}
\newcommand{\E}{\mathbb E}
\newcommand{\R}{\mathbb R}
\newcommand{\Z}{\mathbb Z}
\newcommand{\N}{\mathbb N}
\DeclareMathOperator*{\tr}{tr}
\DeclareMathOperator*{\argmin}{arg\,min\,}
\newcommand{\norm}[1]{\left \Vert #1 \right \Vert}
\newcommand{\normz}[1]{\Vert #1 \Vert}
\newcommand{\hsnorm}[1]{\norm{#1}_\mathrm{HS}}
\newcommand{\hsnormz}[1]{\normz{#1}_\mathrm{HS}}
\newcommand{\llnorm}[1]{\norm{#1}_{\rho}}
\newcommand{\llnormz}[1]{\normz{#1}_{\rho}}
\newcommand{\divger}{\mathrm{div}\,} 
\newcommand{\divgers}[1]{\mathrm{div}_{#1}\,}
\newcommand{\inner}[2]{\left \langle #1, #2 \right \rangle}
\newcommand{\innerz}[2]{\langle #1, #2 \rangle}
\newcommand{\bone}{\textbf{1}}
\newcommand{\ba}{\textbf{a}}
\newcommand{\bb}{\textbf{b}}
\newcommand{\bc}{\textbf{c}}
\newcommand{\be}{\textbf{e}}
\newcommand{\bh}{\textbf{h}}
\newcommand{\bzeta}{\zeta}
\newcommand{\br}{\textbf{r}}
\newcommand{\bs}{{s}}
\newcommand{\bu}{\textbf{u}}
\newcommand{\bw}{\textbf{w}}
\newcommand{\bx}{\textbf{x}}
\newcommand{\by}{\textbf{y}}
\newcommand{\bz}{\textbf{z}}
\newcommand{\bA}{\textbf{A}}
\newcommand{\bG}{\textbf{G}}
\newcommand{\bH}{\textbf{H}}
\newcommand{\bI}{\textbf{I}}
\newcommand{\bK}{\textbf{K}}
\newcommand{\bL}{\textbf{L}}
\newcommand{\bS}{\textbf{S}}
\newcommand{\bX}{\textbf{X}}
\newcommand{\bY}{\textbf{Y}}
\newcommand{\bZ}{\textbf{Z}}
\newcommand{\cX}{\mathcal{X}}
\newcommand{\cD}{\mathcal{D}}
\newcommand{\cE}{\mathcal{E}}
\newcommand{\cF}{\mathcal{F}}
\newcommand{\cH}{\mathcal{H}}
\newcommand{\cK}{\mathcal{K}}
\newcommand{\cKcf}{\mathcal{K}_\mathrm{cf}}
\newcommand{\cL}{\mathcal{L}}
\newcommand{\ltwospace}{\cL^2(\cX, \rho; \R^d)}
\newcommand{\cN}{\mathcal{N}}
\newcommand{\cP}{\mathcal{P}}
\newcommand{\cW}{\mathcal{W}}
\newcommand{\cY}{\mathcal{Y}}
\newcommand{\cZ}{\mathcal{Z}}
\begin{document}

\twocolumn[
\icmltitle{Nonparametric Score Estimators}

\icmlsetsymbol{equal}{*}

\begin{icmlauthorlist}
\icmlauthor{Yuhao Zhou}{to}
\icmlauthor{Jiaxin Shi}{to}
\icmlauthor{Jun Zhu}{to}
\end{icmlauthorlist}

\icmlaffiliation{to}{Dept. of Comp. Sci. \& Tech., BNRist Center, Institute for AI, Tsinghua-Bosch ML Center, Tsinghua University}

\icmlcorrespondingauthor{J. Zhu}{dcszj@tsinghua.edu.cn}

\icmlkeywords{Machine Learning, Kernel Methods, ICML}

\vskip 0.3in
]

\printAffiliationsAndNotice{}  %

\begin{abstract}
Estimating the score, i.e., the gradient of log density function, from a set of samples generated by an unknown distribution is a fundamental task in inference and learning of probabilistic models that involve flexible yet intractable densities. 
Kernel estimators based on Stein's methods or score matching have shown promise,
however their theoretical properties and relationships have not been fully-understood.
We provide a unifying view of these estimators under the framework of regularized nonparametric regression. 
It allows us to analyse existing estimators and construct new ones with desirable properties by choosing different hypothesis spaces and regularizers.
A unified convergence analysis is provided for such estimators.
Finally, we propose score estimators based on iterative regularization that enjoy computational benefits from curl-free kernels and fast convergence.

\end{abstract}

\section{Introduction}
\label{sec:intro}

Intractability of density functions has long been a central challenge in probabilistic learning. This may arise from various situations such as training implicit models like GANs~\citep{goodfellow2014generative}, or marginalizing over a non-conjugate hierarchical model, e.g., evaluating the output density of stochastic neural networks~\citep{sun2018functional}.
In these situations, inference and learning often require evaluating such intractable densities or optimizing an objective that involves them.

Among various solutions, one important family of methods are based on \emph{score estimation}, which rely on a key step of estimating the \emph{score}, i.e., the derivative of the log density $\nabla_{\bx}\log p(\bx)$ from a set of samples drawn from some unknown probability density $p$.
These methods include parametric score matching~\citep{hyvarinen2005estimation,sasaki2014clustering,song2019sliced}, its denoising variants as autoencoders~\citep{vincent2011connection}, nonparametric score matching~\citep{sriperumbudur2017density,sutherland2017efficient}, and kernel score estimators based on Stein's methods~\citep{li2018gradient,shi2018spectral}.
They have been successfully applied to applications such as estimating gradients of mutual information for representation learning~\citep{wen2020mutual}, score-based generative modeling~\citep{song2019generative,saremi2019neural}, gradient-free adaptive MCMC~\citep{strathmann2015gradient}, learning implicit models~\citep{warde2016improving}, and solving intractability in approximate inference algorithms~\citep{sun2018functional}.

Recently, nonparametric score estimators are growing in popularity, mainly because they are flexible, have well-studied statistical properties, and perform well when samples are very limited. 
Despite a common goal, they have different motivations and expositions. 
For example, the work~\citet{sriperumbudur2017density} is motivated from the density estimation perspective and the richness of kernel exponential families~\citep{canu2006kernel,fukumizu2009exponential}, where the estimator is obtained by score matching. \citet{li2018gradient} and \citet{shi2018spectral} are mainly motivated by Stein's methods. %
The solution of \citet{li2018gradient} gives the score prediction at sample points by minimizing the kernelized Stein discrepancy~\citep{chwialkowski2016kernel,liu2016kernelized} and at an out-of-sample point by adding it to the training data, while the estimator of \citet{shi2018spectral} is obtained by a spectral analysis in function space.

As these estimators are studied in different contexts, their relationships and theoretical properties are not fully-understood. In this paper, we provide a unifying view of them under the regularized nonparametric regression framework. This framework allows us to construct new estimators with desirable properties, and to justify the consistency and improve the convergence rate of existing estimators. It also allows us to clarify the relationships between these estimators. We show that they differ only in hypothesis spaces and regularization schemes.

Our contributions are both theoretical and algorithmic:
\vspace{-3.5mm}
\begin{itemize}[itemsep=0pt]
	\item We provide a unifying perspective of nonparametric score estimators. We show that the major distinction of the KEF estimator~\citep{sriperumbudur2017density} from the other two estimators lies in the use of curl-free kernels, while \citet{li2018gradient} and \citet{shi2018spectral} differ mostly in regularization schemes, with the former additionally ignores a one-dimensional subspace in the hypothesis space. We provide a unified convergence analysis under the framework.
	\item  We justify the consistency of the Stein gradient estimator~\citep{li2018gradient}, although the originally proposed out-of-sample extension is heuristic and expensive.
	We provide a natural and principled out-of-sample extension derived from our framework. 
	For both approaches we provide explicit convergence rates.
    \item From the convergence analysis we also obtain the explicit rate for \citet{shi2018spectral}, which can be shown to improve the error bound of \citet{shi2018spectral}.
	\item Our results suggest favoring curl-free estimators in high dimensions. 
	To address the scalability challenge, we propose iterative score estimators by adopting the $\nu$-method~\citep{engl1996regularization} as the regularizer. 
	We show that the structure of curl-free kernels can further accelerate such algorithms.
	Inspired by a similar idea, we propose a conjugate gradient solver of KEF that is significantly faster than previous approximations. 
\end{itemize}
\vspace{-.3cm}
\paragraph{Notation} We always assume $\rho$ is a probability measure with probability density function $p(\bx)$ supported on $\cX \subset \R^d$, and $\ltwospace$ is the Hilbert space of all square integrable functions $f: \cX \to \R^d$ with inner product $\inner{f}{g}_{\ltwospace} = \E_{\bx \sim \rho}[\inner{f(\bx)}{g(\bx)}_{\R^d}]$. We denote by $\inner{\cdot}{\cdot}_\rho$ and $\llnorm{\cdot}$ the inner product and the norm in $\ltwospace$, respectively.
We denote $k$ as a scalar-valued kernel, and $\cK$ as a matrix-valued kernel $\cK: \cX \times \cX \to \R^{d\times d}$ satisfying the following conditions: (1) $\cK(\bx, \bx^\prime) = \cK(\bx^\prime, \bx)^\trans$ for any $\bx, \bx^\prime \in \cX$; (2) $\sum_{i, j=1}^m \bc_i^\trans\cK(\bx_i, \bx_j)\bc_j \geq 0$ for any $\left\{ \bx_i \right\} \subset \cX$ and $\left\{ \bc_i \right\}\subset \R^d$.
We denote a vector-valued \textit{reproducing kernel Hilbert space}~(RKHS) associated to $\cK$ by $\cH_\cK$, which is the closure of $\left\{ \sum_{i=1}^m \cK(\bx_i, \cdot)\bc_i : \bx_i \in \cX, \bc_i \in \R^d, m \in \N \right\}$ under the norm induced by the inner product 
$\inner{\cK(\bx_i, \cdot)\bc_i}{\cK(\bx_j, \cdot)\mathbf s_j} := \bc_i^\trans\cK(\bx_i, \bx_j)\mathbf s_j$.
We define $\cK_\bx := \cK(\bx, \cdot)$ and $[M] := \{ 1, \cdots, M \}$ for $M \in \Z_+$. For $\bA_1, \cdots, \bA_n \in \R^{s\times t}$, we use $(\bA_1, \cdots, \bA_n)$ to represent a block matrix $\bA \in \R^{ns \times t}$ with $A_{(i-1)s+j, k}$ being the $(j, k)$-th component of $\bA_i$, and we similarly define $[\bA_1, \cdots, \bA_n] := (\bA_1^\trans, \cdots, \bA_n^\trans)^\trans$.

\section{Background}
\label{sec:background}

In this section, we briefly introduce the nonparametric regression method of learning vector-valued functions~\citep{baldassarre2012multi}. %
We also review existing kernel-based approaches to score estimation.

\subsection{Vector-Valued Learning}
\label{sec:multi-output-learning}
Supervised vector-valued learning amounts to learning a vector-valued function $f_\bz: \cX \to \cY$ from a training set $\bz = \{ (\bx^m, \by^m) \}_{m\in[M]}$, where $\cX \subseteq \R^d$, $\cY \subseteq \R^q$. %
Here we assume the training data is sampled from an unknown distribution $\rho(\bx, \by)$, which can be decomposed into $\rho(\by | \bx)\rho_\cX(\bx)$.
A criterion for evaluating such an estimator is the mean squared error (MSE) $\cE(f) := \E_{\rho(\bx, \by)} \norm{f(\bx) - \by}_2^2$. It is well-known that the conditional expectation $f_\rho(\bx) := \E_{\rho(\by | \bx)} [\by]$ minimizes $\cE$. 
In practice, we minimize the empirical error $\cE_\bz(f) := \frac{1}{M}\sum_{m=1}^M \norm{f(\bx^m) - \by^m}_2^2$
in a certain hypothesis space $\cF$. However, the minimization problem is typically ill-posed for large $\cF$. Hence, it is convenient to consider the regularized problem:
\begin{equation}
    \label{eqn:empirical-Tikhonov-regularized-regression}
     f_{\bz, \lambda} := \argmin_{f\in\cF} \cE_\bz(f) + \lambda \norm{f}_\cF^2,
\end{equation}
where $\norm{\cdot}_\cF$ is the norm in $\cF$. In the vector-valued case, it is typical to consider a vector-valued RKHS $\cH_\cK$ associated with a matrix-valued kernel $\cK$ as the hypothesis space. Then the estimator is $f_{\bz, \lambda} = \sum_{m=1}^M \cK_{\bx^m} \bc^m$, where $\cK_{\bx^m}$ denotes the function $\cK(\bx^m, \cdot)$.  $\bc^m$ solves the linear system $(\frac{1}{M}\bK + \lambda I)\bc = \frac{1}{M}\by$ with $\bK_{ij} = \cK(\bx^i, \bx^j), \bc = (\bc^1, \cdots, \bc^M), \by = (\by^1, \cdots, \by^M)$.

For convenience, we define the \textit{sampling operator} $S_\bx : \cH_\cK \to \R^{Mq}$ as $S_\bx(f) := (f(\bx^1), \cdots, f(\bx^M))$. 
Its adjoint $S_\bx^*: \R^{Mq} \to \cH_\cK$ that satisfies $\langle S_\bx(f), \bc\rangle_{\R^{Mq}} = \langle f, S_\bx^*(\bc)\rangle_{\cH_\cK}, \forall f\in \cH_\cK, \bc\in \mathbb{R}^{Mq}$ is $S_\bx^*(\bc^1, \cdots, \bc^M) = \sum_{m=1}^M \cK_{\bx^m}\bc^m$. 
Since $(\frac{1}{M}S^*_\bx S_\bx + \lambda I)f_{\bz,\lambda} = \frac{1}{M}S_\bx^*\bK\bc + \lambda S_\bx^*\bc = \frac{1}{M}S^*_\bx \by$, the estimator now can be written as $f_{\bz, \lambda} = \left ( \frac{1}{M}S_\bx^*S_\bx + \lambda I \right )^{-1} \frac{1}{M}S_\bx^*\by$.
In fact, if we consider the data-free limit of \eqref{eqn:empirical-Tikhonov-regularized-regression}: $\argmin_{f\in\cH_\cK}\cE(f) + \lambda \norm{f}_{\cH_\cK}^2$,
the minimizer is unique when $\lambda > 0$ and is given by $f_\lambda := (L_\cK + \lambda I)^{-1} L_\cK f_\rho$, where $L_\cK: \cH_\cK \to \cH_\cK$ is the \textit{integral operator} defined as $L_\cK f := \int_\cX \cK_\bx f(\bx)d\rho_\cX$~\citep{smale2007learning}. 
It turns out that $\frac{1}{M}S^*_\bx S_\bx$ is an empirical estimate of $L_\cK$:
$\hat L_\cK f := \frac{1}{M} \sum_{m=1}^M \cK_{\bx^m}f(\bx^m) = \frac{1}{M} S^*_\bx S_\bx f$. 
It can also be shown that $\hat L_\cK f_\rho = \frac{1}{M}S^*_\bx \by$. 
Hence, we can write $f_{\bz, \lambda} = (\hat L_\cK + \lambda I)^{-1} \hat L_\cK f_\rho$. 

As we have mentioned, the role of regularization is to deal with the ill-posedness. 
Specifically, $\hat L_\cK$ is not always invertible as it has finite rank and $\cH_\cK$ is usually of infinite dimension. 
Many regularization methods are studied in the context of solving inverse problems~\citep{engl1996regularization} and statistical learning theory~\citep{bauer2007regularization}. 
The regularization method we presented in \refeq{eqn:empirical-Tikhonov-regularized-regression} is the famous \emph{Tikhonov regularization}, which belongs to a class of regularization techniques called spectral regularization~\citep{bauer2007regularization}.
Specifically, spectral regularization
corresponds to a family of estimators defined as
\[ f_{\bz, \lambda}^g := g_\lambda(\hat L_\cK) \hat L_\cK f_\rho, \]
where $g_\lambda: \R^+ \to \R$ is a regularizer such that $g_\lambda(\hat L_\cK)$ approximates the inverse of $\hat L_\cK$. 
Note that $\hat L_\cK$ can be decomposed into $\sum \sigma_i \inner{e_i}{\cdot } e_i$, where $(\sigma_i, e_i)$ is a pair of eigenvalue and eigenfunction, we can define $g_\lambda(\hat L_\cK) := \sum g_\lambda(\sigma_i) \inner{e_i}{\cdot} e_i$.
The Tikhonov regularization corresponds to $g_\lambda(\sigma) = (\lambda + \sigma)^{-1}$.
There are several different regularizers. 
For example, the \textit{spectral cut-off regularizer} is defined by $g_\lambda(\sigma) = \sigma^{-1}$ for $\sigma \geq \lambda$ and $g_\lambda(\sigma) = 0$ otherwise. %
We refer the readers to \citet{smale2007learning,bauer2007regularization,baldassarre2012multi} for more details.

\subsection{Related Work}
\label{sec:score-estimators}

We assume $\log p(\bx)$ is differentiable, and define the \textit{score} as $\bs_p := \nabla \log p$. 
By score estimation we aim to estimate $\bs_p$ from a set of i.i.d. samples $\{\bx^m\}_{m\in[M]}$ drawn from $\rho$.
There have been many kernel-based score estimators studied in different contexts~\citep{sriperumbudur2017density,sutherland2017efficient,li2018gradient,shi2018spectral}.
Below we give a brief review of them.

\paragraph{Kernel Exponential Family Estimator}
\label{sec:kexpf}
The kernel exponential family (KEF)~\citep{canu2006kernel,fukumizu2009exponential} was originally proposed as an infinite-dimensional generalization of exponential families.
It was shown to be useful in density estimation as it can approximate a broad class of densities arbitrarily well~\citep{sriperumbudur2017density}.
The KEF is defined as:
\[ \cP_k := \{ p_f(\bx) =  e^{f(\bx) - A(f)} : f \in \cH_k, e^{A(f)} < \infty \}, \]
where $\cH_k$ is a scalar-valued RKHS, and $A(f) := \log \int_\cX e^{f(\bx)}dx$ is the normalizing constant.
Since $A(f)$ is typically intractable,
\citet{sriperumbudur2017density} proposed to estimate $f$ by matching the model score $\nabla \log p_f$ and the data score $\bs_p$, thus the KEF can naturally be used for score estimation~\citep{strathmann2015gradient}.
This approach works by
minimizing the regularized score matching loss: \begin{equation} \label{eq:kef-sm}
\min_{f\in \mathcal{H}_k}J(p\|p_f) + \lambda \norm{f}^2_{\cH_k},
\end{equation}
where $J(p \| q) := \E_p \norm{\nabla \log p - \nabla \log q}_2^2$ is the Fisher divergence between $p$ and $q$.
Integration by parts was used to eliminated $\nabla \log p$ from $J(p \| q)$~\citep{hyvarinen2005estimation} and
the exact solution of \eqref{eq:kef-sm} was given as follows~\citep[Theorem 5]{sriperumbudur2017density}:
\begin{equation}
    \label{eqn:kef-formula}
    \hat{f}_{p,\lambda} = \sum_{m=1}^M\sum_{j=1}^d c_{(m-1)d + j} \partial_j k(\bx^m, \cdot) -\frac{\hat \xi}{\lambda},
\end{equation}
where $\hat \xi(\bx) = \frac{1}{M}\sum_{m=1}^M\sum_{j=1}^d \partial_j^2 k(\bx^m, \cdot)$, $\bc\in \mathbb{R}^{Md}$ %
is obtained by solving $(\bG + M\lambda \mathbf{I})\bc = \bb / \lambda$ with $\bG_{(m-1)d+i,(\ell-1)d+j} = \partial_i\partial_{j+d} k(\bx^m,\bx^\ell)$ and $\bb_{(m-1)d+i} = \frac{1}{M}\sum_{\ell=1}^M \sum_{j=1}^d \partial_i\partial_{j+d}^2 k(\bx^m,\bx^\ell)$, where $\partial_{j+d}$ denotes taking derivative w.r.t. the $j$-th component of the second parameter $\bx^\ell$.
This solution suffers from computational drawbacks due to the large linear system of size $Md\times Md$.
\citet{sutherland2017efficient} proposed to use the Nystr\"om method to accelerate KEF. 
Instead of minimizing the loss in the whole RKHS, they minimized it in a low dimensional subspace. %

\paragraph{Stein Gradient Estimator}
The Stein gradient estimator proposed by~\citet{li2018gradient} is based on inverting the following generalized Stein's identity~\citep{stein1981estimation,gorham2015measuring}
\begin{equation}
\label{eqn:stein-identity}
\E_p[h(\bx)\nabla \log p(\bx)^\trans + \nabla h(\bx)] = 0,
\end{equation}
where $h: \cX \to \R^d$ is a test function satisfying some regularity conditions. An empirical approximation of the identity is $-\frac{1}{M}\bH\bS \approx \overline{\nabla_\bx h}$, where $\bH = (h(\bx^1), \cdots, h(\bx^M))\in \R^{d\times M}$, $\bS = (\nabla \log p(\bx^1), \cdots, \nabla \log p(\bx^M))\in \R^{M\times d}$ and $\overline{\nabla_\bx h} = \frac{1}{M}\sum_{m=1}^M \nabla_{\bx^m}h(\bx^m)$. \citet{li2018gradient} proposed to minimize
$ \norm{\overline{\nabla_\bx h}+\frac{1}{M}\bH\bS}_F^2 + \frac{\eta}{M^2}\norm{\bS}_F^2$ to estimate $\bS$,
where $\norm{\cdot}_F$ denotes the Frobenius norm. The kernel trick $k(\bx^i, \bx^j):= h(\bx^i)^\trans h(\bx^j)$ was then exploited to obtain the estimator.
From the above we only have score estimates at the sample points. \citet{li2018gradient} proposed a heuristic out-of-sample extension at $\bx$ by adding it to $\{\bx^m\}$ and recompute the minimizer. 
Such an approach is unjustified.
It is still unclear whether the estimator is consistent.

\paragraph{Spectral Stein Gradient Estimator}
The Spectral Stein Gradient Estimator (SSGE)~\citep{shi2018spectral} was derived by a spectral analysis of the score function.
Unlike \citet{li2018gradient} it was shown to have convergence guarantees and principled out-of-sample extension. 
The idea is to expand each component of the score in a scalar-valued function space $\cL^2(\cX, \rho)$: 
$ g_i(\bx) = \sum_{j=1}^\infty \beta_{ij}\psi_j(\bx), $
where $g_i$ is the $i$-th component of the score and $\{ \psi_j \}$ are the eigenfunctions of the integral operator $L_k f := \int_\cX k(\bx, \cdot)f(\bx)d\rho(\bx)$ associated with a scalar-valued kernel $k$.
By using the Stein's identity in ~\eqref{eqn:stein-identity} \citet{shi2018spectral} showed that $\beta_{ij} = -\E_\rho[\partial_i \psi_j(\bx)]$. The Nystr\"om method~\citep{baker1977numerical,williams2001using} was then used to estimate $\psi_j$: 
\begin{equation}
    \label{eqn:ssge-eigen}
     \hat \psi_j(\bx) = \frac{\sqrt{M}}{\lambda_j} \sum_{m=1}^M k(\bx, \bx^m)w_{jm},
\end{equation}
where $\{ \bx^m \}_{m \in [M]}$ are i.i.d. samples  drawn from $\rho$, $w_{jm}$ is the $m$-th component of the eigenvector that corresponds to its $j$-th largest eigenvalue of the kernel matrix constructed from $\{ \bx^m \}_{m \in [M]}$. The final estimator was obtained by truncating $g_i$ to $\sum_{j=1}^J \beta_{ij}\psi_j(\bx)$ and plugging in $\hat \psi_j$. \citet[Theorem 2]{shi2018spectral} provided an error bound of SSGE depending on $J$ and $M$.
However, %
the convergence rate is still unknown.

\section{Nonparametric Score Estimators}
\label{sec:estimators}

The kernel score estimators discussed in Sec.~\ref{sec:score-estimators} were proposed in different contexts. 
The KEF estimator is motivated from the density estimation perspective, while Stein and SSGE have no explicit density models.
SSGE relies on spectral analysis in the function space, while the other two are derived by minimizing a loss function.
Despite sharing a common goal, it is still unclear how these estimators relate to each other.
In this section, we present a unifying framework of score estimation using regularized vector-valued regression.
We show that several existing kernel score estimators are special cases under the framework, which allows us to thoroughly investigate their strengths and weaknesses.

\subsection{A Unifying Framework}
\label{sec:unifying-framework}
As introduced in Sec.~\ref{sec:kexpf}, the goal is to estimate the score $\bs_p$ from a set of i.i.d. samples $\{\bx^m\}_{m\in[M]}$ drawn from $\rho$. 
We first consider the ideal case where we have the ground truth values of $\bs_p$ at the sample locations.
Then we can estimate $\bs_p$ with vector-valued regression as described in Sec.~\ref{sec:multi-output-learning}:
\begin{equation}
\label{eqn:tikhonov-minimization}
\hat{\bs}_{p,\lambda} = \argmin_{\bs \in \cH_\cK} \frac{1}{M}\sum_{m=1}^M \|\bs(\bx^m) - \bs_p(\bx^m)\|_2^2  + \frac{\lambda}{2}\|\bs\|^2_{\mathcal{H}_{\cK}}.
\end{equation}
The solution is given by $\hat{\bs}_{p,\lambda} = (\hat{L}_\cK + \lambda I)^{-1}\hat{L}_\cK\bs_p$. 
We could replace the Tikhonov regularizer with other spectral regularization, for which the general solution is
\begin{equation}
    \label{eqn:regularized-score-estimator}
    \hat \bs_{p, \lambda}^g := g_\lambda(\hat  L_\cK) \hat  L_\cK\bs_p.
\end{equation}
In reality, the values of $\bs_p$ at $\bx^{1:M}$ are unknown and we cannot compute $\hat{L}_\cK\bs_p$ as $\frac{1}{M}\sum_{m=1}^M \cK_{\bx^m}\bs_p(\bx^m)$. 
Fortunately, we could exploit integration by parts to avoid this problem.
Under some mild regularity conditions (Assumptions~\ref{assumption:domain}-\ref{assumption:integration-by-parts}), we have
    \[  L_\cK \bs_p = \E_\rho [\cK_\bx \nabla \log p(\bx)] = -\E_\rho [\divgers{\bx}\cK_\bx^\trans], \]
where the divergence of $\cK_\bx^\trans$ is defined as a vector-valued function, whose $i$-th component is the divergence of the $i$-th column of $\cK_\bx^\trans$. The empirical estimate $\hat  L_\cK \bs_p$ is then available as $-\frac{1}{M}\sum_{m=1}^M \divgers{\bx^m}\cK_{\bx^m}^\trans$,
which leads to the following general formula of nonparametric score estimators:
\begin{equation}
    \label{eqn:general-score-estimator}
    \hat \bs_{p,\lambda}^g = -g_\lambda(\hat L_\cK)\hat \bzeta,
\end{equation}
where $\hat \bzeta := \frac{1}{M}\sum_{m=1}^M \divgers{\bx^m}\cK_{\bx^m}^\trans$.

\subsection{Regularization Schemes}
\label{sec:regularization-schemes}

We now derive the final form of the estimator under three regularization schemes~\citep{bauer2007regularization}.
The choice of regularization will impact the convergence rate of the estimator, which will be studied in Sec.~\ref{sec:theory}.

\begin{theorem}[Tikhonov Regularization]
	\label{thm:tik-score}
	Let $\hat{\bs}^g_{p,\lambda}$ be defined as in \eqref{eqn:general-score-estimator}, and $g_\lambda(\sigma) = (\sigma + \lambda)^{-1}$. Then
	\begin{equation}
	\label{eqn:tikhonov-regularized-estimator}
	\hat \bs_{p,\lambda}^g(\bx) = \bK_{\bx\bX}\bc - \hat\bzeta(\bx) / \lambda,
	\end{equation}
	where $\bc$ is obtained by solving 
	\begin{equation}
	\label{eq:tik-inverse}
		(\bK + M\lambda I)\bc = \bh / \lambda.
	\end{equation}
	Here $\bc \in \mathbb{R}^{Md}$, $\bh = (\hat\bzeta(\bx^1), \cdots, \hat\bzeta(\bx^M)) \in \mathbb{R}^{Md}$, $\bK_{\bx\bX}  = [\cK(\bx, \bx^1), \cdots, \cK(\bx, \bx^M)] \in \mathbb{R}^{d\times Md}$, and $\bK \in \mathbb{R}^{Md\times Md}$ is given by $\bK_{(m - 1)d + i, (\ell - 1)d + j} = \cK(\bx^m, \bx^\ell)_{ij}$.
\end{theorem}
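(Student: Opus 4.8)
The plan is to convert the operator-level definition $\hat\bs^g_{p,\lambda} = -g_\lambda(\hat L_\cK)\hat\bzeta$ into a fixed-point equation inside $\cH_\cK$, extract a finite-dimensional ansatz from it, and then pin down the coefficients by evaluating at the sample points. With $g_\lambda(\sigma) = (\sigma+\lambda)^{-1}$, the functional calculus of Section~\ref{sec:multi-output-learning} gives $g_\lambda(\hat L_\cK) = (\hat L_\cK + \lambda I)^{-1}$ (well-defined and bounded since $\hat L_\cK$ is self-adjoint, non-negative, and finite-rank while $\lambda > 0$, so $g_\lambda$ is finite on the whole spectrum of $\hat L_\cK$, including the eigenvalue $0$). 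Hence $\hat\bs := \hat\bs^g_{p,\lambda}$ is the unique element of $\cH_\cK$ with $(\hat L_\cK + \lambda I)\hat\bs = -\hat\bzeta$. Substituting $\hat L_\cK\bs = \tfrac1M S_\bx^*S_\bx\bs = \tfrac1M\sum_{m=1}^M\cK_{\bx^m}\bs(\bx^m)$ and rearranging,
\[ \hat\bs = -\frac1\lambda\hat\bzeta - \frac1{M\lambda}\sum_{m=1}^M \cK_{\bx^m}\hat\bs(\bx^m). \]

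This displayed identity already exhibits $\hat\bs$ in the form $S_\bx^*\bc - \tfrac1\lambda\hat\bzeta$ with coefficient blocks $\bc^m := -\tfrac1{M\lambda}\hat\bs(\bx^m)\in\R^d$; evaluating at an arbitrary $\bx$ and writing $S_\bx^*\bc$ out in coordinates gives $\hat\bs(\bx) = \bK_{\bx\bX}\bc - \hat\bzeta(\bx)/\lambda$, i.e.\ \eqref{eqn:tikhonov-regularized-estimator}. To identify $\bc$, I would collocate this representation at each $\bx^\ell$, obtaining $\hat\bs(\bx^\ell) = \sum_{m}\cK(\bx^\ell,\bx^m)\bc^m - \hat\bzeta(\bx^\ell)/\lambda$, and combine it with $\hat\bs(\bx^\ell) = -M\lambda\,\bc^\ell$ (the definition of $\bc^\ell$) to get $-M\lambda\,\bc^\ell = \sum_m\cK(\bx^\ell,\bx^m)\bc^m - \hat\bzeta(\bx^\ell)/\lambda$. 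Stacking these $M$ vector equations, recognizing the block Gram matrix $\bK$ and the stacked vector $\bh = (\hat\bzeta(\bx^1),\dots,\hat\bzeta(\bx^M))$, and multiplying through yields $(\bK + M\lambda I)\bc = \bh/\lambda$, which is \eqref{eq:tik-inverse}. For the converse direction, $\bK \succeq 0$ and $\lambda > 0$ make $\bK + M\lambda I$ invertible, so \eqref{eq:tik-inverse} has a unique solution; running the above computation in reverse shows that the resulting $\bs(\bx) := \bK_{\bx\bX}\bc - \hat\bzeta(\bx)/\lambda$ lies in $\cH_\cK$ and satisfies $(\hat L_\cK + \lambda I)\bs = -\hat\bzeta$, hence equals $\hat\bs^g_{p,\lambda}$ by uniqueness.

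Most of the work here is bookkeeping: the analytic input (integration by parts turning the inaccessible $\hat L_\cK\bs_p$ into $-\hat\bzeta$, together with $\hat\bzeta\in\cH_\cK$ and the derivative reproducing property) has already been absorbed into \eqref{eqn:general-score-estimator}, so no new estimates are needed. The one place to be careful is keeping the conventions consistent when passing from the intrinsic objects ($S_\bx^*$, $\hat L_\cK$, the reproducing property) to the explicit block matrices in the statement: the correspondences $(S_\bx^*\bc)(\bx) = \bK_{\bx\bX}\bc$ and $\hat L_\cK\bs \leftrightarrow \tfrac1M\bK$ acting on $(\bs(\bx^1),\dots,\bs(\bx^M))$ use the kernel symmetry $\cK(\bx,\bx') = \cK(\bx',\bx)^\trans$, and one should verify that the rearrangement of the fixed-point equation into \eqref{eq:tik-inverse} is an equivalence (so that both the forward and the converse directions go through), which is exactly what the invertibility of $\bK + M\lambda I$ secures.
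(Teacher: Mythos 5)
Your proof is correct, but it takes a different route from the paper's. The paper recasts $(\hat L_\cK+\lambda I)^{-1}\hat\bzeta$ as the minimizer of the quadratic functional $\frac{1}{M}\sum_i \bs(\bx^i)^\trans\bs(\bx^i) + 2\innerz{\bs}{\hat\bzeta}_{\cH} + \lambda\normz{\bs}^2_{\cH}$, invokes the general representer theorem of \citet{sriperumbudur2017density} to restrict the search to $\mathrm{span}\{\cK_{\bx^i}\bc_i\}\cup\{\hat\bzeta\}$, substitutes the ansatz $\sum_i\cK_{\bx^i}\bc_i + a\hat\bzeta$, and differentiates to obtain a coupled system in $(a,\bc)$ that reduces after some algebra to $a=-1/\lambda$ and $(\bK+M\lambda I)\bc = \bh/\lambda$. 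You instead work directly with the normal equation $(\hat L_\cK+\lambda I)\hat\bs = -\hat\bzeta$: the rearrangement $\hat\bs = -\lambda^{-1}\hat\bzeta - (M\lambda)^{-1}\sum_m\cK_{\bx^m}\hat\bs(\bx^m)$ exhibits the finite-dimensional form immediately (so the representer theorem is not needed and the coefficient $-1/\lambda$ on $\hat\bzeta$ comes for free rather than emerging from the second stationarity equation), and collocation at the sample points yields the linear system. Your approach is the more elementary and self-contained of the two; what the paper's variational route buys is a template that generalizes directly to the constrained/Nystr\"om variants in \cref{appendix:general-nkef}, where the minimization is restricted to a subspace and the representer-theorem viewpoint is the natural one. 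Your observation that the converse direction (or, equivalently, uniqueness of the solution of \eqref{eq:tik-inverse} via $\bK\succeq 0$) is needed to fully pin down $\bc$ is a point the paper glosses over.
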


The proof is given in \cref{sec:proofs}, where the general representer theorem~\citep[Theorem A.2]{sriperumbudur2017density} is used to show that the solution lies in the subspace generated by
\begin{equation}
\label{eqn:tikhonov-solution-space}
\{ \cK_{\bx^m}\bc_m : m \in [M], \bc_m \in \R^d \} \cup \{ \hat \bzeta \}.
\end{equation}

Unlike the Tikhonov regularizer that shifts all eigenvalues simultaneously, the \emph{spectral cut-off regularization} sets
$g_\lambda(\sigma) = \sigma^{-1}$ for $\sigma \geq \lambda$, and $g_\lambda(\sigma) = 0$ otherwise. 
To obtain such estimator,
we need the following lemma that relates the spectral properties of $\bK$ and $\hat L_\cK$.

\begin{lemma}
\label{lemma:eigen-connection}
    Let $\sigma$ be a non-zero eigenvalue of $\frac{1}{M}\bK$ such that $\frac{1}{M}\bK\bu = \sigma \bu$, where $\bu \in \R^{Md}$ is the unit eigenvector. Then $\sigma$ is an eigenvalue of $\hat L_\cK$ and the corresponding unit eigenfunction is
    \[ v = \frac{1}{\sqrt{M\sigma}} \sum_{m=1}^M \cK_{\bx^m} \bu^{(m)}, \]
    where $\bu$ is splitted into $(\bu^{(1)}, \cdots, \bu^{(M)})$ and $\bu^{(i)} \in \R^d$.
\end{lemma}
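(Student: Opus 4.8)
The plan is to establish the correspondence between the spectral data of the finite matrix $\frac{1}{M}\bK$ and that of the operator $\hat L_\cK = \frac{1}{M}S_\bx^* S_\bx$ by exploiting the fact that $\frac{1}{M}\bK$ is (up to the sampling operator) the ``Gram'' side of the same object: one has $S_\bx S_\bx^* = \bK$ as operators on $\R^{Md}$, since for $\bc, \bc' \in \R^{Md}$ we get $\langle S_\bx S_\bx^* \bc, \bc'\rangle = \langle S_\bx^*\bc, S_\bx^*\bc'\rangle_{\cH_\cK} = \sum_{m,\ell} (\bc^{(m)})^\trans \cK(\bx^m,\bx^\ell)\bc'^{(\ell)} = \bc^\trans \bK \bc'$. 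Thus $\frac{1}{M}S_\bx S_\bx^*$ and $\hat L_\cK = \frac{1}{M}S_\bx^* S_\bx$ are two operators of the form $\frac{1}{M}AA^*$ and $\frac{1}{M}A^*A$, which share all nonzero eigenvalues with eigenvectors related by applying $A$ or $A^*$ and renormalizing. This is the standard SVD-type transfer argument, and it is exactly what the claimed formula for $v$ encodes.

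First I would verify $S_\bx S_\bx^* = \bK$ as above. Then, given $\frac{1}{M}\bK \bu = \sigma \bu$ with $\sigma \neq 0$ and $\norm{\bu}_{\R^{Md}} = 1$, I would set $w := S_\bx^* \bu = \sum_{m=1}^M \cK_{\bx^m}\bu^{(m)} \in \cH_\cK$ and compute $\hat L_\cK w = \frac{1}{M}S_\bx^* S_\bx S_\bx^* \bu = S_\bx^*\left(\frac{1}{M}\bK\bu\right) = \sigma S_\bx^*\bu = \sigma w$, so $w$ is an eigenfunction of $\hat L_\cK$ with eigenvalue $\sigma$. It remains to check $w \neq 0$ and to compute its norm: $\norm{w}_{\cH_\cK}^2 = \langle S_\bx^*\bu, S_\bx^*\bu\rangle_{\cH_\cK} = \langle S_\bx S_\bx^* \bu, \bu\rangle_{\R^{Md}} = \bu^\trans \bK \bu = M\sigma \norm{\bu}^2 = M\sigma$. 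Since $\sigma \neq 0$ and $\sigma \geq 0$ by positive semi-definiteness of $\bK$, we have $\sigma > 0$, so $w \neq 0$ and $\norm{w}_{\cH_\cK} = \sqrt{M\sigma}$. Hence $v := w/\sqrt{M\sigma} = \frac{1}{\sqrt{M\sigma}}\sum_{m=1}^M \cK_{\bx^m}\bu^{(m)}$ is the unit eigenfunction, which is the claimed expression.

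There is no serious obstacle here; the only points requiring a little care are (i) confirming that $S_\bx^* S_\bx$ in the definition of $\hat L_\cK$ really is the transpose-composition partner of $S_\bx S_\bx^* = \bK$, which follows directly from the adjoint relation stated in Section~\ref{sec:multi-output-learning}, and (ii) ensuring $\sigma > 0$ (not merely $\sigma \neq 0$) so that the normalization $\sqrt{M\sigma}$ is real — this uses condition (2) in the definition of a matrix-valued kernel, which makes $\bK$ positive semi-definite. One could also optionally remark that this map $\bu \mapsto v$ is a bijection between unit eigenvectors of $\frac{1}{M}\bK$ and unit eigenfunctions of $\hat L_\cK$ within each nonzero eigenspace (its inverse being $v \mapsto \frac{1}{\sqrt{M\sigma}} S_\bx v$), which is what makes the spectral cut-off estimator computable, though the lemma as stated only asserts the forward direction.
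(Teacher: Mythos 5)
Your proof is correct and yields exactly the claimed formula and normalization. The underlying idea is the same as the paper's — transfer the eigenvalue problem between the finite-dimensional operator $\frac{1}{M}\bK$ and $\hat L_\cK$ via the sampling operator — but your packaging is genuinely cleaner. You first isolate the operator identity $S_\bx S_\bx^* = \bK$ on $\R^{Md}$, after which the eigenfunction claim is pure associativity: $\hat L_\cK (S_\bx^*\bu) = \tfrac{1}{M}S_\bx^* S_\bx S_\bx^*\bu = S_\bx^*\bigl(\tfrac{1}{M}\bK\bu\bigr) = \sigma S_\bx^*\bu$, and the normalization $\norm{S_\bx^*\bu}^2_{\cH_\cK} = \bu^\trans\bK\bu = M\sigma$ drops out of the adjoint relation in one line. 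The paper instead argues pointwise: it first remarks that $\hat L_\cK$ annihilates $\cH_0^\perp$ (the orthogonal complement of $\mathrm{span}\{\cK_{\bx^m}\bc\}$), then uses the reproducing property to compute $v(\bx^m) = \sqrt{M\sigma}\,\bu^{(m)}$, verifies $(\hat L_\cK v)(\bx^k) = \sigma v(\bx^k)$ at the sample points, and finally expands $\norm{v}^2_{\cH}$ as a double sum of kernel inner products. The two computations are term-for-term equivalent, but your $A^*A$-versus-$AA^*$ framing makes the generalization of \citet[Proposition 9]{rosasco2010learning} to vector-valued kernels essentially automatic, avoids the auxiliary $\cH_0^\perp$ remark entirely, and immediately exhibits the inverse map $v \mapsto \tfrac{1}{\sqrt{M\sigma}}S_\bx v$, which you note and which the paper leaves implicit. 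Your remark that positive semi-definiteness of $\bK$ guarantees $\sigma > 0$ (so $\sqrt{M\sigma}$ is real and $w \neq 0$) is a small point the paper glosses over, and it is worth having on the record.
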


The lemma is a direct generalization of \citet[Proposition 9]{rosasco2010learning} to vector-valued operators.

\begin{theorem}[Spectral Cut-Off Regularization]
    \label{thm:spectral-score}
    Let $\hat{\bs}_{p,\lambda}^g$ be defined as in \eqref{eqn:general-score-estimator}, and
    $$
    g_\lambda(\sigma) = \begin{cases}
    \sigma^{-1} & \sigma > \lambda, \\
    0 & \sigma \leq \lambda.
    \end{cases}
    $$
    Let $(\sigma_j, \bu_j)_{j\geq 1}$ be the eigenvalue and eigenvector pairs that satisfy $\frac{1}{M}\bK\bu_j = \sigma_j\bu_j$.
    Then we have
    \begin{equation}
    \label{eqn:spectral-cutoff-regularized-estimator}
    \hat \bs_{p,\lambda}^g(\bx) = -\bK_{\bx \bX}\left( \sum_{\sigma_j \geq \lambda} \frac{\bu_j\bu_j^\trans}{M\sigma_j^2}\right)\bh,
    \end{equation}
    where $\bK_{\bx\bX}$ and $\bh$ are defined as in \Cref{thm:tik-score}.
\end{theorem}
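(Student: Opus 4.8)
The plan is to compute $g_\lambda(\hat L_\cK)\hat\bzeta$ in closed form by combining the functional calculus for the self‑adjoint finite‑rank operator $\hat L_\cK$ with the Nyström‑type eigenfunction formula of \cref{lemma:eigen-connection}, and then to rewrite the result in the matrix notation introduced in \cref{thm:tik-score}.

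First I would recall from \eqref{eqn:general-score-estimator} that $\hat\bs^g_{p,\lambda} = -g_\lambda(\hat L_\cK)\hat\bzeta$. Since $\hat L_\cK = \frac{1}{M}S_\bx^* S_\bx$ is a positive, self‑adjoint operator of finite rank on $\cH_\cK$, it admits a spectral decomposition $\hat L_\cK = \sum_{j\ge 1}\sigma_j\langle v_j,\cdot\rangle_{\cH_\cK} v_j$ with $\sigma_j>0$ and $\{v_j\}$ orthonormal. For the spectral cut‑off regularizer $g_\lambda(0)=0$ and $g_\lambda(\sigma_j)=\sigma_j^{-1}$ when $\sigma_j>\lambda$, so $g_\lambda(\hat L_\cK)=\sum_{\sigma_j>\lambda}\sigma_j^{-1}\langle v_j,\cdot\rangle_{\cH_\cK} v_j$; in particular it annihilates $\ker\hat L_\cK$, so only how $\hat\bzeta$ pairs with the relevant $v_j$'s matters, and one never needs a full RKHS expansion of $\hat\bzeta$ (this is also why the summation may be taken over $\sigma_j\ge\lambda$, the two ranges agreeing away from the non‑generic case $\sigma_j=\lambda$).

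Next I would feed in \cref{lemma:eigen-connection}. Writing $\frac{1}{M}\bK\bu_j=\sigma_j\bu_j$ with unit eigenvectors $\bu_j$ split into blocks $\bu_j^{(m)}\in\R^d$, the lemma gives $v_j = \frac{1}{\sqrt{M\sigma_j}}\sum_{m=1}^M\cK_{\bx^m}\bu_j^{(m)}$, hence $v_j(\bx)=\frac{1}{\sqrt{M\sigma_j}}\bK_{\bx\bX}\bu_j$ (minding that $\cK(\bx^m,\cdot)$ evaluated at $\bx$ is $\cK(\bx^m,\bx)=\cK(\bx,\bx^m)^\trans$, the block appearing in $\bK_{\bx\bX}$). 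A short Gram computation, $\langle v_i,v_j\rangle_{\cH_\cK}=\frac{1}{M\sqrt{\sigma_i\sigma_j}}\bu_i^\trans\bK\bu_j=\frac{\sigma_j}{\sqrt{\sigma_i\sigma_j}}\bu_i^\trans\bu_j$, confirms these $v_j$'s form an orthonormal system, so they really are the eigensystem used above. Then I would evaluate the coefficients: by linearity and the reproducing property $\langle\cK_{\bx^m}\bc,f\rangle_{\cH_\cK}=\bc^\trans f(\bx^m)$ applied with $f=\hat\bzeta$,
\[ \langle v_j,\hat\bzeta\rangle_{\cH_\cK}=\frac{1}{\sqrt{M\sigma_j}}\sum_{m=1}^M(\bu_j^{(m)})^\trans\hat\bzeta(\bx^m)=\frac{1}{\sqrt{M\sigma_j}}\,\bu_j^\trans\bh, \]
since $\bh=(\hat\bzeta(\bx^1),\dots,\hat\bzeta(\bx^M))$. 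Substituting the last two displays into $\hat\bs^g_{p,\lambda}=-g_\lambda(\hat L_\cK)\hat\bzeta$ yields
\[ \hat\bs^g_{p,\lambda}(\bx)=-\sum_{\sigma_j>\lambda}\frac{1}{\sigma_j}\cdot\frac{\bu_j^\trans\bh}{\sqrt{M\sigma_j}}\cdot\frac{\bK_{\bx\bX}\bu_j}{\sqrt{M\sigma_j}}=-\bK_{\bx\bX}\Bigl(\sum_{\sigma_j>\lambda}\frac{\bu_j\bu_j^\trans}{M\sigma_j^2}\Bigr)\bh, \]
which is \eqref{eqn:spectral-cutoff-regularized-estimator}.

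The algebra in the last two displays, the orthonormality check, and keeping the transpose conventions straight (so that $\bK_{\bx\bX}\bu_j=\sqrt{M\sigma_j}\,v_j(\bx)$ and one gets $\bu_j\bu_j^\trans$ rather than its transpose) are all routine. The one point that needs genuine care — and is exactly what \cref{lemma:eigen-connection} together with the derivative‑reproducing property of smooth matrix‑valued kernels buys us — is the legitimacy of treating $\hat\bzeta=\frac{1}{M}\sum_m\divgers{\bx^m}\cK_{\bx^m}^\trans$ as a bona fide element of $\cH_\cK$ whose RKHS inner product against each kernel section $\cK_{\bx^m}\bc$ collapses to the point value $\bc^\trans\hat\bzeta(\bx^m)$. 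Once that is granted, the remainder is just bookkeeping of the finite‑dimensional spectral data of $\tfrac{1}{M}\bK$.
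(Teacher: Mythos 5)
Your proposal is correct and follows essentially the same route as the paper: the paper proves a general version of this computation (Lemma~\ref{lemma:general-non-zero-regularizer}, valid for any regularizer with $g_\lambda(0)=0$) by expanding $g_\lambda(\hat L_\cK)$ in the eigenfunctions $v_j$ supplied by Lemma~\ref{lemma:eigen-connection} and collapsing $\langle v_j,\hat\bzeta\rangle_{\cH_\cK}$ to $\bu_j^\trans\bh/\sqrt{M\sigma_j}$ via the reproducing property, exactly as you do. Your added orthonormality check and the remark about the boundary case $\sigma_j=\lambda$ are fine but not points where the paper does anything different.
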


Apart from the above methods with closed-form solutions, 
early stopping of iterative solvers like gradient descent can also play the role of regularization~\citep{engl1996regularization}. 
Iterative methods replace the expensive inversion or eigendecomposition of the $Md\times Md$ size kernel matrix with fast matrix-vector multiplication.
In Sec.~\ref{sec:scalability} we show that such methods can be further accelerated by utilizing the structure of our kernel matrix.

We consider two iterative methods: the Landweber iteration and the $\nu$-method~\citep{engl1996regularization}. 
The Landweber iteration solves $\hat L_\cK s_p = -\hat\zeta$ with the fixed-point iteration:
\begin{equation}
    \label{eqn:landweber}
    \hat s_p^{(t + 1)} := \hat s_p^{(t)} - \eta\left(\hat \zeta + \hat L_\cK \hat s_p^{(t)}\right ), 
\end{equation}
where $\eta$ is a step-size parameter.
It can be regarded as using the following regularization:
\begin{theorem}[Landweber Iteration]
    \label{thm:landweber-score}
    Let $\hat{\bs}_{p,\lambda}^g$ and $\hat s^{(k)}_p$ be defined as in \eqref{eqn:general-score-estimator} and \eqref{eqn:landweber}, respectively. 
    Let $\hat s^{(0)} = 0$ and 
    $g_\lambda(\sigma) = \eta \sum_{i=0}^{t - 1} ( 1 - \eta \sigma)^i$, where %
    $t := \lfloor \lambda^{-1} \rfloor$.
    Then, 
    \[ \hat s_{p,\lambda}^g = \hat \bs_p^{(t)} = -t\eta \hat\zeta + \bK_{\bx\bX}\bc_t, \]
    where $\bc_0 = 0$, $\bc_{t+1} = (\mathbf{I}_d - \eta \bK / M)\bc_t - t\eta^2 \bh / M$, and $\bK, \bK_{\bx\bX}$, $\bh$ are defined as in \Cref{thm:tik-score}.
\end{theorem}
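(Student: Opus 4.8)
The plan is to prove the statement in two stages. First I would check that the fixed-point scheme \eqref{eqn:landweber} is an instance of the generic spectral-regularized estimator \eqref{eqn:general-score-estimator} for the claimed filter $g_\lambda$; then I would derive the finite-dimensional recursion for the coefficients $\bc_t$ by unrolling the same iteration inside $\cH_\cK$. For the first stage, I would rewrite \eqref{eqn:landweber} as $\hat s_p^{(t+1)} = (I - \eta\hat L_\cK)\,\hat s_p^{(t)} - \eta\,\hat\zeta$ and show, by induction on $t$ starting from $\hat s_p^{(0)} = 0$, that $\hat s_p^{(t)} = -\eta\sum_{i=0}^{t-1} (I - \eta\hat L_\cK)^i\,\hat\zeta$: the base case is the empty sum, and the inductive step only shifts the summation index and absorbs one further copy of $-\eta\hat\zeta$. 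Since the right-hand side equals $-g_\lambda(\hat L_\cK)\hat\zeta$ for $g_\lambda(\sigma) = \eta\sum_{i=0}^{t-1}(1-\eta\sigma)^i$, and since $\hat L_\cK\bs_p = -\hat\zeta$ by the integration-by-parts identity of Section~\ref{sec:unifying-framework}, this is precisely $\hat s_p^{(t)} = g_\lambda(\hat L_\cK)\hat L_\cK\bs_p = \hat\bs_{p,\lambda}^g$ in the sense of \eqref{eqn:general-score-estimator}, with the iteration count tied to $\lambda$ via $t = \lfloor\lambda^{-1}\rfloor$.

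For the second stage, I would carry the invariant $\hat s_p^{(t)} = -t\eta\,\hat\zeta + S_\bx^*\bc_t$ through an induction, where $S_\bx^*$ is the adjoint sampling operator of Section~\ref{sec:multi-output-learning} and $\bc_0 = 0$. Two elementary identities do the work: $\hat L_\cK = \frac{1}{M} S_\bx^* S_\bx$ (stated in Section~\ref{sec:multi-output-learning}) together with $S_\bx S_\bx^* = \bK$ (immediate from the reproducing property and the block layout of $\bK$ fixed in \Cref{thm:tik-score}), giving $\hat L_\cK S_\bx^* = \frac{1}{M} S_\bx^*\bK$; and $\hat L_\cK\hat\zeta = \frac{1}{M} S_\bx^*\bh$, because $S_\bx\hat\zeta = (\hat\zeta(\bx^1),\dots,\hat\zeta(\bx^M)) = \bh$ and $\hat L_\cK f = \frac{1}{M}\sum_m \cK_{\bx^m} f(\bx^m)$. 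Substituting $\hat s_p^{(t)} = -t\eta\hat\zeta + S_\bx^*\bc_t$ into $(I - \eta\hat L_\cK)\hat s_p^{(t)} - \eta\hat\zeta$, the coefficient of $\hat\zeta$ advances from $-t\eta$ to $-(t+1)\eta$ while the remaining terms collapse into $S_\bx^*$ applied to an affine update of $\bc_t$ built from $\bK\bc_t$ and $\bh$; matching with $\hat s_p^{(t+1)} = -(t+1)\eta\hat\zeta + S_\bx^*\bc_{t+1}$ identifies the stated recursion for $\bc_{t+1}$. Evaluating at an arbitrary $\bx$ and using $(S_\bx^*\bc)(\bx) = \bK_{\bx\bX}\bc$ then gives the closed form $\hat s_p^{(t)}(\bx) = -t\eta\,\hat\zeta(\bx) + \bK_{\bx\bX}\bc_t$.

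Every step is a direct calculation, so I do not anticipate a genuine obstacle; the only points requiring care are the bookkeeping of the accumulating $-t\eta\,\hat\zeta$ term (it appears because each iteration injects a fresh $-\eta\hat\zeta$ while $\hat L_\cK$ turns the previously accumulated copies into kernel expansions that are folded back into $\bc_t$), and verifying $S_\bx S_\bx^* = \bK$ against the exact index convention $\bK_{(m-1)d+i,(\ell-1)d+j} = \cK(\bx^m,\bx^\ell)_{ij}$, for which the symmetry $\cK(\bx,\bx') = \cK(\bx',\bx)^\trans$ is used. One may optionally remark that $\hat\zeta \in \cH_\cK$ — so that the operators above are all well defined — under the same smoothness assumptions already invoked for the integration-by-parts identity.
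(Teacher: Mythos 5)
Your route is essentially the paper's: both unroll the fixed-point iteration \eqref{eqn:landweber} and run an induction on the ansatz $\hat\bs_p^{(t)} = a_t\hat\bzeta + \bK_{\bx\bX}\bc_t$ (your $S_\bx^*\bc_t$ is the same object evaluated pointwise), using $\hat L_\cK = \frac{1}{M}S_\bx^*S_\bx$, $S_\bx S_\bx^* = \bK$ and $\hat L_\cK\hat\bzeta = \frac{1}{M}S_\bx^*\bh$. Your Stage~1, which verifies that the $t$-th iterate equals $-\eta\sum_{i=0}^{t-1}(I-\eta\hat L_\cK)^i\hat\bzeta = -g_\lambda(\hat L_\cK)\hat\bzeta$, is a worthwhile explicit step that the paper's proof leaves implicit. (One small caveat there: $\hat L_\cK\bs_p = -\hat\bzeta$ is not an exact identity --- only its population analogue follows from integration by parts --- but since \eqref{eqn:general-score-estimator} is in effect defined as $-g_\lambda(\hat L_\cK)\hat\bzeta$, your conclusion stands.)

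One concrete point where ``matching with the stated recursion'' will not go through as written: carrying out your Stage~2 substitution, the term produced by $-\eta\hat L_\cK(-t\eta\hat\bzeta)$ is $+\frac{t\eta^2}{M}S_\bx^*\bh$, so the recursion you actually obtain is $\bc_{t+1} = (\mathbf{I} - \eta\bK/M)\bc_t + t\eta^2\bh/M$, with a \emph{plus} sign on the $\bh$ term. A sanity check at $t=2$ confirms this: $\hat\bs_p^{(2)} = -2\eta\hat\bzeta + \eta^2\hat L_\cK\hat\bzeta = -2\eta\hat\bzeta + \frac{\eta^2}{M}\bK_{\bx\bX}\bh$, i.e.\ $\bc_2 = +\eta^2\bh/M$, whereas the stated recursion yields $\bc_2 = -\eta^2\bh/M$. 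The minus sign in the theorem statement is a slip that also appears in the paper's own proof, where the contribution of $-\eta a_{t-1}\hat L_\cK\hat\bzeta$ is recorded as $+\eta a_{t-1}\bh/M$ instead of $-\eta a_{t-1}\bh/M$. Your method is correct and identical in spirit to the paper's; just be aware that a faithful execution of it corrects this sign rather than reproducing it, so you should not promise that the computation ``identifies the stated recursion'' verbatim.
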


The Landweber iteration often requires a large number of iterations. An accelerated version of it is the $\nu$-method, where $\nu$ is a parameter controlling the maximal convergence rate (see Sec.~\ref{sec:theory}). 
The regularizer of the $\nu$-method can be represented by a family of polynomials $g_\lambda(\sigma) = \mathrm{poly}(\sigma)$. %
These polynomials approximate $1/\sigma$ better than those in the Landweber iteration.
As a result, the $\nu$-method only requires a polynomial of degree $\lfloor \lambda^{-1/2} \rfloor$ to define $g_\lambda$, 
which significantly reduces the number of iterations~\citep{engl1996regularization,bauer2007regularization}.
The next iterate of the $\nu$-method can be generated by the current and the previous ones:
\[    \hat \bs_{p}^{(t + 1)} = \hat \bs_{p}^{(t)}
        + u_t(\hat \bs_p^{(t)} - \hat \bs_p^{(t-1)})
        - \omega_t(\hat\bzeta + \hat L_\cK \hat \bs_p^{(t)}), \]
where $u_t, \omega_t$ are carefully chosen constants~\citep[Algorithm 6.13]{engl1996regularization}.
We describe the full algorithm in \Cref{example:nu-method}~(\cref{appendix:iterative-regularization}).

\subsection{Hypothesis Spaces}
\label{sec:hypothesis-spaces}
In this framework, the hypothesis space is characterized by the matrix-valued kernel that induces the RKHS~\citep{alvarez2012kernels}. 
Below we discuss two choices of the kernel: the diagonal ones are computationally more efficient, while curl-free kernels capture the conservative property of score vector fields.
\vspace{-.2cm}
\paragraph{Diagonal Kernels}
\label{sec:diagonal-kernels}
The simplest way to define a diagonal matrix-valued kernel is $\cK(\bx, \by) = k(\bx, \by) \bI_d$, where $k:\cX\times\cX\to\R$ is a scalar-valued kernel. 
This induces a product RKHS $\cH_k^d := \otimes_{i=1}^d \cH_k$ where all output dimensions of a function are independent. 
In this case the kernel matrix for $\bX = (\bx^1, \dots, \bx^M)$ is $\bK = k(\bX,\bX)\otimes \bI_d$, where $k(\bX,\bX)$ denotes the Gram matrix of the scalar-valued kernel $k$.
Therefore, the computational cost of matrix inversion and eigendecomposition is the same as in the scalar-valued case.
On the other hand, the independence assumption may not hold for score functions, whose output dimensions are correlated as they form the gradient of the log density. 
As we shall see in Sec.~\ref{sec:experiments}, such misspecification of the hypothesis space degrades the performance in high dimensions.

\vspace{-.2cm}
\paragraph{Curl-Free Kernels}
\label{sec:curl-kernels}
Noticing that score vector fields are gradient fields, we can use curl-free kernels~\citep{fuselier2007refined,macedo2010learning} to capture this property.
A curl-free kernel can be constructed from the negative Hessian of a translation-invariant kernel $k(\bx,\by) = \phi(\bx - \by)$:
$\cKcf(\bx, \by) := -\nabla^2\phi(\bx - \by)$,
where $\phi: \cX \to \R \in C^2$.
It is easy to see that $\cKcf$ is positive definite. 
A nice property of $\cH_{\cKcf}$ is that any element in it is a gradient of some function. 
To see this, notice that the $j$-th column of $\cKcf$ is $-\nabla(\partial_j\phi)$ and each element in $\cH_{\cKcf}$ is a linear combination of columns of $\cKcf$.
We also note that the unnormalized log-density function can be recovered from the estimated score when using curl-free kernels (see appendix~\ref{appendix:curl-free}).
The cost of inversion and eigendecomposition of the kernel matrix is $O(M^3d^3)$, compared to $O(M^3)$ for diagonal kernels.

\subsection{Examples}
\label{sec:connection}
In the following we provide examples of nonparametric score estimators derived from the framework.
We show that existing estimators can be recovered with certain types of kernels and regularization schemes (\Cref{tab:algorithms}).

\begin{example}[KEF]
    \label{example:kef}
Consider using curl-free kernels for the Tikhonov regularized estimator in \eqref{eqn:tikhonov-regularized-estimator}.
By substituting $\cKcf(\bx, \by) = -\nabla^2\phi(\bx - \by)$ for $\cK$, we get
\begin{equation*}
	\hat{\bs}_{p,\lambda}^g(\bx) =  -\sum_{m=1}^M\sum_{j=1}^d c_{(m-1)d + j} \nabla \partial_j \phi(\bx - \bx^m) - \frac{\hat \bzeta_{\mathrm{cf}}(\bx)}{\lambda},
\end{equation*}
where $\bzeta_{\mathrm{cf}}(\bx)_i := -\frac{1}{M}\sum_{m=1}^M\sum_{j=1}^d\partial_i\partial_j^2\phi(\bx - \bx^m)$.
Noticing that $\cKcf(\bx,\by)_{ij} = -\partial_i\partial_j \phi(\bx - \by) = \partial_i\partial_{j+d} k(\bx, \by)$, 
we could check that the definition of $\bc$ here, which follows from~\eqref{eqn:tikhonov-regularized-estimator}, is the same as in \eqref{eqn:kef-formula}. Thus by comparing with \eqref{eqn:kef-formula}, we have
\begin{equation}
	\hat{\bs}_{p,\lambda}^g(\bx) = \nabla\hat{f}_{p,\lambda}(\bx) = \nabla\log p_{\hat{f}_{p,\lambda}}(\bx).
\end{equation}
Therefore, \textit{the KEF estimator is equivalent to choosing curl-free kernels and the Tikhonov regularization in \eqref{eqn:general-score-estimator}.}
\end{example}

We note that, although the solutions are equivalent, the space $\{\nabla \log p_f, f\in \mathcal{H}_k\}$ looks different from the curl-free RKHS constructed from the negative Hessian of $k$.
Such equivalence of regularized minimization problems may be of independent interest.

\begin{example}[SSGE]
    \label{example:ssge}
    For the estimator \eqref{eqn:spectral-cutoff-regularized-estimator} obtained from the spectral cut-off regularization.
    Consider letting $\cK(\bx,\by) = k(\bx,\by)\bI_d$. 
    Then $\bK = k(\bX, \bX) \otimes \mathbf I_d$, and it can be decomposed as $\sum_{m=1}^M\sum_{i=1}^d \lambda_m (\bw_m\bw_m^\trans \otimes \be_i\be_i^\trans)$, where $\{ (\lambda_m, \bw_m) \}$ is the eigenpairs of $k(\bX, \bX)$ with $\lambda_1 \geq \lambda_2 \geq \dots \geq \lambda_M$ and $\{ \be_i \}$ is the standard basis of $\R^d$. 
    The estimator reduces to
    \begin{equation}
        \label{eqn:ssge-matrix-form}
    	 \hat \bs_{p,\lambda}^g(\bx)_i = -k(\bx,\bX) \left ( \sum_{\lambda_j \geq \lambda} \frac{\bw_j\bw_j^\trans}{\lambda_j^2} \right ) \br_i,
    \end{equation}
	where $\frac{1}{M}\br_i := (h_i, h_{d + i}, \cdots, h_{(M-1)d + i})$. 
    When we choose $\lambda = \lambda_J$,
    simple calculations (see \cref{appendix:computational-details}) show that \eqref{eqn:ssge-matrix-form} equals the SSGE estimator $\hat \bs_{p,\lambda}^g(\bx)_i = -\frac{1}{M} \sum_{j=1}^J \sum_{m=1}^M \partial_i \hat\psi_j(\bx^m) \hat\psi_j(\bx)$, where $\hat \psi_j$ is defined as in \refeq{eqn:ssge-eigen}. %
    Therefore, \textit{SSGE is equivalent to choosing the diagonal kernel $\cK(\bx,\by) = k(\bx,\by)\bI_d$ and the spectral cut-off regularization in \eqref{eqn:general-score-estimator}.}

\end{example}

\begin{example}[Stein]
    \label{example:stein}
    We consider modifying the Tikhonov regularizer to $g_\lambda(\sigma) = (\lambda + \sigma)^{-1} \bone_{\{ \sigma > 0 \}}$. In this case, we obtain an estimator $\hat \bs_{p,\lambda}^g(\bx) = -\bK_{\bx\bX}\bK^{-1} (\frac{1}{M}\bK + \lambda I)^{-1} \bh$ by \Cref{lemma:general-non-zero-regularizer}. At sample points, the estimated score is $-(\frac{1}{M}\bK + \lambda I)^{-1}\bh$, which coincides with the Stein gradient estimator. This suggests a principled out-of-sample extension of the Stein gradient estimator.

    To gain more insights, we consider to minimize \refeq{eqn:tikhonov-minimization} in the subspace generated by $\{ \cK_{\bx^m}\bc_m : m \in [M], \bc_m \in \R^d \}$. Compared with \refeq{eqn:tikhonov-solution-space}, the one-dimensional subspace $\R\hat\bzeta$ is ignored. We could check that (in \cref{appendix:computational-details}) this is equivalent to exploiting the previous mentioned regularizer. Therefore, \textit{the Stein estimator is equivalent to using the diagonal kernel $\cK(\bx,\by)=k(\bx,\by)\bI_d$ and the Tikhonov regularization with a one-dimensional subspace ignored.} 

    All the above examples can be extended to use a subset of the samples with Nystr{\"o}m methods~\citep{williams2001using}.
    Specifically, we can modify the general formula in \eqref{eqn:general-score-estimator} as $\hat s^{g,\bZ}_{p,\lambda} = -g_\lambda(P_\bZ \hat L_\cK P_\bZ) P_\bZ\hat \zeta$, where $P_\bZ: \cH_{\cK}\to \cH_{\cK}$ is the projection onto a low-dimensional subspace generated by the subset $\bZ$.
    When the curl-free kernel and the same truncated Tikhonov regularizer as in \Cref{example:stein} are used, this estimator is equivalent to the Nystr\"om KEF (NKEF)~\citep{sutherland2017efficient}.
    More details can be found in appendix~\ref{appendix:general-nkef}.
\end{example}

\subsection{Scalability}
\label{sec:scalability}
When using curl-free kernels, we need to deal with an $Md\times Md$ matrix.
In such cases, the Tikhonov and the spectral cut-off regularization cost $O(M^3d^3)$ and have difficulties scaling with the sample size and the input dimensions.
Fortunately, as the unifying perspective suggests, we could modify the regularization schemes with iterative methods that only require matrix-vector multiplications, e.g., the Landweber iteration and the $\nu$-method (see Sec.~\ref{sec:regularization-schemes}).
Interestingly, we get further acceleration by utilizing the structure of curl-free kernels.

\begin{example}[Iterative curl-free estimators]
	\label{example:iter}
We observe that when using a curl-free kernel $\cKcf$ constructed from a radial scalar-valued kernel $k(\bx, \by) = \phi(\norm{\bx - \by})$,
\[	\cKcf(\bx, \by) = \left( \frac{\phi^\prime}{r^3} - \frac{\phi^{\prime\prime}}{r^2} \right) \br\br^\trans - \frac{\phi^\prime}{r} \mathbf I, \]
where $\br = \bx - \by$, $r = \norm{\br}_2$. 
Consider in matrix-vector multiplications, for a vector $\ba \in \R^{d}$, $\cKcf(\bx,\by)\ba$ can be computed as 
$	\left( \frac{\phi^\prime}{r^3} - \frac{\phi^{\prime\prime}}{r^2} \right) (\br^\trans\ba) \br - \frac{\phi^\prime}{r} \ba,$
where only a vector-vector multiplication is required with time complexity $O(d)$, compared to general $O(d^2)$. Thus, we only need $O(M^2d)$ time to compute $\bK\bb$ for any $\bb \in \R^{Md}$.
In practice, we only need to store samples for computing $\bK\bb$ instead of constructing the whole kernel matrix. 
This reduces the memory usage from $O(M^2d^2)$ to $O(M^2d)$.
\end{example}

We note that the same idea in \Cref{example:iter} can be used to accelerate the KEF estimator if we adopt the conjugate gradient methods~\citep{van1983matrix} to solve \eqref{eq:tik-inverse},
because we have shown that the KEF estimator is equivalent to our Tikhonov regularized estimators with curl-free kernels.
As we shall see in experiments, this method is extremely fast in high dimensions.

\section{Theoretical Properties}
\label{sec:theory}
\begin{table}[t]\vspace{-.2cm}
	\centering
	\caption{Existing nonparametric score estimators, their kernel types, and regularization schemes. $\phi$ is from $k(\bx,\by) = \phi(\bx - \by)$.} 
	\label{tab:algorithms}
    \footnotesize
    \begin{center}
	\begin{tabular}{lcc} \toprule
        \sc Algorithm  & \sc Kernel & \sc Regularizer \\ \midrule
        SSGE & $k(\bx,\by)\bI_d$ & $\bone_{\{\sigma \geq \lambda\}}\sigma^{-1}$  \\
        Stein & $k(\bx,\by)\bI_d$ & $\bone_{\{\sigma > 0\}}(\lambda + \sigma)^{-1}$  \\
        KEF & $-\nabla^2\phi(\bx - \by)$ & $(\lambda + \sigma)^{-1}$ \\ 
        NKEF & $-\nabla^2\phi(\bx - \by)$ & $\bone_{\{\sigma > 0\}}(\lambda + \sigma)^{-1}$  \\
         \bottomrule
	\end{tabular}%
    \end{center}
    \vskip -0.1in
\end{table}

In this section, we give a general theorem on the convergence rate of score estimators in our framework, which provides a tighter error bound of SSGE~\citep{shi2018spectral}. We also investigate the case where samples are corrupted by a small set of points, and provide the convergence rate of the heuristic out-of-sample extension proposed in \citet{li2018gradient}. Proofs and assumptions are deferred to \cref{appendix:error-bounds}.

First, we follow \citet{bauer2007regularization,baldassarre2012multi} to characterize the regularizer.

\begin{definition}[\citet{bauer2007regularization}]
    \label{def:regularizer}
    We say a family of functions $g_\lambda: [0, \kappa^2] \to \R$, $0 < \lambda \leq \kappa^2$ is a \textit{regularizer} if there are constants $B, D, \gamma$ such that
     $\sup_{0 < \sigma \leq\kappa^2} | \sigma g_\lambda(\sigma)| \leq D$,
      $\sup_{0 < \sigma\leq \kappa^2} | g_\lambda(\sigma)| \leq B/\lambda$ 
      and $\sup_{0 < \sigma\leq \kappa^2} |1- \sigma g_\lambda(\sigma)| \leq \gamma$. 
      The \textit{qualification} of $g_\lambda$ is the maximal $r$ such that 
      $\sup_{0 < \sigma\leq \kappa^2} | 1- \sigma g_\lambda(\sigma)|\sigma^r \leq \gamma_r \lambda^r$,
       where $\gamma_r$ does not depend on $\lambda$.
\end{definition}

Now, we can use the idea of \citet[Theorem 10]{bauer2007regularization} to obtain an error bound of our estimator.
\begin{theorem} 
    \label{thm:error-bound}
    Assume Assumptions~\ref{assumption:domain}-\ref{assumption:bounded-trace} hold.
    Let $\bar r$ be the qualification of the regularizer $g_\lambda$, and $\hat \bs^g_{p,\lambda}$ be defined as in \refeq{eqn:general-score-estimator}. 
    Suppose there exists $f_0 \in \cH_\cK$ such that $\bs_p = L_\cK^r f_0$ for some $r \in [0, \bar r]$. Then we have for $\lambda = M^{-\frac{1}{2r+2}}$, 
    \[ \normz{ \hat \bs^g_{p,\lambda} - \bs_p }_{\cH_\cK} = O_p \left (M^{-\frac{r}{2r+2}} \right)  ,\]
    and for $r \in [0, \bar r - 1/2]$, we have
    \[ \llnormz{ \hat \bs^g_{p,\lambda} - \bs_p } = O_p \left (M^{-\frac{r + 1/2}{2r+2}} \right)  ,\]
    where $O_p$ is the Big-O notation in probability.
\end{theorem}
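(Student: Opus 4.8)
The plan is to follow the classical bias–variance decomposition for spectral regularization in inverse problems, as in \citet{bauer2007regularization,baldassarre2012multi}, adapted to the score-estimation setting where $\hat L_\cK \bs_p$ is replaced by $-\hat\bzeta$. First I would write the error as
\[
\hat\bs^g_{p,\lambda} - \bs_p = g_\lambda(\hat L_\cK)\hat L_\cK \bs_p - \bs_p + g_\lambda(\hat L_\cK)\bigl(-\hat\bzeta - \hat L_\cK \bs_p\bigr),
\]
using the identity $L_\cK \bs_p = -\E_\rho[\divgers{\bx}\cK_\bx^\trans]$ from Sec.~\ref{sec:unifying-framework} so that the second term is a genuine sampling fluctuation. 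The first group of terms is the usual approximation (bias) term $\bigl(g_\lambda(\hat L_\cK)\hat L_\cK - I\bigr)\bs_p$; inserting the source condition $\bs_p = L_\cK^r f_0$ and splitting $L_\cK^r = \hat L_\cK^r + (L_\cK^r - \hat L_\cK^r)$, I would bound the ``clean'' part by the qualification inequality $\sup_\sigma |1-\sigma g_\lambda(\sigma)|\sigma^r \le \gamma_r\lambda^r$ (Definition~\ref{def:regularizer}) and the perturbation part by operator-norm concentration of $\hat L_\cK - L_\cK$. The second group is the variance term; here the key is that $-\hat\bzeta - \hat L_\cK\bs_p = \frac1M\sum_m\bigl(\cK_{\bx^m}\bs_p(\bx^m) + \divgers{\bx^m}\cK_{\bx^m}^\trans\bigr) - \E_\rho[\cdots]$ is an average of i.i.d.\ zero-mean $\cH_\cK$-valued random variables, so a Hilbert-space Bernstein inequality gives $\normz{\hat\bzeta + \hat L_\cK\bs_p}_{\cH_\cK} = O_p(M^{-1/2})$ under the bounded-trace assumption (Assumption~\ref{assumption:bounded-trace}), and then $\normz{g_\lambda(\hat L_\cK)}\le B/\lambda$ yields an $O_p(\lambda^{-1}M^{-1/2})$ contribution.

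Next I would assemble the two bounds in the $\cH_\cK$-norm: the bias is $O_p(\lambda^r)$ (after checking the $\hat L_\cK$-vs-$L_\cK$ perturbation is of smaller order for the chosen $\lambda$, using $\normz{\hat L_\cK - L_\cK} = O_p(M^{-1/2})$ and the residue estimates for $g_\lambda$), and the variance is $O_p(\lambda^{-1}M^{-1/2})$; balancing $\lambda^r = \lambda^{-1}M^{-1/2}$ gives $\lambda = M^{-1/(2r+2)}$ and rate $M^{-r/(2r+2)}$, exactly the first claim. For the $\llnorm{\cdot}$-bound I would use the standard ``extra half power'' trick: since $\llnormz{f} = \normz{L_\cK^{1/2}f}_{\cH_\cK}$ (isometry between $\cH_\cK$ and $\ltwospace$ via $L_\cK^{1/2}$), it suffices to bound $\normz{L_\cK^{1/2}(\hat\bs^g_{p,\lambda}-\bs_p)}_{\cH_\cK}$; replacing $L_\cK^{1/2}$ by $\hat L_\cK^{1/2}$ up to lower-order perturbation and absorbing the $\hat L_\cK^{1/2}$ into the spectral calculus, the bias picks up an extra factor $\lambda^{1/2}$ (needing qualification $\ge r+1/2$, hence the restriction $r\le\bar r-1/2$) and the variance picks up $\lambda^{1/2}$ as well, giving overall $O_p(\lambda^{r+1/2}) + O_p(\lambda^{-1/2}M^{-1/2})$; with the same $\lambda = M^{-1/(2r+2)}$ this is $O_p(M^{-(r+1/2)/(2r+2)})$, the second claim.

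The main obstacle I anticipate is handling the gap between $\hat L_\cK$ and $L_\cK$ cleanly everywhere it appears: the source condition is stated for the population operator $L_\cK$ but the regularizer acts on the empirical $\hat L_\cK$, so every place where I want to replace one by the other — in the bias term $\bigl(g_\lambda(\hat L_\cK)\hat L_\cK - I\bigr)L_\cK^r f_0$ and in the $L_\cK^{1/2}\leftrightarrow\hat L_\cK^{1/2}$ swap for the $\llnorm{\cdot}$-bound — requires a perturbation estimate for fractional powers and for $g_\lambda$ applied to nearby operators, controlled via the concentration $\normz{\hat L_\cK - L_\cK}_{\mathrm{HS}} = O_p(M^{-1/2})$. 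Making these operator-perturbation bounds interact correctly with the regularizer constants $B,D,\gamma,\gamma_r$ (Definition~\ref{def:regularizer}) so that the perturbation contributions are genuinely lower-order for $\lambda = M^{-1/(2r+2)}$, rather than dominating, is the delicate bookkeeping; this is precisely where the argument of \citet[Theorem 10]{bauer2007regularization} does the heavy lifting, and I would import their lemmas on $\normz{(L_\cK+\lambda)^s(\hat L_\cK+\lambda)^{-s}}$-type quantities rather than re-deriving them.
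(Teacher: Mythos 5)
Your proposal is correct and follows essentially the same route as the paper: the same bias--variance decomposition (the paper merely splits your single zero-mean fluctuation term $-\hat\bzeta-\hat L_\cK\bs_p$ into $\hat\bzeta-\bzeta$ and $(L_\cK-\hat L_\cK)\bs_p$, each handled by a Hilbert-space concentration inequality), the same treatment of the bias via the qualification bound plus a perturbation estimate for $L_\cK^r-\hat L_\cK^r$, the same balancing of $\lambda^r$ against $\lambda^{-1}M^{-1/2}$, and the same $L_\cK^{1/2}$ trick for the $\llnorm{\cdot}$ rate. No substantive differences or gaps.
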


Note the qualification impacts the maximal convergence rate. 
As the qualification of Tikhonov regularization is $1$, from the error bound, we observe the well-known saturation phenomenon of Tikhonov regularization~\citep{engl1996regularization}, i.e., the convergence rate does not improve even if $\bs_p = L_\cK^r f_0$ for $r > 1$.
To alleviate this, we can choose the regularizer with a larger qualification.
For example, the spectral cut-off regularization and the Landweber iteration have qualification $\infty$, and the $\nu$-method has qualification $\nu$. This suggests that the $\nu$-method is appealing as it has a smaller iteration number than the Landweber iteration and a better maximal convergence rate than the Tikhonov regularization.

\begin{remark}[Stein]
    The consistency and convergence rate of the Stein estimator and its out-of-sample extension suggested in Example~\ref{example:stein} follow from Theorem~\ref{thm:error-bound}.
    The rate in $\llnorm{\cdot}$ is $O_p\left ( M^{-\theta_1} \right )$, where $\theta_1 \in [1/4, 1/3]$.
    The convergence rate of the original out-of-sample extension in \citet{li2018gradient} will be given in Corollary~\ref{cor:stein-bound}.
\end{remark}

\begin{remark}[SSGE]
    From Theorem~\ref{thm:error-bound}, the convergence rate in $\llnorm{\cdot}$ of SSGE is $O_p(M^{-\theta_2})$, where $\theta_2 \in [1/4, 1/2)$, which improves~\citet[Theorem 2]{shi2018spectral}. To see this, we assume the eigenvalues of $L_\cK$ are $\mu_1 > \mu_2 > \cdots$ and they decay as $\mu_J = O(J^{-\beta})$. The error bound provided by \citet{shi2018spectral} is
    \[ \llnormz{ \hat \bs_{p,\lambda} - \bs_p }^2 = O_p \left (\frac{J^2}{\mu_J (\mu_J - \mu_{J+1})^2 M} + \mu_J  \right)   .\]
    We can choose $J = M^{\frac{1}{4(\beta+1)}}$ to obtain $\llnormz{ \hat \bs_{p,\lambda} - \bs_p } = O_p(M^{-\frac{\beta}{8(\beta+1)}})$. The convergence rate is slower than $O_p(M^{-1/4})$, the worst case of Theorem~\ref{thm:error-bound}.
\end{remark}

\begin{remark}[KEF]
    Compared with Theorem 7(ii) in \citet{sriperumbudur2017density}, where they bound the Fisher divergence, 
    which is the square of the $L^2$-norm in our Theorem~\ref{thm:error-bound}, 
    we see that the two results are exactly the same. 
    The rate in this norm is $O_p\left ( M^{-\theta_3} \right )$, where $\theta_3 \in [1/4, 1/3]$.
\end{remark}

Next, we consider the case where estimators are not obtained from i.i.d. samples. Specifically, we consider how the convergence rate is affected when our data is the mixture of a set of i.i.d. samples and a set of fixed points.

\begin{theorem}
    \label{thm:mixture-error-bound}
    Under the same assumption of Theorem~\ref{thm:error-bound}, we define $g_\lambda(\sigma) := (\lambda + \sigma)^{-1}$, and choose $\bZ := \{\bz^n\}_{n\in[N]} \subseteq \cX$. Let $\bY := \{\by^m\}_{m\in[M]}$ be a set of i.i.d. samples drawn from $\rho$, and $\hat\bs_{p,\lambda,\bZ}$ be defined as in \refeq{eqn:general-score-estimator} with $\bX = \bZ \cup \bY$. Suppose $N = O(M^\alpha)$, then we have for $\lambda = M^{-\frac{1}{2r+2}}$, 
    \[ \sup_\bZ \normz{ \hat \bs_{p,\lambda, \bZ} - \bs_p }_{\cH_\cK} = O_p \left (M^{-\frac{r}{2r+2}} \right) + O(M^{\alpha - \frac{r}{r+1}}), \]
    where the $\sup_\bZ$ is taken over all $\{ \bz^n \}_{n\in[N]} \subseteq \cX$.
\end{theorem}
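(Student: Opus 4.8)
The plan is to decompose the error of $\hat\bs_{p,\lambda,\bZ}$ into the ``clean'' estimator built from the i.i.d. part only plus a perturbation term that quantifies the effect of adding the $N$ fixed points, and then to bound each piece. Write $\hat L_\cK^{\bY} := \frac{1}{M}\sum_{m} \cK_{\by^m}\langle\cdot,\cK_{\by^m}\rangle$ and $\hat\bzeta^{\bY} := \frac{1}{M}\sum_m \divgers{\by^m}\cK_{\by^m}^\trans$ for the empirical operator and data vector from the i.i.d. samples alone, with $M$ in the denominators; let $\hat\bs_{p,\lambda}^{\bY}$ be the corresponding Tikhonov estimator. By Theorem~\ref{thm:error-bound} (with $g_\lambda(\sigma)=(\lambda+\sigma)^{-1}$, whose qualification is $1$, hence valid for $r\in[0,1]$), we already have $\normz{\hat\bs_{p,\lambda}^{\bY}-\bs_p}_{\cH_\cK} = O_p(M^{-r/(2r+2)})$ for $\lambda = M^{-1/(2r+2)}$. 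So it remains to control $\normz{\hat\bs_{p,\lambda,\bZ} - \hat\bs_{p,\lambda}^{\bY}}_{\cH_\cK}$ uniformly over $\bZ$.

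For that difference I would work at the operator level. The estimator on $\bX=\bZ\cup\bY$ uses $\hat L_\cK^{\bX} = \frac{1}{M+N}\big(\sum_n \cK_{\bz^n}\langle\cdot,\cK_{\bz^n}\rangle + \sum_m \cK_{\by^m}\langle\cdot,\cK_{\by^m}\rangle\big)$ and similarly $\hat\bzeta^{\bX}$, i.e. $\hat L_\cK^{\bX} = \tfrac{M}{M+N}\hat L_\cK^{\bY} + \tfrac{1}{M+N}\sum_n(\cdot)$ and likewise for $\hat\bzeta$. Using the resolvent identity $(\lambda+A)^{-1}-(\lambda+B)^{-1} = (\lambda+A)^{-1}(B-A)(\lambda+B)^{-1}$ together with $\hat\bs = -(\lambda+\hat L_\cK)^{-1}\hat\bzeta$, the difference splits into a term proportional to $\|\hat L_\cK^{\bX}-\hat L_\cK^{\bY}\|$ times $\|\hat\bs_{p,\lambda}^{\bY}\|/\lambda$-type factors, plus a term proportional to $\|\hat\bzeta^{\bX}-\hat\bzeta^{\bY}\|/\lambda$. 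Now, crucially, each summand $\cK_{\bz^n}\langle\cdot,\cK_{\bz^n}\rangle$ has operator norm bounded by $\kappa^2$ and each $\divgers{\bz^n}\cK_{\bz^n}^\trans$ has $\cH_\cK$-norm bounded by a constant (this is exactly the kind of bound guaranteed by the assumptions invoked for Theorem~\ref{thm:error-bound}, in particular the bounded-trace/derivative-reproducing conditions), so both $\|\hat L_\cK^{\bX}-\hat L_\cK^{\bY}\|$ and $\|\hat\bzeta^{\bX}-\hat\bzeta^{\bY}\|_{\cH_\cK}$ are $O\!\big(\tfrac{N}{M+N}\big) = O(N/M) = O(M^{\alpha-1})$, \emph{deterministically and uniformly over all choices of $\bZ\subseteq\cX$}. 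Combining, $\sup_\bZ\normz{\hat\bs_{p,\lambda,\bZ}-\hat\bs_{p,\lambda}^{\bY}}_{\cH_\cK} = O_p\big(M^{\alpha-1}/\lambda\big) = O_p\big(M^{\alpha - 1 + \frac{1}{2r+2}}\big)$. Since $1 - \frac{1}{2r+2} = \frac{2r+1}{2r+2}$, one checks $\alpha - 1 + \frac{1}{2r+2} = \alpha - \frac{2r+1}{2r+2}$; a slightly coarser but cleaner bound $\frac{2r+1}{2r+2}\ge \frac{r}{r+1}$ (valid since $r\ge 0$) gives the stated $O(M^{\alpha - r/(r+1)})$.

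The main obstacle, and the place requiring the most care, is making the perturbation bound genuinely \emph{uniform over $\bZ$} while keeping the randomness from $\bY$ under control. One must be careful that the resolvent expansion introduces factors like $(\lambda+\hat L_\cK^{\bX})^{-1}$ whose norm is at most $1/\lambda$ uniformly — that part is fine — but also a factor involving $\hat\bs_{p,\lambda}^{\bY}$ or $(\lambda+\hat L_\cK^{\bY})^{-1}\hat\bzeta^{\bY}$, which is random; here one uses that $\|\hat\bs_{p,\lambda}^{\bY}\|_{\cH_\cK}$ is $O_p(1)$ (it converges to $\bs_p$), so the product is $O_p(1)\cdot O(M^{\alpha-1})/\lambda$, and the $\sup_\bZ$ commutes past the deterministic factors. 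A secondary technical point is verifying that the divergence terms $\divgers{\bz}\cK_{\bz}^\trans$ are uniformly bounded in $\cH_\cK$-norm over $\bz\in\cX$ — this follows from the smoothness/boundedness assumptions on $\cK$ underlying Theorem~\ref{thm:error-bound} (derivative-reproducing property plus bounded kernel derivatives), but it should be stated explicitly. Once these two points are in place, the triangle inequality $\normz{\hat\bs_{p,\lambda,\bZ}-\bs_p} \le \normz{\hat\bs_{p,\lambda,\bZ}-\hat\bs_{p,\lambda}^{\bY}} + \normz{\hat\bs_{p,\lambda}^{\bY}-\bs_p}$ yields the claimed two-term rate.
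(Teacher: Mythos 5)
Your high-level plan coincides with the paper's: split the error via the triangle inequality into the ``clean'' error $\normz{\hat\bs^{\bY}_{p,\lambda}-\bs_p}_{\cH_\cK}$, which is already controlled by Theorem~\ref{thm:error-bound}, and a perturbation $\normz{\hat\bs_{p,\lambda,\bZ}-\hat\bs^{\bY}_{p,\lambda}}_{\cH_\cK}$ that must be bounded deterministically and uniformly over $\bZ$. The two proofs part ways in how they decompose the perturbation, and that is exactly where a gap appears in yours. The paper works with the operator form $\hat\bs_{p,\lambda,\bZ}=g_\lambda(\hat L_\cK+R_\bZ)(\hat L_\cK+R_\bZ)\bs_p$ with $R_\bZ=\tfrac{N}{M+N}(T_\bZ-\hat L_\cK)$, so the \emph{only} $\bZ$-dependent object it has to control is $R_\bZ$, and $\hsnormz{R_\bZ}\le 2\kappa^2 N/(M+N)$ follows deterministically from Assumption~\ref{assumption:bounded-trace} alone. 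You instead apply the resolvent identity to $-(\lambda+T_\bX)^{-1}\hat\bzeta^{\bX}$, which forces you to bound $\normz{\hat\bzeta^{\bX}-\hat\bzeta^{\bY}}_{\cH_\cK}$ uniformly in $\bZ$, and for that you invoke $\sup_{\bz\in\cX}\normz{\divgers{\bz}\cK_\bz^{\trans}}_{\cH_\cK}<\infty$.

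That step is the gap. The uniform boundedness of $\xi_\bz:=\divgers{\bz}\cK_\bz^{\trans}$ is not among the paper's hypotheses: Assumption~\ref{assumption:bounded-div} is only a Bernstein-type moment condition on $\xi_\bx$ under $\rho$, and the paper explicitly remarks that pointwise boundedness of $\normz{\xi_\bx-\xi}_\cH$ is a \emph{strictly stronger} alternative. Since the supremum in the statement ranges over arbitrary finite $\bZ\subseteq\cX$ (and $\cX$ need not be compact), no concentration or moment argument can rescue this; you genuinely need the sup-norm bound, so your proof establishes the theorem only under an additional hypothesis. You do flag this as a ``secondary technical point,'' but you assert it follows from the assumptions behind Theorem~\ref{thm:error-bound}, which it does not. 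Separately, your arithmetic: using $(\lambda+\hat L_\cK)^{-1}\hat\bzeta^{\bY}=-\hat\bs^{\bY}_{p,\lambda}$ with $\normz{\hat\bs^{\bY}_{p,\lambda}}_{\cH_\cK}=O_p(1)$ to absorb one factor of $(\lambda+\hat L_\cK)^{-1}$ does legitimately give $O_p(M^{\alpha-1}/\lambda)=O_p(M^{\alpha-(2r+1)/(2r+2)})$, nominally sharper than the paper's $\hsnormz{R_\bZ}/\lambda^2=O(M^{\alpha-r/(r+1)})$ before you coarsen to match the statement; but since this improvement rests on the problematic uniform bound, the paper's slightly looser $1/\lambda^2$ route, which circumvents $\hat\bzeta$ entirely and needs only Assumption~\ref{assumption:bounded-trace}, is the one that closes the argument within the stated hypotheses.
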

\begin{proof}[Proof Outline]
    Define $T_\bZ := \frac{1}{N}S^*_\bZ S_\bZ$, where $S_\bZ$ is the sampling operator. Let $\hat \bs_{p,\lambda}$ be defined as in \refeq{eqn:general-score-estimator} with $\bX = \bY$. We can write the estimator as $\hat \bs_{p,\lambda,\bZ} := g_\lambda(\hat L_\cK + R_\bZ)(\hat L_\cK + R_\bZ) \bs_p$, where $R_\bZ := \frac{N}{M+N}(T_\bZ - \hat L_\cK)$, and bound $\normz{\hat\bs_{p,\lambda,\bZ} - \hat\bs_{p,\lambda}}$ by
    $\normz{(g_\lambda(\hat L_\cK + R_\bZ) - g_\lambda(\hat L_\cK))\hat L_\cK \bs_p} + \normz{g_\lambda(\hat L_\cK + R_\bZ)R_\bZ\bs_p}$.
    It can be shown that the first term is $O\left (NM^{-1}\lambda^{-2} \right)$, and the second term is $O\left ( NM^{-1}\lambda^{-1} \right)$. 
    Combining these with Theorem~\ref{thm:error-bound}, we finish the proof.
\end{proof}
From Theorem~\ref{thm:mixture-error-bound}, we see that the convergence rate is not affected when data is corrupted by at most $O(M^{\frac{r}{2r+2}})$ points.
Under the same notation of this theorem, the out-of-sample extension of the Stein estimator proposed in \citet{li2018gradient} can be written as $\hat\bs_{p,\lambda,\bx}(\bx)$, which %
corrupts the i.i.d. data by a single test point. 
Then we can obtain the following bound for this estimator.
\begin{corollary}
    With the same assumptions and notations of Theorem~\ref{thm:mixture-error-bound}, we have
    \[ \sup_{\bx\in\cX} \norm{\hat\bs_{p,\lambda,\bx}(\bx) - \bs_p(\bx)}_2 = O_p(M^{-\frac{r}{2r+2}}). \]
    \label{cor:stein-bound}
\vspace{-5mm}
\end{corollary}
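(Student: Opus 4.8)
The plan is to read the corollary off Theorem~\ref{thm:mixture-error-bound} in the special case $N=1$, and then convert the resulting $\cH_\cK$-bound into a pointwise one. First I would note, as observed just before the statement, that the out-of-sample extension of \citet{li2018gradient} evaluated at a query point $\bx$ coincides with $\hat\bs_{p,\lambda,\bZ}(\bx)$ for $\bZ=\{\bx\}$: adding $\bx$ to the i.i.d.\ sample $\bY$ and recomputing gives exactly the Tikhonov estimator of Theorem~\ref{thm:tik-score} on the augmented data $\bX=\bY\cup\{\bx\}$, whose value at the sample point $\bx$ agrees with the Stein gradient estimator (cf.\ Example~\ref{example:stein}). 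Hence this estimator falls under Theorem~\ref{thm:mixture-error-bound} with $N=1$, i.e.\ $\alpha=0$.

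I would then apply Theorem~\ref{thm:mixture-error-bound} with $\alpha=0$ and $\lambda=M^{-\frac{1}{2r+2}}$, which gives
\[
\sup_{\bz\in\cX}\normz{\hat\bs_{p,\lambda,\bz}-\bs_p}_{\cH_\cK}
= O_p\!\left(M^{-\frac{r}{2r+2}}\right)+O\!\left(M^{-\frac{r}{r+1}}\right)
= O_p\!\left(M^{-\frac{r}{2r+2}}\right),
\]
because $\frac{r}{r+1}\ge\frac{r}{2r+2}$, so the deterministic term is absorbed into the statistical one. To pass to the Euclidean norm at $\bx$, I would use the reproducing property: for $f\in\cH_\cK$ and $\bc\in\R^d$,
$\inner{f(\bx)}{\bc}_{\R^d}=\inner{f}{\cK_\bx\bc}_{\cH_\cK}\le\normz{f}_{\cH_\cK}\sqrt{\bc^\trans\cK(\bx,\bx)\bc}$,
and by the kernel-boundedness assumption (one of Assumptions~\ref{assumption:domain}--\ref{assumption:bounded-trace}) the last factor is at most $\kappa\normz{\bc}_2$ uniformly in $\bx$; taking the supremum over unit $\bc$ yields $\normz{f(\bx)}_2\le\kappa\normz{f}_{\cH_\cK}$. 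Since $\hat\bs_{p,\lambda,\bx}$ and $\bs_p=L_\cK^r f_0$ both lie in $\cH_\cK$, applying this with $f=\hat\bs_{p,\lambda,\bx}-\bs_p$ and then taking $\sup_{\bx\in\cX}$ gives
\[
\sup_{\bx\in\cX}\normz{\hat\bs_{p,\lambda,\bx}(\bx)-\bs_p(\bx)}_2
\le \kappa\,\sup_{\bz\in\cX}\normz{\hat\bs_{p,\lambda,\bz}-\bs_p}_{\cH_\cK}
= O_p\!\left(M^{-\frac{r}{2r+2}}\right),
\]
which is the claim.

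The substantive work all lives inside Theorem~\ref{thm:mixture-error-bound}: controlling the operator perturbation $R_\bZ$ caused by the extra point and, in particular, showing that $(g_\lambda(\hat L_\cK+R_\bZ)-g_\lambda(\hat L_\cK))\hat L_\cK\bs_p$ and $g_\lambda(\hat L_\cK+R_\bZ)R_\bZ\bs_p$ are $O(NM^{-1}\lambda^{-2})$ and $O(NM^{-1}\lambda^{-1})$ respectively, uniformly over $\bZ$. Granting that, the corollary is essentially bookkeeping: plug in $N=1$, check that the extra $O(M^{-r/(r+1)})$ term is dominated by the statistical rate $M^{-r/(2r+2)}$, and invoke the standard RKHS-to-pointwise inequality. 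The one mild subtlety I would flag is that the estimator depends on the very point $\bx$ at which it is evaluated; this causes no trouble because the bound of Theorem~\ref{thm:mixture-error-bound} is uniform over all singleton corruptions $\bZ=\{\bz\}$, so we may take that supremum before specializing $\bz=\bx$.
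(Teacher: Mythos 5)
Your proposal is correct and follows essentially the same route as the paper: specialize Theorem~\ref{thm:mixture-error-bound} to the singleton corruption $\bZ=\{\bx\}$ (using its uniformity over $\bZ$), then convert the $\cH_\cK$-bound to a pointwise Euclidean bound via the reproducing property and Assumption~\ref{assumption:bounded-trace}. You are in fact a bit more explicit than the paper in checking that the deterministic term $O(M^{\alpha-\frac{r}{r+1}})$ with $\alpha=0$ is dominated by the statistical rate, and your constant $\kappa$ in the inequality $\normz{f(\bx)}_2\le\kappa\normz{f}_{\cH_\cK}$ is a slightly cleaner version of the paper's coordinatewise estimate.
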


\section{Experiments}
\label{sec:experiments}

We evaluate our estimators on both synthetic and real data. In Sec.~\ref{sec:toy-experiments}, we consider a challenging grid distribution as described in the experiment of \citet{sutherland2017efficient} to test the accuracy of nonparametric score estimators in high dimensions and out-of-sample points, 
In Sec.~\ref{sec:exp-wae} we train Wasserstein autoencoders (WAE) with score estimation and compare the accuracy and the efficiency of different estimators. %
We mainly compare the following score estimators\footnote{Code is available at \url{https://github.com/miskcoo/kscore}.}:

\textbf{Existing nonparametric estimators}: Stein~\citep{li2018gradient}, SSGE~\citep{shi2018spectral}, KEF~\citep{sriperumbudur2017density}, and its low rank approximation NKEF$_\alpha$~\citep{sutherland2017efficient}, where $\alpha$ represents to use $\alpha M / 10$ samples.

\textbf{Parametric estimators}: In the WAE experiment, we also consider the sliced score matching (SSM) estimator~\citep{song2019sliced}, which is a parametric method and requires amortized training.

\textbf{Proposed}: The iterative curl-free estimator with the $\nu$-method, and the conjugate gradient version of the KEF estimator (KEF-CG), both described in Sec.~\ref{sec:scalability}.

\subsection{Synthetic Distributions} \label{sec:toy-experiments}

\begin{figure}[t]
	\centering
	\begin{subfigure}{0.49\linewidth}
		\centering
		\includegraphics[width=\linewidth]{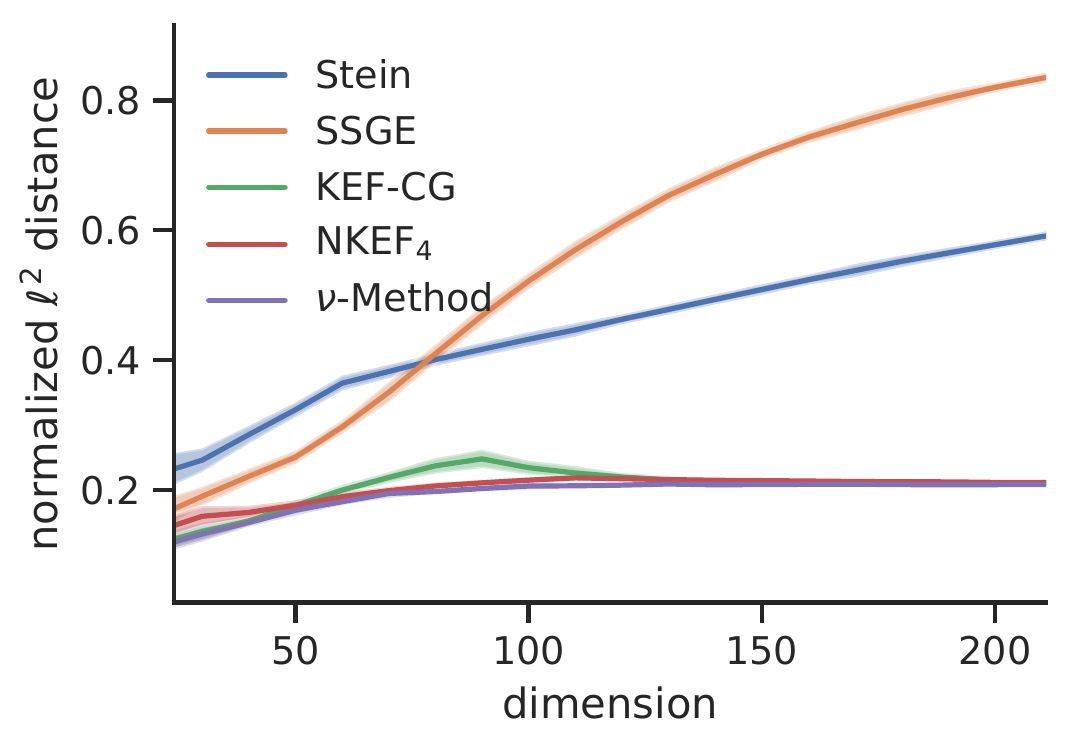}
		\caption{$M=128$}
	\end{subfigure} \hfill
	\begin{subfigure}{0.49\linewidth}
		\centering
		\includegraphics[width=\linewidth]{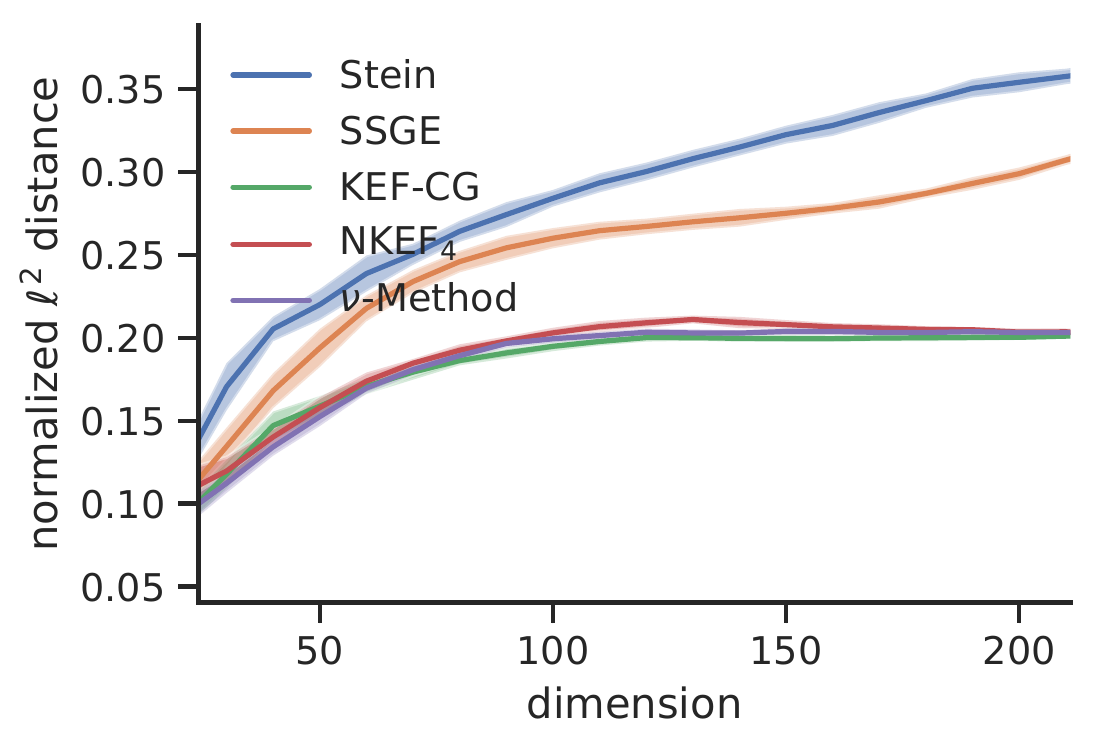}
		\caption{$M=512$}
	\end{subfigure} \\ 
	\begin{subfigure}{0.49\linewidth}
		\centering
		\includegraphics[width=\linewidth]{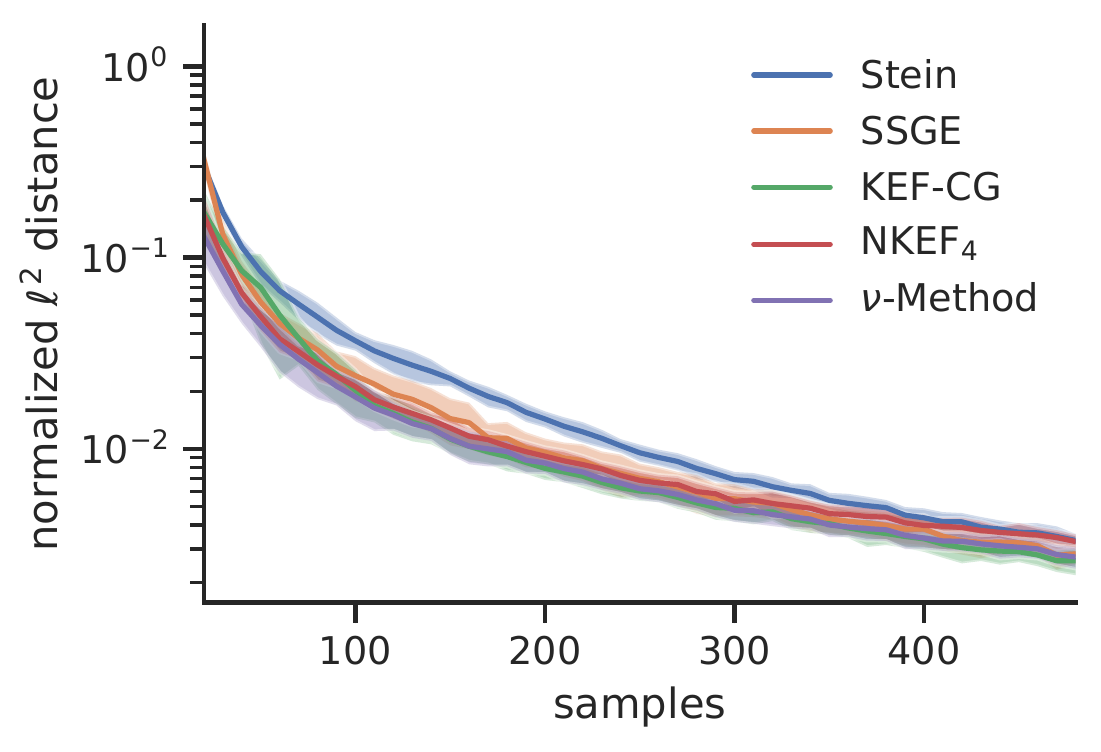}
		\caption{$d=16$}
	\end{subfigure} \hfill
	\begin{subfigure}{0.49\linewidth}
		\centering
		\includegraphics[width=\linewidth]{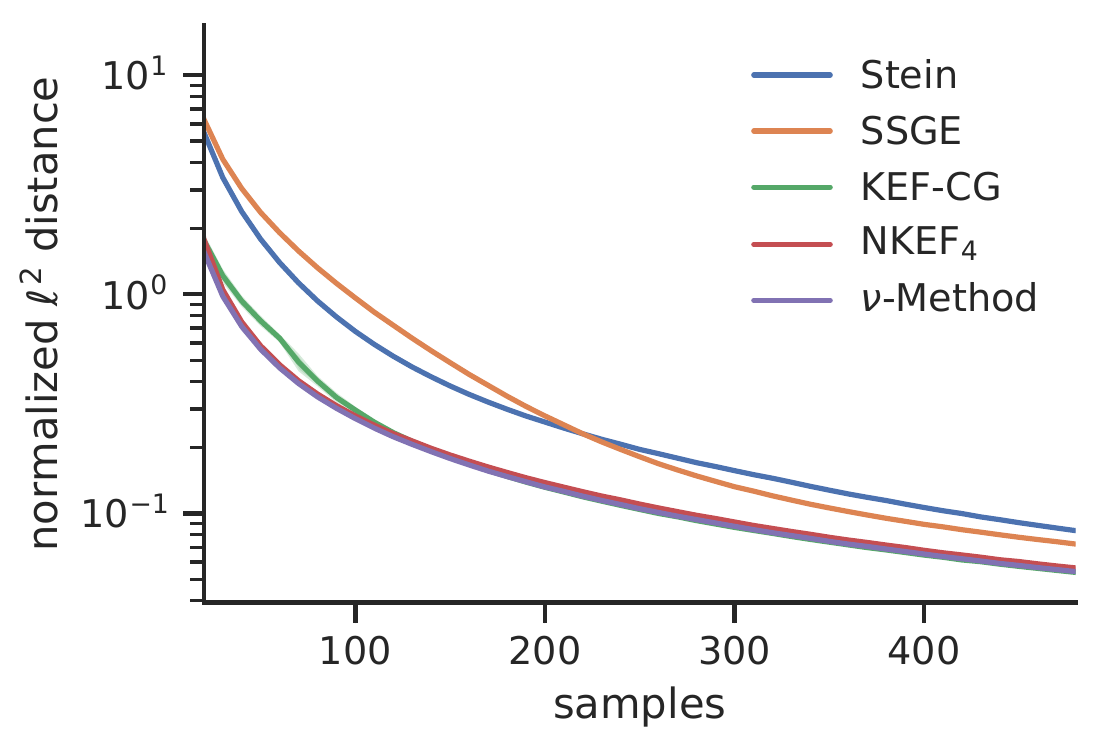}
		\caption{$d=128$}
	\end{subfigure} \vspace{-.2cm}
	\caption{Normalized distance $\E[\norm{\bs_p - \hat \bs_{p,\lambda}}_2^2] / d$ on grid data. In the first row, $M$ is fixed and $d$ varies. In the second row, $d$ is fixed and $M$ varies. Shaded areas are  three times the standard deviation.}
	\label{fig:grid-data}
\end{figure}

\begin{figure}[t]
    \centering
	\begin{subfigure}{.5\linewidth}
		\centering
		\includegraphics[width=\linewidth]{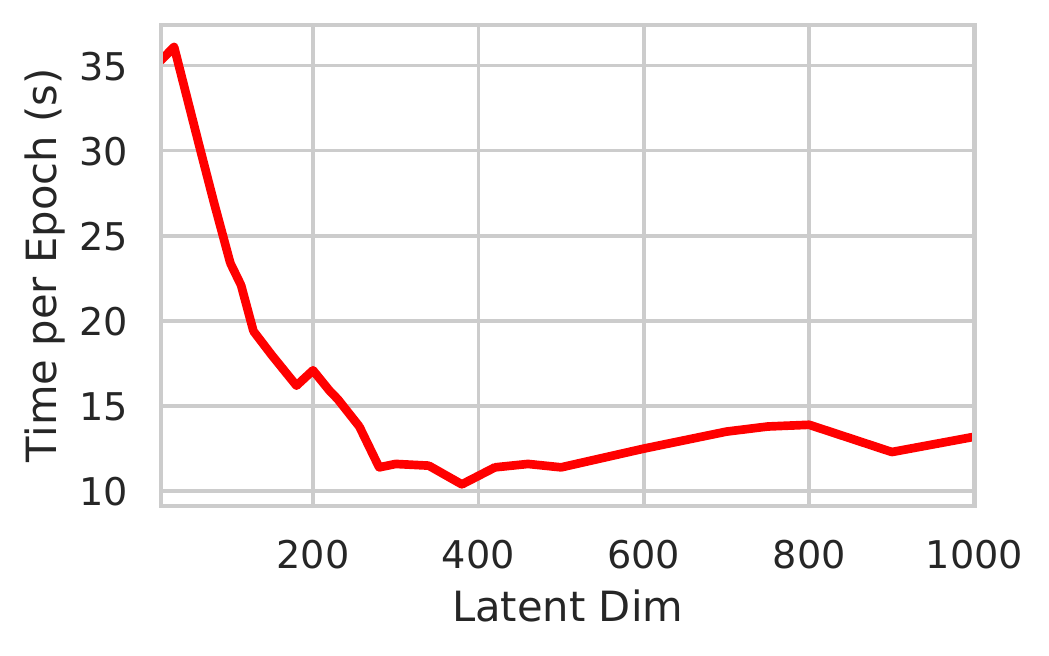}
		\caption{Computational Cost}
		\label{fig:matspectral-time}
	\end{subfigure}%
	\begin{subfigure}{.5\linewidth}
		\centering
		\includegraphics[width=\linewidth]{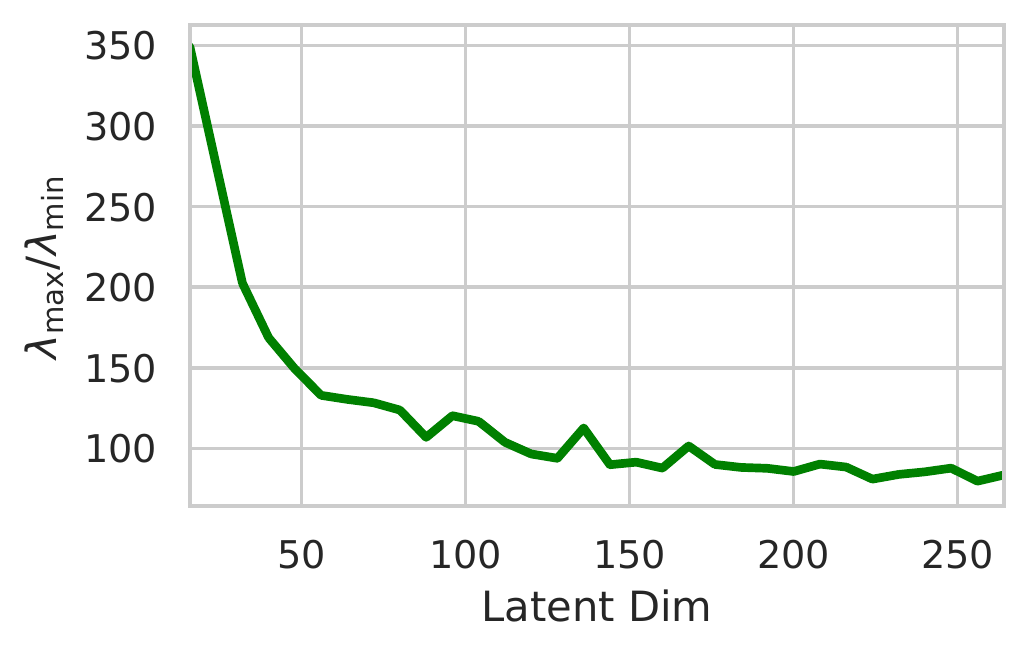}
		\caption{$\lambda_\textrm{max}/\lambda_\textrm{min}$}
		\label{fig:matspectral-eigen}
	\end{subfigure}\vspace{-.2cm}
	\caption{(a) Computational costs of KEF-CG for $\lambda = 10^{-5}$ on MNIST; (b) The ratio of the maximum and the minimum eigenvalues of kernel matrices.} \vspace{-.2cm}
\end{figure}

We follow \citet[Sec. 5.1]{sutherland2017efficient} to construct a $d$-dimensional grid distribution. It is the mixture of $d$ standard Gaussian distributions centered at $d$ fixed vertices in the unit hypercube. We change $d$ and $M$ respectively to test the accuracy and the convergence of score estimators, and use $1024$ samples from the grid distribution to evaluate the $\ell^2$. %
We report the result of $32$ runs in \reffig{fig:grid-data}. 

We can see that the effect of hypothesis space is significant. The diagonal kernels used in SSGE and Stein degrade the accuracy in high dimensions, while curl-free kernels provide better performance. In low dimensions, all estimators are comparable, and the computational cost of diagonal kernels is lower than that of curl-free kernels. This suggests favoring the diagonal kernels in low dimensions.
Possibly because this dataset does not make the convergence rate saturate, we find different regularization schemes produce similar results.
The iterative score estimator based on the $\nu$-method is among the best and KEF-CG closely tracked them even with large $d$ and $M$.

\begin{table}[t]\vspace{-.2cm}
	\centering
	\caption{Negative log-likelihoods on MNIST datasets and per epoch time on 128 latent dimension. All models are timed on GeForce GTX TITAN X GPU.}
	\label{tab:nll-mnist}\vspace{-.2cm}
    \scriptsize
    \begin{center}
    \begin{small}
    \begin{sc}
    \resizebox{\columnwidth}{!}{%
	\begin{tabular}{lccccc} \toprule
		Latent Dim& 8 & 32 & 64 & 128 & Time \\ \midrule
		Stein & 97.15 & 92.10 & 101.60 & 114.41 & 4.2s \\
		SSGE & 97.24 & 92.24 & 101.92 & 114.57 & 9.2s \\
        KEF & 97.07 & 90.93 & 91.58 & 92.40 & 201.1s \\
        NKEF$_2$ & 97.71 & 92.29 & 92.82 & 94.14 &  36.4s \\
        NKEF$_4$ & 97.59 & 91.19 & 91.80 & 92.94 &  97.5s \\
        NKEF$_8$ & 97.23 & 90.86 & 92.39 & 92.49 &  301.2s \\
        \midrule
        KEF-CG & 97.39 & 90.77 & 92.66 & \textbf{92.05} & 13.7s \\
        $\nu$-method & 97.28 & 90.94 & \textbf{91.48} & 92.10 & 78.1s \\
        \midrule
        SSM & \textbf{96.98} & \textbf{89.06} & 93.06 & 96.92 & 6.0s \\
         \bottomrule
	\end{tabular}%
	}%
    \end{sc}
    \end{small}
    \end{center}
    \vskip -0.2in
\end{table}

\subsection{Wasserstein Autoencoders} \label{sec:exp-wae}

Wasserstein autoencoder (WAE) \citep{tolstikhin2017wasserstein} is a latent variable model $p(\bz, \bx)$ with observed and latent variables $\bx \in \cX$ and $\bz \in \cZ$, respectively. $p(\bz, \bx)$ is defined by a prior $p(\bz)$ and a distribution of $\bz$ conditioned on $\bx$, and can be written as $p(\bz, \bx) = p(\bz)p_\theta(\bx | \bz)$. WAEs aim at minimizing Wasserstein distance $\cW_c(p_X, p_G)$ between the data distribution $p_X(\bx)$ and $p_G(\bx) := \int p(\bz, \bx)d\bz$, where $c$ is a metric on $\cX$. \citet{tolstikhin2017wasserstein} showed that when $p_\theta(\bx | \bz)$ maps $\bz$ to $\bx$ deterministically by a function $G: \cZ \to \cX$, it suffices to minimize %
$	\E_{p_X(\bx)}\E_{q_\phi(\bz|\bx)}[\normz{\bx - G(\bz)}_2^2] + \lambda \cD(q_\phi(\bz), p(\bz))$, 
where $\cD$ is a divergence of two distributions and $q_\phi(\bz | \bx)$ is a parametric approximation of the posterior. %
When we choose $\cD$ to be the KL divergence, the entropy term of $q_\phi(\bz) := \int q_\phi(\bz |\bx)p_X(\bx)d\bx$ in the loss function is intractable~\citep{song2019sliced}. 
If $\bz$ can be parameterized by $f_\phi(\bx)$ with $\bx \sim p_X$, the gradient of the entropy can be estimated using score estimators as 
$\E_{p_X(\bx)}[\nabla_\bz \log q_\phi(\bz)\nabla_\phi f_\phi(\bx)]$~\citep{shi2018spectral,song2019sliced}.

We train WAEs on MNIST and CelebA and repeat each configuration $3$ times. The average negative log-likelihoods for MNIST estimated by AIS \citep{neal2001annealed} are reported in \reftab{tab:nll-mnist}. The results for CelebA are reported in \cref{appendix:experiments}.
We can see that the performance of these estimators is close in low latent dimensions, and the parametric method is slightly better than nonparametric ones as we have continuously generated samples.
However, in high dimensions, estimators based on curl-free kernels significantly outperform those based on diagonal kernels and parametric methods. 
This is probably due to guarantee that the estimates at all locations form a gradient field.

As discussed in Sec.~\ref{sec:scalability}, curl-free kernels are computationally expensive. 
This is shown in \reftab{tab:nll-mnist} by the running time of the original KEF algorithm.
By comparing the time and performance of NKEF$_\alpha$ with $\alpha=2,4,8$, we see that in order to get meaningful speed-up in high dimensions,
low-rank approximation methods have to sacrifice the performance, 
which are outperformed by the iterative curl-free estimators based on the $\nu$-method.
KEF-CG is the fastest curl-free method in high dimensions while the performance is comparable with the original KEF. 
\reffig{fig:matspectral-time} shows the training time of KEF-CG in different latent dimensions. Surprisingly, the speed rapidly increases with increasing latent dimension and then flattens out. The convergence rate of conjugate gradient is determined by the condition number, which means the kernel matrix $\bK$ becomes well-conditioned in high dimensions~(see \reffig{fig:matspectral-eigen}). %

We found with large $d$, SSGE required at least $97\%$ eigenvalues to attain the reported likelihood.
We also ran SSGE with curl-free kernels and found only $13\%$ eigenvalues are required to attain a comparable result when $d = 8$. From these observations, a possible reason why diagonal kernels degrade the performance in high dimensions is that the distribution is complicated while the hypothesis set is simple, so the small number of eigenfunctions are insufficient to approximate the target. This can also be observed from \reffig{fig:grid-data}, where the performance of diagonal kernels and curl-free kernels are closer as $M$ increases since more eigenfunctions are provided.

\section{Conclusion}
\label{sec:conclusion}

Our contributions are two folds. Theoretically, we present a unifying view of nonparametric score estimators, %
and clarify the relationships of existing estimators. 
Under this perspective, we provide a unified convergence analysis of existing estimators, which improves existing error bounds. 
Practically, we propose an iterative curl-free estimator with nice theoretical properties and computational benefits, and develop a fast conjugate gradient solver for the KEF estimator.

\section*{Acknowledgements}

We thank Ziyu Wang for reading an earlier version of this manuscript and providing valuable feedback.
This work was supported by the National Key Research and Development Program of China (No. 2017YFA0700904), NSFC Projects (Nos. 61620106010, U19B2034, U1811461), Beijing NSF Project (No. L172037), Beijing Academy of Artificial Intelligence (BAAI), Tsinghua-Huawei Joint Research Program, Tsinghua Institute for Guo Qiang, and the NVIDIA NVAIL Program with GPU/DGX Acceleration. JS was also supported by a Microsoft Research Asia Fellowship.

\bibliography{references}
\bibliographystyle{icml2020}

\clearpage
\appendix
\onecolumn

In \cref{appendix:experiments} we provide additional details and further results of experiments. In \cref{appendix:error-bounds}, we list the assumptions we used, and prove the non-asymptotic verison of Theorem~\ref{thm:error-bound} and \ref{thm:mixture-error-bound}. In \cref{appendix:details-in-sec3}, we give the details of Sec.~\ref{sec:estimators}, including deriving algorithms presented in Sec.~\ref{sec:regularization-schemes}, examples in Sec.~\ref{sec:connection} and a general formula for curl-free kernels. \Cref{appendix:technique-results} includes some technical results used in proofs. Finally, We present samples drawn from trained WAEs in \cref{appendix:samples}.

\section{Experiment Details and Additional Results} \label{appendix:experiments}
In experiments, we use the IMQ kernel $k(\bx, \by) := (1 + \normz{\bx-\by}_2^2/\sigma^2)^{-1/2}$ and its curl-free version in corresponding kernel estimators. We use the median of the pairwise Euclidean distances between samples as the kernel bandwidth. The parameter $\nu$ of the $\nu$-method is set to $1$.
The maximum iteration number of KEF-CG is 40 and the convergence tolerance of it is $10^{-4}$. 

\subsection{Grid Distributions}
We use $\alpha M$ eigenvalues in SSGE with $\alpha$ searched in $\{ 0.99, 0.97, 0.95, 0.9, 0.8, 0.7, 0.6, 0.5, 0.4 \}$. We search the number of iteration steps of the $\nu$-method in $\{ 20, 30, 40, 50, 60, 70, 80, 90, 100 \}$. We search the regularization coefficient $\lambda$ of Stein, NKEF, KEF-CG in $\{ 10^{-k} : k = 0, 1, \cdots, 8\}$. The experiments are repeated $32$ times.

\subsection{Wasserstein Autoencoders}
We use the standard Gaussian distribution $\cN(0, I)$ as the prior $p(\bz)$, and $\cN(\mu_\phi(\bx), \sigma_\phi^2(\bx))$ as the approximated posterior $q_\phi(\bz | \bx)$, and $\mathrm{Bernoulli}(G_\theta(\bz))$ as the generator $p_\theta(\bx | \bz)$.
We use minibatch size 64. Models are optimized by the Adam optimizer with learning rate $10^{-4}$. Each configuration is repeated $3$ times, and the mean and the standard deviation are reported in \reftab{appendix:table/ll-mnist} and \reftab{appendix:table/fid-celeba}. All models are timed on GeForce GTX TITAN X GPU.

\begin{table}[ht]
	\centering
	\caption{Negative log-likelihoods on the MNIST dataset and per epoch time on 128 latent dimension.}
    \label{appendix:table/ll-mnist}
    \begin{center}
    \begin{small}
    \begin{sc}
    \vskip 0.05in
	\begin{tabular}{lccccc} \toprule
		Latent Dim& 8 & 32 & 64 & 128 & Time \\ \midrule
		Stein & 97.15 $\pm$ 0.14 & 92.10 $\pm$ 0.07 & 101.60 $\pm$ 0.44 & 114.41 $\pm$ 0.25 & 4.2s \\
		SSGE & 97.24 $\pm$ 0.07 & 92.24 $\pm$ 0.17 & 101.92 $\pm$ 0.08 & 114.57 $\pm$ 0.23 & 9.2s \\
        KEF & 97.07 $\pm$ 0.03 & 90.93 $\pm$ 0.23 & 91.58 $\pm$ 0.03 & 92.40 $\pm$ 0.34 & 201.1s \\
        NKEF$_2$ & 97.71 $\pm$ 0.24 & 92.29 $\pm$ 0.41 & 92.82 $\pm$ 0.18 & 94.14 $\pm$ 0.69 &  36.4s \\
        NKEF$_4$ & 97.59 $\pm$ 0.15 & 91.19 $\pm$ 0.08 & 91.80 $\pm$ 0.12 & 92.94 $\pm$ 0.58 &  97.5s \\
        NKEF$_8$ & 97.23 $\pm$ 0.06 & 90.86 $\pm$ 0.09 & 92.39 $\pm$ 1.32 & 92.49 $\pm$ 0.41 &  301.2s \\
        \midrule
        KEF-CG & 97.39 $\pm$ 0.22 & 90.77 $\pm$ 0.12 & 92.66 $\pm$ 0.67 & \textbf{92.05} $\pm$ 0.06 & 13.7s \\
        $\nu$-method & 97.28 $\pm$ 0.17 & 90.94 $\pm$ 0.02 & \textbf{91.48} $\pm$ 0.09 & 92.10 $\pm$ 0.06 & 78.1s \\
        \midrule
        SSM & \textbf{96.98} $\pm$ 0.27 & \textbf{89.06} $\pm$ 0.01 & 93.06 $\pm$ 0.68 & 96.92 $\pm$ 0.08 & 6.0s \\
         \bottomrule
	\end{tabular}%
    \end{sc}
    \end{small}
    \end{center}
    \vskip -0.1in
\end{table}
\begin{table}[ht]
    \centering
    \caption{Fr\'echet Inception Distances on the CelebA dataset and per epoch time on 128 latent dimension.}
    \label{appendix:table/fid-celeba}
    \scriptsize
    \begin{center}
    \begin{small}
    \begin{sc}
    \vskip 0.05in
	\begin{tabular}{lccccc} \toprule
		Latent Dim& 8 & 32 & 64 & 128 & Time \\ \midrule
		Stein & 73.85 $\pm$ 1.39 & 58.29 $\pm$ 0.46 & 57.54 $\pm$ 0.57 & 76.31 $\pm$ 1.33 & 164.4s \\
		SSGE & 72.49 $\pm$ 1.09 & 58.01 $\pm$ 0.60 & 58.39 $\pm$ 1.00 & 76.85 $\pm$ 1.12 & 172.2s \\
		NKEF$_2$ & 75.12 $\pm$ 1.55 & 53.92 $\pm$ 0.29 & 51.16 $\pm$ 0.30 & 55.17 $\pm$ 0.43 & 244.7s \\
		NKEF$_4$ & 73.15 $\pm$ 0.77 & 54.54 $\pm$ 1.02 & 50.76 $\pm$ 0.19 & 53.70 $\pm$ 0.10 & 412.5s \\
        \midrule
		KEF-CG & 72.92 $\pm$ 0.60 & 54.32 $\pm$ 0.31 & 50.44 $\pm$ 0.20 & \textbf{50.66} $\pm$ 0.89 & 166.2s \\
		$\nu$-method & 72.02 $\pm$ 1.22 & 52.86 $\pm$ 0.20 & \textbf{50.16} $\pm$ 0.23 & 52.80 $\pm$ 0.43 & 220.9s \\
        \midrule
		SSM & \textbf{69.72} $\pm$ 0.25 & \textbf{49.93} $\pm$ 0.74 & 72.68 $\pm$ 1.75 & 94.07 $\pm$ 3.57 & 163.3s \\
         \bottomrule
	\end{tabular}%
    \end{sc}
    \end{small}
    \end{center}
    \vskip -0.1in
\end{table}

\paragraph{MNIST}
We parameterize $\mu_\phi$, $\sigma^2_\phi$ and $G_\theta(\bz)$ by fully-connected neural networks with two hidden layers, both of which consist of 256 units activated by \texttt{ReLU}. For SSM, the score is parameterized by a fully-connected neural network with two hidden layers consisting of 256 units activated by \texttt{tanh}. 
The regularization coefficients of Stein, KEF, NKEF, KEF-CG are searched in $\{ 10^{-k} : k = 2, 3, \cdots, 7 \}$ for the best log-likelihood, and the number of iteration steps of the $\nu$-method are searched in $\{ 50, 70, \cdots, 150 \}$, and we use $\alpha M$ eigenvalues in SSGE with $\alpha$ searched in $\{ 0.99, 0.97, 0.95, 0.93, 0.91, 0.89, 0.87 \}$. We run 1000 epoches and evaluate the model by AIS~\citep{neal2001annealed}, where the parameters are the same as in SSGE. Specifically, we set the step size of HMC to $10^{-6}$, and the leapfrog step to $10$. We use $5$ chains and set the temperature to $10^3$.

\paragraph{CelebA}
We parameterize $\mu_\phi$, $G_\theta$ by convolutional neural networks similar to~\citet{song2019sliced}. $\sigma_\phi^2$ is set to $1$. For SSM, we use the same network as in MNIST to parameterize the score.
The regularization coefficients of Stein, KEF, NKEF, KEF-CG are searched in $\{ 10^{-k} : k = 2, 3, \cdots, 7 \}$ for the best log-likelihood, and the number of iteration steps of the $\nu$-method are searched in $\{ 20, 30, 40, 50, 60, 70 \}$, and we use $\alpha M$ eigenvalues in SSGE with $\alpha$ searched in $\{ 0.99, 0.97, 0.95, 0.93, 0.91, 0.89, 0.87 \}$. We run 100 epoches and evaluate the model using the Fr\'echet Inception Distance~(FID). As KEF and NKEF$_8$ are slow, we do not compare them in this dataset. Results are reported in \reftab{appendix:table/fid-celeba}.

\section{Error Bounds} \label{appendix:error-bounds}

In the following, we suppress the dependence of $\cH_\cK$ on $\cK$ for simplicity. We use $\hsnorm{\cdot}$ to denote the Hilbert-Schmidt norm of operators. The assumptions required in obtaining an error bound are listed below.

\begin{assumption}
    \label{assumption:domain}
    $\cX$ is a non-empty open subset of $\R^d$, with piecewise $C^1$ boundary.
\end{assumption}
\begin{assumption}
    \label{assumption:boundary-extension}
    $p$, $\log p$ and each element of $\cK$ are continuously differentiable.
    $p$ and its total derivative $Dp: \cX \to \R^d$ can both be continuously extended to $\bar \cX$, where $\bar \cX$ is the closure of $\cX$. Each element of $\cK$ and its total derivative can be continuously extended to $\bar \cX \times \bar \cX$.
\end{assumption}
\begin{assumption}
    \label{assumption:integration-by-parts}
    For all $i, j \in [d]$, $\cK(\bx, \bx)_{ij} p(\bx) = 0$ on $\partial \cX$, and $\sqrt{|\cK(\bx, \bx)_{ij}|}p(\bx) = o(\normz{\bx}_2^{1-d})$ as $\bx \to \infty$, where $\partial \cX := \bar \cX \setminus \cX$.
\end{assumption}
\begin{assumption}
    \label{assumption:bounded-div}
    Define an $\cH_\cK$-valued random variable $\xi_\bx := \divgers{\bx}\cK_\bx^\trans$, let $\xi := \int_\cX \xi_\bx d\rho$. There are two constants $\Sigma, K$, such that 
    \[ \int_\cX \left \{ \exp\left (\frac{\normz{ \xi_\bx - \xi }_\cH}{K}\right) - \frac{\normz{ \xi_\bx - \xi }_\cH}{K} - 1 \right \} d\rho \leq \frac{\Sigma^2}{2K^2}. \]
\end{assumption}
\begin{assumption}
    \label{assumption:bounded-trace}
    There is a constant $\kappa > 0$ such that $\sup_{\bx \in \cX} \tr \cK(\bx,\bx) \leq \kappa^2$.
\end{assumption}

Assumptions \ref{assumption:domain}-\ref{assumption:integration-by-parts} are similar to those in~\citet{sriperumbudur2017density}. They guarantee the integration by parts is valid, so we can obtain $\E_\rho[ \cK_\bx \nabla \log p] = -\E_\rho[ \divgers{\bx}  \cK^\trans_\bx ]$. Assumptions \ref{assumption:bounded-div} and \ref{assumption:bounded-trace} come from \citet{bauer2007regularization}, and are used in the concentration inequalities. Note that Assumption~\ref{assumption:bounded-div} can be replaced by a stronger one that $\normz{\xi_\bx - \xi}_\cH$ is uniformly bounded on $\cX$. 

We follows the idea of \citet[Theorem 10]{bauer2007regularization} to prove Theorem~\ref{thm:error-bound}. The non-asymptotic version is given as follows
\begin{theorem}
    \label{thm:appendix-error-bound}
    Assume Assumptions~\ref{assumption:domain}-\ref{assumption:bounded-trace} hold.
    Let $\bar r$ be the qualification of the regularizer $g_\lambda$, and $\hat \bs^g_{p,\lambda}$ be defined as in \refeq{eqn:general-score-estimator}. Suppose there exists $f_0 \in \cH_\cK$ such that $\bs_p = L_\cK^r f_0$, for some $r \in [0, \bar r]$.
    Then for any $0 < \delta < 1$, $M \geq (2\sqrt 2 \kappa^2 \log(4/\delta))^{\frac{2r+2}{r}}$, choosing $\lambda = M^{-\frac{1}{2r+2}}$, the following inequalities hold with probability at least $1-\delta$
    \[ \normz{ \hat \bs_{p,\lambda} - \bs_p }_{\cH} \leq C_1 M^{-\frac{r}{2r+2}} \log \frac{4}{\delta}  ,\]
    and for $r \in [0, \bar r - 1/2]$, we have
    \[ \llnormz{ \hat \bs_{p,\lambda} - \bs_p } \leq C_2 M^{-\frac{2r + 1}{4r+4}} \log \frac{4}{\delta}  ,\]
    where $C_1 = 2B(K+\Sigma) + 2\sqrt 2 B\kappa^2\normz{\bs_p}_\cH + (\gamma_r +\kappa^2  \gamma c_r)\normz{f_0}_\cH$, and $C_2 = 2B(K+\Sigma)\kappa + 2\sqrt 2 B\kappa^3\normz{\bs_p}_\cH + ((\gamma_r + \kappa^2 \gamma_{\frac{1}{2}} c_r) + c_{\frac{1}{2}}(\gamma_r +\kappa^2  \gamma c_r))\normz{f_0}_\cH$, and $c_r$ is a constant depending on $r$. $O_p$ is the Big-O notation in probability.
\end{theorem}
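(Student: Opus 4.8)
The plan is to view score estimation as a randomly perturbed linear inverse problem and adapt the spectral-regularization analysis of \citet[Theorem~10]{bauer2007regularization} (see also \citet{baldassarre2012multi}). Under Assumptions~\ref{assumption:domain}--\ref{assumption:integration-by-parts} the integration-by-parts identity gives $L_\cK\bs_p = -\xi$, while the estimator $\hat\bs^g_{p,\lambda} = -g_\lambda(\hat L_\cK)\hat\bzeta$ is the $g_\lambda$-regularized solution of the \emph{empirical} equation $\hat L_\cK\bs\approx-\hat\bzeta$, in which the operator $\hat L_\cK$ and the data $\hat\bzeta$ are both random estimates of $L_\cK$ and $-\xi$. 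Writing the residual function $r_\lambda(\sigma):=1-\sigma g_\lambda(\sigma)$, and using the source condition $\bs_p=L_\cK^r f_0$ together with $\xi=-L_\cK\bs_p$, I would first establish
\begin{equation*}
\hat\bs^g_{p,\lambda}-\bs_p \;=\; -\,r_\lambda(\hat L_\cK)L_\cK^r f_0 \;-\; g_\lambda(\hat L_\cK)(\hat\bzeta-\xi) \;-\; g_\lambda(\hat L_\cK)(\hat L_\cK-L_\cK)\bs_p ,
\end{equation*}
the first term being the approximation error with a random operator, and the last two the stochastic error coming, respectively, from the concentration of $\hat\bzeta$ around $\xi$ and of $\hat L_\cK$ around $L_\cK$.

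\textbf{Controlling the stochastic terms.} By Definition~\ref{def:regularizer}, $\normz{g_\lambda(\hat L_\cK)}\le B/\lambda$, so the two stochastic terms are at most $(B/\lambda)\normz{\hat\bzeta-\xi}_\cH$ and $(B/\lambda)\hsnormz{\hat L_\cK-L_\cK}\normz{\bs_p}_\cH$. I would bound $\normz{\hat\bzeta-\xi}_\cH$ by a Bernstein-type (Pinelis) inequality for i.i.d.\ Hilbert-space--valued averages, whose moment hypothesis is exactly Assumption~\ref{assumption:bounded-div}, yielding $\normz{\hat\bzeta-\xi}_\cH\le 2(K/M+\Sigma/\sqrt M)\log(2/\delta)$; and $\hsnormz{\hat L_\cK-L_\cK}$ by a Hoeffding-type bound using $\sup_\bx\tr\cK(\bx,\bx)\le\kappa^2$ (Assumption~\ref{assumption:bounded-trace}), yielding $\hsnormz{\hat L_\cK-L_\cK}\le 2\sqrt2\,\kappa^2\log(2/\delta)/\sqrt M$, each on an event of probability $\ge1-\delta/2$. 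With $\lambda=M^{-1/(2r+2)}$ one has $1/(\lambda\sqrt M)=\lambda^r=M^{-r/(2r+2)}$, and the assumption $M\ge(2\sqrt2\,\kappa^2\log(4/\delta))^{(2r+2)/r}$ is precisely what guarantees $\hsnormz{\hat L_\cK-L_\cK}\le\lambda$ on this event, which is used below.

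\textbf{The approximation term and the two bounds.} Here the qualification $\bar r$ of $g_\lambda$ enters. Since the qualification property is a spectral statement about $\hat L_\cK$, I would split $L_\cK^r=\hat L_\cK^r+(L_\cK^r-\hat L_\cK^r)$, bound $\normz{r_\lambda(\hat L_\cK)\hat L_\cK^r}\le\gamma_r\lambda^r$ directly, and control $\normz{r_\lambda(\hat L_\cK)(L_\cK^r-\hat L_\cK^r)}\le\gamma\,\normz{L_\cK^r-\hat L_\cK^r}$ by a Cordes inequality ($\normz{A^r-B^r}\le\normz{A-B}^r$ for $r\le1$) or a local-Lipschitz estimate with an $r$-dependent constant $c_r$ (for $r>1$); by the previous step this difference is of order at most $\lambda$, hence lower order than $\gamma_r\lambda^r$. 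Summing the three contributions and inserting $\lambda=M^{-1/(2r+2)}$ gives $\normz{\hat\bs^g_{p,\lambda}-\bs_p}_\cH\le C_1 M^{-r/(2r+2)}\log(4/\delta)$, with $C_1$ assembled from the pieces $B(K+\Sigma)$, $B\kappa^2\normz{\bs_p}_\cH$, $\gamma_r\normz{f_0}_\cH$ and $\kappa^2\gamma c_r\normz{f_0}_\cH$. For the $L^2(\rho)$ bound I would use $\llnormz{f}=\normz{L_\cK^{1/2}f}_\cH$ for $f\in\cH$ \citep{smale2007learning} and commute $L_\cK^{1/2}$ into each term via $L_\cK^{1/2}=\hat L_\cK^{1/2}+(L_\cK^{1/2}-\hat L_\cK^{1/2})$: the leading pieces gain a factor $\lambda^{1/2}$ through $\normz{r_\lambda(\hat L_\cK)\hat L_\cK^{r+1/2}}\le\gamma_r\lambda^{r+1/2}$ (requiring $r+1/2\le\bar r$, hence $r\in[0,\bar r-1/2]$) and through $\sup_\sigma\sqrt\sigma\,|g_\lambda(\sigma)|\le\sqrt{D}\sqrt{B/\lambda}$, while the leftover $(L_\cK^{1/2}-\hat L_\cK^{1/2})$ factors are lower order by another Cordes bound (producing the $\gamma_{1/2}$ and $c_{1/2}$ in $C_2$); with $\lambda=M^{-1/(2r+2)}$ this gives the rate $M^{-(2r+1)/(4r+4)}$. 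A union bound over the two events above yields probability $1-\delta$, and Theorem~\ref{thm:error-bound} follows by fixing $\delta$ and reading off the $O_p$ rates.

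\textbf{Main obstacle.} I expect the delicate part to be the transfer of the qualification bound from $\hat L_\cK$ to $L_\cK^r$ — and, in the $L^2(\rho)$ estimate, the commutation of $L_\cK^{1/2}$ — which requires perturbation inequalities for fractional operator powers that are uniform over the high-probability event and that retain enough explicit dependence on $r$ and $\kappa$ to recover the constants $C_1,C_2$. I would isolate the needed operator inequalities (Cordes' inequality and a local-Lipschitz bound for powers above $1$, together with the elementary facts about $g_\lambda$ and $r_\lambda$ from Definition~\ref{def:regularizer}) as lemmas in \cref{appendix:technique-results}.
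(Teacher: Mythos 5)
Your proposal is correct and follows essentially the same route as the paper's proof: the identical three-term decomposition into the two stochastic terms and the residual term $r_\lambda(\hat L_\cK)L_\cK^r f_0$, the same Bernstein/Hoeffding concentration bounds for $\hat\bzeta$ and $\hat L_\cK$, and the same splitting $L_\cK^r=\hat L_\cK^r+(L_\cK^r-\hat L_\cK^r)$ handled by the qualification bound plus a Cordes-type inequality for $r\le 1$ and an operator-Lipschitz bound for $r>1$. Your treatment of the $L^2(\rho)$ case (commuting $L_\cK^{1/2}$ and using $\sup_\sigma\sqrt{\sigma}\,|g_\lambda(\sigma)|\le\sqrt{DB/\lambda}$ to gain the extra $\lambda^{1/2}$) is in fact more explicit than the paper's one-line remark, and is the correct way to obtain the stated rate $M^{-(2r+1)/(4r+4)}$.
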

\begin{proof}
    We consider the following decomposition
    \[ \begin{aligned}
        \normz{ \hat \bs_{p,\lambda} - \bs_p }_{\cH} 
        & \leq \normz{ g_\lambda(\hat L_\cK) (\hat \bzeta - \bzeta) }_{\cH} 
        + \normz{ g_\lambda(\hat L_\cK) L_\cK \bs_p - \bs_p }_{\cH} \\
        & \leq \normz{ g_\lambda(\hat L_\cK) (\hat \bzeta - \bzeta) }_{\cH} 
        + \normz{ g_\lambda(\hat L_\cK) (L_\cK - \hat L_\cK) \bs_p}_{\cH} 
        + \normz{ r_\lambda(\hat L_\cK) \bs_p }_{\cH},
    \end{aligned} \]
    where $r_\lambda(\sigma) := g_\lambda(\sigma)\sigma - 1$. By Definition~\ref{def:regularizer}, we have $\normz{g_\lambda(\hat L_\cK)} \leq B / \lambda$. From Lemma~\ref{lemma:xi-concentration} and \ref{lemma:L-concentration}, with probability at least $1 - \delta$, we have
    \[
        \normz{ g_\lambda(\hat L_\cK) (\hat \bzeta - \bzeta) }_{\cH}  
         +\normz{ g_\lambda(\hat L_\cK) (L_\cK - \hat L_\cK) \bs_p}_{\cH} 
        \leq \frac{2B(K + \Sigma) + 2\sqrt 2 B \kappa^2 \norm{\bs_p}_\cH}{\lambda\sqrt M} \log \frac{4}{\delta}.
     \]
     By Definition~\ref{def:regularizer}, $\normz{r_\lambda(\hat L_\cK)L_\cK^r} \leq \gamma_r \lambda^r$ and $\normz{r_\lambda(\hat L_\cK)} \leq \gamma$, then
     \[ \begin{aligned}
        \normz{r_\lambda(\hat L_\cK)\bs_p}_\cH &\leq \normz{r_\lambda(\hat L_\cK)\hat L_\cK^r f_0}_\cH + \normz{r_\lambda(\hat L_\cK)(L_\cK^r - \hat L_\cK^r) f_0}_\cH \\
        &\leq \gamma_r \lambda^r \norm{f_0}_\cH + \gamma \normz{L^r_\cK - \hat L^r_\cK}  \norm{f_0}_\cH.
     \end{aligned} \]
     When $r \in [0, 1]$, from \citet[Theorem 1]{bauer2007regularization}, there exists a constant $c_r$ such that $\normz{L_\cK^r - \hat L_\cK^r} \leq c_r \normz{L_\cK - \hat L_\cK}^r$. 
    Then by Lemma~\ref{lemma:L-concentration}, and choose $\lambda \geq 2\sqrt{2}\kappa^2 M^{-1/2} \log ( 4/\delta)$, we have
     \[ \normz{L_\cK^r - \hat L_\cK^r} \leq c_r \left (\frac{2\sqrt 2 \kappa^2}{\sqrt M} \log \frac{4}{\delta} \right )^r \leq c_r \lambda^r . \]
     Collecting the above results, 
     \[ \normz{\hat \bs_{p,\lambda} - \bs_p}_\cH 
     \leq  \left ( \frac{A_1}{\lambda\sqrt M} + A_2 \lambda^r \right ) \log \frac{4}{\delta}, \]
     where $A_1, A_2$ are constants which do not depend on $\lambda$ and $M$. Then, we can choose $\lambda = M^{-\frac{1}{2r+2}}$ to obtain the bound. Combining with $\lambda \geq 2\sqrt{2}\kappa^2 M^{-1/2} \log(4/\delta)$, we require $M^{\frac{r}{2r+2}} \geq 2\sqrt 2\kappa^2\log(4/\delta)$.

     When $r > 1$, from \Cref{lemma:operator-lipechitz}, there exists a constant $c_r^\prime$ such that $\hsnormz{L_\cK^r - \hat L_\cK^r} \leq c_r^\prime \hsnormz{L_\cK - \hat L_\cK}$. Then $\hsnormz{L_\cK^r - \hat L_\cK^r} \leq 2\sqrt 2 c_r^\prime \kappa^2 M^{-1/2} \log (4/\delta)$, and a similar discussion can be applied to obtain the bound.

     Note that $\llnormz{\hat \bs_{p,\lambda} - \bs_p} = \normz{\sqrt{L_\cK}(\hat \bs_{p,\lambda} - \bs_p)}_\cH$. Then we can apply the above discussion to obtain the bound for $\llnorm{\cdot}$.
\end{proof}

Next, we give the non-asymptotic version of Theorem~\ref{thm:mixture-error-bound} as follows
\begin{theorem}
    Under the same assumption of Theorem~\ref{thm:appendix-error-bound}, we define $g_\lambda(\sigma) := (\lambda + \sigma)^{-1}$, and choose $\bZ := \{\bz^n\}_{n\in[N]} \subseteq \cX$. Let $\bY := \{\by^m\}_{m\in[M]}$ be a set of i.i.d. samples drawn from $\rho$. Let $\hat\bs_{p,\lambda,\bZ}$ be defined as in \refeq{eqn:general-score-estimator} with $\bX = \bZ \cup \bY$. 
    Suppose $N = M^\alpha$, then for any $0 < \delta < 1$, $M \geq (2\sqrt 2 \kappa^2 \log(4/\delta))^{\frac{2r+2}{r}}$, choosing $\lambda = M^{-\frac{1}{2r+2}}$, the following inequalities hold with probability at least $1-\delta$
    \[ \sup_{\bZ} \normz{ \hat \bs_{p,\lambda, \bZ} - \bs_p }_{\cH} \leq C_1 M^{-\frac{r}{2r+2}} \log \frac{4}{\delta} + C_3 M^{\alpha - \frac{r}{r+1}} ,\]
    where $C_3 := 2(\kappa^2+1)^2\normz{\bs_p}_\cH$, and the $\sup_\bZ$ is taken over all $\{ \bz^n \}_{n\in[N]} \subset \cX$.

    In particular, when $\alpha = \frac{r}{2r+2}$, we have
    \[ \sup_{\bZ} \normz{ \hat \bs_{p,\lambda, \bZ} - \bs_p }_{\cH} \leq (C_1 + C_3) M^{-\frac{r}{2r+2}} \log \frac{4}{\delta}. \]
\end{theorem}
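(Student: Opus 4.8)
\emph{Proof proposal.} The plan is to view $\hat\bs_{p,\lambda,\bZ}$ as the purely i.i.d.\ estimator $\hat\bs_{p,\lambda}$ (built from $\bY$ alone) subjected to a \emph{deterministic and uniformly small} perturbation of its empirical integral operator, and then invoke the non-asymptotic i.i.d.\ bound of Theorem~\ref{thm:appendix-error-bound}. Let $\hat L_\cK$ and $\hat\bzeta$ denote the operator and data vector of \refeq{eqn:general-score-estimator} computed on $\bY$, and let $T_\bZ := \tfrac1N S_\bZ^* S_\bZ$. Since $\bX = \bZ\cup\bY$ has $M+N$ points, the operator inside $\hat\bs_{p,\lambda,\bZ}$ is $\hat L_\cK + R_\bZ$ with $R_\bZ := \tfrac{N}{M+N}(T_\bZ - \hat L_\cK)$. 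The decisive point is that, by Assumption~\ref{assumption:bounded-trace}, $\normz{T_\bZ}\le\kappa^2$ and $\normz{\hat L_\cK}\le\kappa^2$ for \emph{every} $\bZ\subseteq\cX$, so $\normz{R_\bZ}\le \tfrac{2N\kappa^2}{M+N}\le 2\kappa^2 M^{\alpha-1}$ with no probabilistic qualifier. This is exactly what lets the $\sup_\bZ$ in the statement be taken after all estimates are in place.

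I would then split $\normz{\hat\bs_{p,\lambda,\bZ} - \bs_p}_\cH \le \normz{\hat\bs_{p,\lambda,\bZ} - \hat\bs_{p,\lambda}}_\cH + \normz{\hat\bs_{p,\lambda} - \bs_p}_\cH$. The second summand is precisely the quantity bounded in Theorem~\ref{thm:appendix-error-bound}: under the stated hypotheses on $\delta$ and $M$ and with $\lambda = M^{-1/(2r+2)}$ it is at most $C_1 M^{-r/(2r+2)}\log(4/\delta)$ with probability at least $1-\delta$, the only randomness being in the $M$ i.i.d.\ points $\bY$.

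For the first summand I would follow the proof outline of Theorem~\ref{thm:mixture-error-bound}. Writing both estimators in operator form and telescoping through $g_\lambda(\hat L_\cK + R_\bZ)\hat L_\cK\bs_p$ gives
\[ \hat\bs_{p,\lambda,\bZ} - \hat\bs_{p,\lambda} = \big(g_\lambda(\hat L_\cK + R_\bZ) - g_\lambda(\hat L_\cK)\big)\hat L_\cK\bs_p + g_\lambda(\hat L_\cK + R_\bZ)\,R_\bZ\,\bs_p + E_\bZ, \]
where $E_\bZ$ collects the discrepancy between the computable data vector $\hat\bzeta^\bX$ and $-(\hat L_\cK + R_\bZ)\bs_p$: its $\bY$-part is absorbed by the same concentration estimates already used in the proof of Theorem~\ref{thm:appendix-error-bound} for $\normz{\hat\bs_{p,\lambda} - \bs_p}_\cH$, while its $\bZ$-part is deterministic of order $O(\normz{R_\bZ}\lambda^{-1})$. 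For the Tikhonov regularizer $g_\lambda(\sigma) = (\sigma+\lambda)^{-1}$ the resolvent identity yields $g_\lambda(\hat L_\cK + R_\bZ) - g_\lambda(\hat L_\cK) = -g_\lambda(\hat L_\cK + R_\bZ)R_\bZ g_\lambda(\hat L_\cK)$, so, using $\normz{g_\lambda(\cdot)}\le\lambda^{-1}$ (valid for $\hat L_\cK + R_\bZ$ because it is a genuine positive empirical operator) and $\normz{\hat L_\cK}\le\kappa^2$, the first term is at most $2\kappa^4 M^{\alpha-1}\lambda^{-2}\normz{\bs_p}_\cH$ and the second at most $2\kappa^2 M^{\alpha-1}\lambda^{-1}\normz{\bs_p}_\cH$; both are deterministic and independent of $\bZ$. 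Plugging $\lambda = M^{-1/(2r+2)}$, so $\lambda^{-2} = M^{1/(r+1)}$ and $\lambda^{-1}\le\lambda^{-2}$, the dominant contribution is $O(M^{\alpha - 1 + 1/(r+1)}) = O(M^{\alpha - r/(r+1)})$, and collecting constants gives $C_3 M^{\alpha - r/(r+1)}$ with $C_3 = 2(\kappa^2+1)^2\normz{\bs_p}_\cH$. Adding the two summands proves the first inequality; when $\alpha = r/(2r+2)$ the exponent $\alpha - r/(r+1)$ equals $-r/(2r+2)$, so the two rates coincide and (since $\log(4/\delta)>1$) the constants merge into $C_1 + C_3$.

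The main obstacle I anticipate is not any individual estimate but the uniformity over $\bZ$: the argument works only because every appearance of $\bZ$ is funnelled through the single operator $R_\bZ$, whose norm is controlled \emph{deterministically} by the trace bound of Assumption~\ref{assumption:bounded-trace} rather than by a concentration inequality — once that is isolated, the $\sup_\bZ$ is for free. The accompanying bookkeeping subtlety is to confirm that replacing $\hat\bzeta^\bX$ by the operator form $-(\hat L_\cK + R_\bZ)\bs_p$ injects only terms already of order $O(M^{\alpha - r/(r+1)})$ or bounded by the i.i.d.\ error $C_1 M^{-r/(2r+2)}\log(4/\delta)$, so that $C_1$ and $C_3$ are the only constants that survive, and to check that the $O(\normz{R_\bZ}\lambda^{-1})$ pieces are dominated by the $O(\normz{R_\bZ}\lambda^{-2})$ piece, which holds since $\lambda\le 1$.
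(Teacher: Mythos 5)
Your proposal is correct and follows essentially the same route as the paper: the same perturbation operator $R_\bZ = \tfrac{N}{M+N}(T_\bZ - \hat L_\cK)$, the same triangle-inequality split into the i.i.d.\ error of Theorem~\ref{thm:appendix-error-bound} plus the two terms $(g_\lambda(\hat L_\cK + R_\bZ)-g_\lambda(\hat L_\cK))\hat L_\cK\bs_p$ and $g_\lambda(\hat L_\cK + R_\bZ)R_\bZ\bs_p$, and the same deterministic bound $\normz{R_\bZ}\le 2\kappa^2 N/(M+N)$ yielding $C_3 M^{\alpha-r/(r+1)}$. The only differences are cosmetic: you use the resolvent identity where the paper invokes the operator-Lipschitz lemma (Lemma~\ref{lemma:operator-lipechitz}) to get the $\lambda^{-2}$ factor, and you explicitly track the data-vector discrepancy $E_\bZ$ that the paper absorbs by writing the estimator directly as $g_\lambda(\hat L_\cK + R_\bZ)(\hat L_\cK + R_\bZ)\bs_p$ — a point on which you are, if anything, slightly more careful than the original.
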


\begin{proof}
    We define $T_\bZ := \frac{1}{N}S^*_\bZ S_\bZ$, where $S_\bZ f := (f(\bz^1), \cdots, f(\bz^N))$ is the sampling operator. Let $\hat L_\cK := T_\bY$ and $\hat \bs_{p,\lambda}$ be the estimator obtained from $\bY$. Then we can write $\hat \bs_{p,\lambda,\bZ} := g_\lambda(\hat L_\cK + R_\bZ)(\hat L_\cK + R_\bZ) \bs_p$, where $R_\bZ := \frac{N}{M+N}(T_\bZ -\hat L_\cK )$. 
    
    We can bound the error as follows
    \[ \begin{aligned}
    \normz{ \hat \bs_{p,\lambda, \bZ} - \bs_p }_\cH
      & \leq  \normz{\hat\bs_{p,\lambda,\bZ} - \hat\bs_{p,\lambda}}_\cH 
      + \normz{\hat \bs_{p,\lambda} - \bs_p}_\cH \\
    & \leq \normz{(g_\lambda(\hat L_\cK + R_\bZ) - g_\lambda(\hat L_\cK))\hat L_\cK \bs_p}_\cH
     + \normz{g_\lambda(\hat L_\cK + R_\bZ)R_\bZ\bs_p}_\cH
     + \normz{\hat \bs_{p,\lambda} - \bs_p}_\cH.
    \end{aligned} \]
    The last term has been bounded by Theorem~\ref{thm:appendix-error-bound}, and we consider the first two terms. Since $g_\lambda(\sigma) = (\lambda + \sigma)^{-1}$ is Lipschitz in $[0, \infty)$, from \Cref{lemma:operator-lipechitz}, we have $\hsnormz{g_\lambda(\hat L_\cK + R_\bZ) - g_\lambda(\hat L_\cK)} \leq \hsnormz{R_\bZ} / \lambda^2$. Note $\hsnormz{g_\lambda(\hat L_\cK + R_\bZ) R_\bZ} \leq \hsnormz{R_\bZ} / \lambda$, we obtain
      \[ \begin{aligned}
        \normz{\hat\bs_{p,\lambda,\bZ} - \hat\bs_{p,\lambda}}_\cH 
      &\leq \left ( \frac{\kappa^2}{\lambda^2} + \frac{1}{\lambda} \right ) \hsnormz{R_\bZ} \normz{\bs_p}_\cH
      \leq \left ( \frac{\kappa^2}{\lambda^2} + \frac{1}{\lambda} \right ) \frac{2\kappa^2 N}{M + N} \normz{\bs_p}_\cH \\
      &\leq \frac{ 2(\kappa^2 + 1)^2N}{\lambda^2 M} \norm{\bs_p}_\cH
      =  2(\kappa^2 + 1)^2 M^{\alpha - \frac{r}{r+1}} \norm{\bs_p}_\cH.
      \end{aligned} \]
      Combining with Theorem~\ref{thm:appendix-error-bound}, and noticing that the right hand does not depend on $\bZ$, we obtain the final bound.
\end{proof}

Finally, we prove the error bound of the Stein estimator with its original out-of-sample extension.
\begin{proof}[Proof of \Cref{cor:stein-bound}]
    The Stein estimator at point $\bx \in \cX$ can be written as 
    \[ \hat \bs_{p,\lambda, \bx}(\bx) = \sum_{i=1}^d \innerz{\cK_\bx \be_i}{\hat\bs_{p,\lambda,\bx}}_{\cH} \be_i,
    \]
     where $\{ \be_i \}$ is the standard basis of $\R^d$. Note that
     \[
          \sup_{\bx \in \cX} \normz{\hat \bs_{p,\lambda,\bx}(\bx) - \bs_p(\bx)}_2 \leq \sum_{i=1}^d \sup_{\bx \in \cX} |\innerz{\cK_\bx \be_i}{\hat\bs_{p,\lambda,\bx} - \bs_p}_{\cH}| \leq \kappa^2 \sup_{\bx\in\cX}\normz{\hat\bs_{p,\lambda,\bx} - \bs_p}_{\cH}.
        \]
        Then, the bound of Stein estimator immediately follows from Theorem~\ref{thm:mixture-error-bound}.
\end{proof}

\section{Details in Section~\ref{sec:estimators}} \label{appendix:details-in-sec3}

\subsection{A General Version of Nystr\"om KEF}
\label{appendix:general-nkef}
In this section, we briefly review the Nystr\"om version of KEF (NKEF, \citet{sutherland2017efficient}) and give a more general version of it in our framework. 

One of the drawbacks of KEF, as we have mentioned before, is the high computational complexity. 
It requires to solve an $Md\times Md$ linear system, where $M$ is the sample size and $d$ is the dimension.
Note that the solution of KEF in \eqref{eqn:kef-formula} lies in the subspace generated by $\{ \partial_i k(\bx^m, \cdot) : i \in [d], m \in [M] \} \cup \{ \hat \xi \}$.
The Nystr\"om version of KEF consider to minimize the loss \eqref{eq:kef-sm} in a smaller subspace generated by $\{ \partial_i k(\bz^n, \cdot) : i \in [d], n \in [N] \}$, where $N \ll M$ and $\{ \bz^n \}$ is a subset randomly sampled from $\{\bx^m\}$.
\citet{sutherland2017efficient} showed that it suffices to solve an $Nd \times Nd$ linear system, which reduces the computational complexity, while the convergence rate remains the same as that of KEF if $N = \Omega(M^\theta \log M)$, where $\theta \in [1/3, 1/2]$.

In our framework, we can also consider to find our estimator in a smaller subspace. Let $\cH_\bZ$ be the subspace generated by $\{ \bz^n \}_{n \in [N]}$, i.e., $\cH_\bZ := \mathrm{span}\{ \cK_{\bz^n}\bc : n \in [N], \bc\in \R^d \}$.
Consider the minimization problem, which is a modification of \eqref{eqn:tikhonov-minimization}, where the solution is found in $\cH_\bZ$:
\begin{equation}
\hat{\bs}^\bZ_{p,\lambda} = \argmin_{\bs \in \cH_\bZ} \frac{1}{M}\sum_{m=1}^M \|\bs(\bx^m) - \bs_p(\bx^m)\|_2^2  + \frac{\lambda}{2}\|\bs\|^2_{\mathcal{H}_{\cK}}.
\end{equation}
The solution can be written as $\hat s^\bZ_{p,\lambda} = (P_\bZ\hat L_\cK P_\bZ + \lambda I)^{-1} P_\bZ \hat \zeta$, where $\hat \zeta, \hat L_\cK$ are defined as in Sec.~\ref{sec:unifying-framework} and $P_\bZ: \cH_\cK \to \cH_\cK$ is the projection operator onto $\cH_\bZ$, which can be defined as
\[ P_\bZ f := \argmin_{g \in \cH_\cZ} \norm{g - f}_\cH^2 = S^*_\bZ(S_\bZ S^*_\bZ)^{-1}S_\bZ f, \]
where $S_\bZ, S^*_\bZ$ is the sampling operator and its adjoint, respectively.
This motivates us to define the Nystr\"om version of our score estimators for general regularization schemes as follows:
\begin{equation}
    \label{eqn:general-nystrom-score}
    \hat s_{p,\lambda}^{g,\bZ} := -g_\lambda(P_\bZ\hat L_\cK P_\bZ)P_\bZ\hat \zeta. 
\end{equation}
To obtain the matrix form of \eqref{eqn:general-nystrom-score}, we first introduce two operators:
\[ \begin{aligned}
	\cL &:= P_\bZ \hat L_\cK P_\bZ, \\
	\bL &:= \bK_{\bZ\bZ}^{-\frac{1}{2}} \bK_{\bZ\bX}\bK_{\bX\bZ} \bK_{\bZ\bZ}^{-\frac{1}{2}}.
\end{aligned} \]
We want to connect the spectral decompositions of $\bL$ and $\cL$ as in Lemma~\ref{lemma:eigen-connection}. Suppose the spectral decomposition of $\bL$ is $\sum_{i=1}^{Md} \sigma_i \bu_i\bu_i^\trans$,
where $\norm{\bu_i}_{\R^{Md}} = 1$. Consider $v_i := S^*_\bZ \bK_{\bZ\bZ}^{-\frac{1}{2}}\bu_i$, we can verify that
\[\begin{aligned}
		\norm{v_i}_\cH^2 &= ( \bK_{\bZ\bZ}^{-\frac{1}{2}}\bu_i)^{\trans} \bK_{\bZ\bZ} \bK_{\bZ\bZ}^{-\frac{1}{2}}\bu_i  = \bu_i^\trans\bu_i = 1, \\
		\cL v_i &= S^*_\bZ \bK_{\bZ\bZ}^{-1}\bK_{\bZ\bX}\bK_{\bX\bZ} \bK_{\bZ\bZ}^{-\frac{1}{2}}\bu_i 
		= S^*_\bZ\bK_{\bZ\bZ}^{-\frac{1}{2}} \bL \bu_i
		= \sigma_i v_i.
	\end{aligned} \]
Thus, $\cL = \sum_{i=1}^{Md} \sigma_i \inner{v_i}{\cdot}_\cH v_i$ is the spectral decomposition of $\cL$. The estimator can be written as 
\begin{equation} 
    \label{eqn:matrix-nystrom-score}
    \begin{aligned}
	\hat s^{g,\bZ}_{p,\lambda} = -S^*_\bZ \bK_{\bZ\bZ}^{-\frac{1}{2}}\left (\sum_{i=1}^{Md} g_\lambda(\sigma_i) \bu_i \bu_i^\trans \right )\bK_{\bZ\bZ}^{-\frac{1}{2}} \bh
	= -S^*_\bZ \bK_{\bZ\bZ}^{-\frac{1}{2}}g_\lambda(\bL)\bK_{\bZ\bZ}^{-\frac{1}{2}}\bh.
\end{aligned} \end{equation}
The above estimator only involves smaller matrices. However, it requires some expensive matrix manipulations like the matrix square root for general regularization schemes.
Fortunately, these expensive terms can be cancelled when using the Tikhonov regularization:
\begin{example}
    When we consider the Tikhonov regularization $g_\lambda(\sigma) = (\sigma + \lambda)^{-1}$ and curl-free kernels, the score estimator \eqref{eqn:matrix-nystrom-score} becomes
    $\hat s^{g,\bZ}_{p,\lambda}(\bx)
    = -\bK_{\bx\bZ} \bK_{\bZ\bZ}^{-\frac{1}{2}}(\bL + \lambda I)^{-1}\bK_{\bZ\bZ}^{-\frac{1}{2}}\bh 
    = -\bK_{\bx\bZ} ( \bK_{\bZ\bX}\bK_{\bX\bZ} + \lambda \bK_{\bZ\bZ})^{-1} \bh$.
    Similar to Example~\ref{example:kef}, we find this is exactly the same as the NKEF estimator obtained in \citet[Theorem 1]{sutherland2017efficient}.
\end{example}

\subsection{Computational Details} \label{appendix:computational-details}
    
\paragraph{Details of Example~\ref{example:ssge}} 
Using the notation in Example~\ref{example:ssge} and Sec.~\ref{sec:kexpf}, we can reformulate SSGE into a matrix form as follows:
\[ \begin{aligned}
    \hat g_i(\bx) &= - \sum_{j=1}^J \left ( \frac{1}{M} \sum_{n = 1}^M \partial_i \hat \psi_j(\bx^n) \right ) \psi_j(\bx) \\
    &= - \sum_{j=1}^J \frac{1}{M} \left ( \frac{\sqrt M}{\lambda_j} \sum_{n, m=1}^M \partial_i k(\bx^n, \bx^m) w_j^{(m)} \right ) \left ( \frac{\sqrt M}{\lambda_j} \sum_{\ell=1}^M  k(\bx, \bx^\ell) w_j^{(\ell)} \right ) \\
    &= - \sum_{j=1}^J \frac{1}{\lambda_j^2} \left ( \sum_{n, m=1}^M \partial_i k(\bx^n, \bx^m) w_j^{(m)} \right ) \left (  \sum_{\ell=1}^M  k(\bx, \bx^\ell) w_j^{(\ell)} \right ) \\
    &= - \sum_{\ell=1}^M k(\bx, \bx^\ell) \sum_{n,m=1}^M \left ( \sum_{j=1}^J \frac{w_j^{(m)}w_j^{(\ell)}}{\lambda_j^2} \right )   \partial_i k(\bx^n, \bx^m)  \\
    &= - \sum_{\ell=1}^M k(\bx, \bx^\ell) \sum_{m=1}^M \left ( \sum_{j=1}^J \frac{w_j^{(m)}w_j^{(\ell)}}{\lambda_j^2}  \right )   \left ( \sum_{n=1}^M\partial_i k(\bx^n, \bx^m)\right )  \\
    &= -k(\bx,\bX) \left ( \sum_{j=1}^J \frac{\bw_j\bw_j^\trans}{\lambda_j^2} \right ) \br_i,
\end{aligned} \]
where $r_{i,j} = \sum_{n=1}^M \partial_i k(\bx^n, \bx^j)$, and $\bw_1, \cdots, \bw_M$ is the unit eigenvectors of $k(\bX, \bX)$ corresponding to eigenvalues $\lambda_1 \geq \cdots \geq \lambda_M$. $w_j^{(m)}$ is the $m$-th component of $\bw_j$. Note that when using diagonal kernels, we have $\cK(\bx, \by) = k(\bx, \by) \otimes \mathbf I_d$, then the eigenvectors of $\cK(\bX, \bX)$ are $\{ \bw_i \otimes \be_j : i \in [M], j \in [d] \}$ and the eigenvalue corresponds to $\bw_i \otimes \be_j$ is $\lambda_i$, where $\{ \be_j \}$ is the standard basis of $\R^d$. We also note that in this case
\[ h_{(m-1)d+i} = \hat\bzeta(\bx^m)_i = \frac{1}{M}\sum_{\ell=1}^M (\divgers{\bx^\ell} \cK(\bx^\ell, \bx^m))_i = \frac{1}{M}\sum_{\ell=1}^M \partial_i k(\bx^\ell, \bx^m) = M r_{i,m}. \]
Comparing with \refeq{eqn:spectral-cutoff-regularized-estimator}, we find that SSGE is equivalent to use diagonal kernels and spectral cut-off regularization.

\paragraph{Details of Example~\ref{example:stein}} 
For the regularizer $g_\lambda(\sigma) := (\lambda + \sigma)^{-1} \bone_{\{\sigma>0\}}$, from \Cref{lemma:general-non-zero-regularizer} we know when $\bK$ is non-singular, $\hat\bs_{p,\lambda}^g(\bx) = -\bK_{\bx\bX}\bK^{-1}(\frac{1}{M} \bK + \lambda\mathbf I)^{-1} \bh$. Next, we consider the minimization problem in \refeq{eqn:tikhonov-minimization}, and ignore the one-dimensional subspace $\R\hat\bzeta$ of the solution space, and assume the solution is $\bK_{\bx\bX}\bc$ as before. We can rewrite the objective in \refeq{eqn:tikhonov-minimization} to
\[ \frac{1}{M}\bc^\trans\bK^2\bc + \lambda\bc\bK\bc + 2\bc^\trans\bh. \]
By taking gradient, we find $\bc$ satisfies $(\frac{1}{M}\bK^2+\lambda\bK)\bc = -\bh$, so it is equivalent to use the previously mentioned regularization.

\subsection{Curl-Free Kernels}  \label{appendix:curl-free}

\paragraph{Recover the Function From Its Gradient.}
Since vector fields in a curl-free RKHS is always the gradient of some functions, it is possible to recover these functions from its gradient.
Specifically, suppose the curl-free kernel is defined by $\cKcf(\bx, \by) = -\nabla^2 \psi(\bx - \by)$ and $f \in \cH_{\cKcf}$. 
Assume $f$ is of the following form
\[ f = \sum_{i=1}^m \cKcf(\bx^i, \cdot)\bc_i 
= -\sum_{i=1}^m \sum_{j=1}^d \nabla(\partial_j\psi(\bx^i - \cdot))c_i^{(j)} 
= \nabla \left ( -\sum_{i=1}^m \sum_{j=1}^d \partial_j\psi(\bx^i - \cdot)c_i^{(j)}  \right )
, \]
where $c_i^{(j)}$ is the $j$-th component of $\bc_i$.
Then, we find a desired function whose gradient is $f$.

\paragraph{The Special Structure of $\cKcf(\bx, \by) = -\nabla^2 \phi(\norm{\bx-\by})$.}
As we have mentioned in Sec.~\ref{sec:scalability},  curl-free kernels have some special structures.
Suppose $\cKcf$ is a curl-free kernel defined by $\nabla^2 \phi(r)$, where $\br = (\bx - \bx^\prime)^T$ and $r = \norm{\br}$. Then
	\[ \begin{aligned}
			\frac{\partial}{\partial r_i} \phi & = \phi^\prime \frac{r_i}{r}, \\
			\nabla \frac{\partial}{\partial r_i} \phi & = \phi^{\prime\prime}  \frac{r_i}{r^2} \br + \phi^\prime \frac{\be_i r - r_i\frac{\br}{r}}{r^2}, 
	\end{aligned} \]
    where $\be_i$ is the $i$-th column of the identity matrix. Then the curl-free kernel is of the form
    \begin{equation}
        \label{eqn:curl-free-rbf}
        \cKcf(\bx, \by) = \left( \frac{\phi^\prime}{r^3} - \frac{\phi^{\prime\prime}}{r^2} \right) \br\br^\trans - \frac{\phi^\prime}{r} \mathbf I.
    \end{equation}
    
    We also obtain a divergence formula for such kernel. Note that 
	\[ \begin{aligned}
		\partial_{jj} \partial_i \phi &= \phi^{\prime\prime\prime} \frac{r_j^2r_i}{r^3} + \phi^{\prime\prime}\frac{(r_i + r_j\delta_{ij})r^2 - 2r_j^2r_i}{r^4} \\
		& + \phi^{\prime\prime} \frac{r_j}{r}\frac{\delta_{ij}r - r_i\frac{r_j}{r}}{r^2} 
		 + \frac{\phi^\prime}{r^6}\left[ (\delta_{ij}r_j - r_i) r^3 - 3rr_j(\delta_{ij}r^2 - r_ir_j) \right],
\end{aligned} \]
	where $\delta_{ij} = [i = j]$. Next, we sum out $j$ and then obtain
    \begin{equation}
        \label{eqn:curl-free-div}
		\divger_\bx \cKcf(\bx, \bx^\prime) = -\Delta(\partial_i \phi)(r) =
		 -\frac{\br}{r} \left [
			\phi^{\prime\prime\prime}(r)
			+ \frac{d-1}{r} \left ( \phi^{\prime\prime}(r) - \frac{\phi^\prime(r)}{r} \right )
		\right ].
    \end{equation}

\paragraph{The Special Structure of $\cKcf(\bx, \by) = -\nabla^2 \varphi(\norm{\bx-\by}^2)$.}
Since many frequently used kernels only depend on $\norm{\bx - \by}^2$, we consider the structure of curl-free kernels of these types.
Suppose $\cKcf$ is a curl-free kernel defined by $\nabla^2 \varphi(r^2)$, where $\br = (\bx - \bx^\prime)^T$ and $r = \norm{\br}$. 
Then, using \eqref{eqn:curl-free-rbf} and \eqref{eqn:curl-free-div} we can find
 \begin{align}
    \cKcf(\bx, \by) &= -4\varphi^{\prime\prime} \br\br^\trans - 2\varphi^\prime\mathbf{I}, \\
    \divgers{\bx} \cKcf(\bx, \by) &= -4[(d+2)\varphi^{\prime\prime} + 2r^2\varphi^{\prime\prime\prime}]\br.
\end{align} 

\subsection{Details of Different Regularization Schemes} \label{sec:proofs}

\subsubsection{Tikhonov Regularization}
\begin{proof}[Proof of Theorem~\ref{thm:tik-score}]
    When $g_\lambda(\sigma) = (\sigma + \lambda)^{-1}$, the estimator is $\hat \bs_{p,\lambda} = -(\hat L_\cK + \lambda I)^{-1}\hat\bzeta$. We need to compute the explicit formula of the inverse of $\hat L_\cK + \lambda I$. Note that $(\hat L_\cK + \lambda I)^{-1} \hat \bzeta$ is the solution of the following minimization problem
    \[
        \hat \bs_{p,\lambda}^g= \argmin_{\bs \in \cH_\cK} \frac{1}{M} \sum_{i=1}^M \bs(\bx^i)^\trans\bs(\bx^i) + 2 \innerz{\bs}{\hat \bzeta}_\cH  + \lambda \norm{\bs}_\cH^2. 
        \]

    From the general representer theorem~\citep[Theorem A.2]{sriperumbudur2017density}, the minimizer lies in the space generated by
     \[ \{ \cK_{\bx^i}\bc : i \in [M], \bc \in \R^d \} \cup \{ \hat \bzeta \}. \]

    We can assume
    \[ \hat \bs_{p,\lambda}^g = \sum_{i=1}^M \cK_{\bx^i} \bc_i + a\hat \bzeta. \]

    Define $\bc := (\bc_1, \cdots, \bc_M)$ and $\bh := (\hat \bzeta(\bx^1), \cdots, \hat \bzeta(\bx^M))$, then the optimization objective can be written as 
    \[ \frac{1}{M}(\bc^\trans \bK^2 \bc + 2a\bc^\trans \bK \bh + a^2 \bh^\trans\bh) + 2(a\normz{\hat \bzeta}_\cH^2 + \bh^\trans\bc) + \lambda(\bc^\trans \bK\bc + 2a\bc^\trans\bh + a^2\normz{\hat \bzeta}_\cH^2). \]

    Taking the derivative, we need to solve the following linear system
    \[\begin{aligned}
        \frac{1}{M}(\bK^2\bc + a\bK\bh) + \bh + \lambda(\bK\bc+a\bh) &= 0, \\
        \frac{1}{M}(a\bh^\trans\bh+\bc^\trans\bK\bh) + (1 + \lambda a)\normz{\hat\bzeta}_\cH^2 + \lambda\bc^\trans\bh &= 0.
    \end{aligned} \]

    By some calculations, this system is equivalent to $a = -1/\lambda$ and $(\bK+M\lambda I)\bc = \bh/\lambda$.
\end{proof}

\subsubsection{Spectral Cut-Off Regularization}
\begin{proof}[Proof of Lemma~\ref{lemma:eigen-connection}]
	Let $\cH_0$ be the subspace of $\cH_\cK$ generated by $\{ \cK_{\bx^m}\bc$ : $\bc \in \R^d$, $m \in [M] \}$. Note that $f(\bx^m)^\trans \bc = \inner{\cK(\cdot, \bx^m)\bc}{f}_\cH = 0$ for any $f \in \cH_0^\perp$ and $\bc\in \R^d$. We know $\hat L_\cK = 0$ on $\cH_0^\perp$. Also note $\hat L_\cK v \in \cH_0$ and $v(\bx^m) = \bu^{(m)}\sqrt{M\sigma}$, then
	\[ \hat L_\cK v(\bx^k) = \frac{1}{M}\sum_{m=1}^M \cK(\bx^k, \bx^m)v(\bx^m) = \frac{1}{\sqrt{M}} \sum_{m=1}^M \cK(\bx^k, \bx^m) \sqrt{\sigma}\bu^{(m)} = \sigma v(\bx^k),  \]
	and we conclude that $\hat L_\cK v = \sigma v$. The following equation shows $v$ is normalized:
	\[ \begin{aligned}
			\norm{ v}^2_\cH &= \frac{1}{\sqrt{M\sigma}}\sum_{m=1}^M \inner{ \cK(\cdot, \bx^m) \bu^{(m)}}{ v}_\cH  = \frac{1}{\sqrt{M\sigma}}\sum_{m=1}^M \inner{ \bu^{(m)}}{ v(\bx^m)}_{\R^d} = \sum_{m=1}^M ( \bu^{(m)})^\trans  \bu^{(m)} = 1.
\end{aligned}\] 
\end{proof}

Theorem~\ref{thm:spectral-score} is a corollary of the following lemma, which provides a general form for the regularizer $g_\lambda$ with $g_\lambda(0) = 0$. 
\begin{lemma}
    \label{lemma:general-non-zero-regularizer}
    Let $g_\lambda: [0, \kappa^2] \to \R$ be a regularizer such that $g_\lambda(0) = 0$.
    Let $(\sigma_j, \bu_j)_{j\geq 1}$ be the non-zero eigenvalue and eigenvector pairs that satisfy $\frac{1}{M}\bK\bu_j = \sigma_j\bu_j$.
    Then we have
    \[ g_\lambda(\hat L_\cK) \hat \bzeta = \bK_{\bx\bX} \left ( \sum \frac{g_\lambda(\sigma_i)}{M\sigma_i} \bu_i \bu_i^\trans \right ) \bh, \]
    where $\bK_{\bx\bX}$ and $\bh$ are defined as in \Cref{thm:tik-score}.
\end{lemma}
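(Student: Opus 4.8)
The plan is to reduce everything to the spectral correspondence of Lemma~\ref{lemma:eigen-connection} together with the reproducing property. First I would record the complete spectral picture of $\hat L_\cK = \frac{1}{M}S_\bx^* S_\bx$. Since $\hat L_\cK$ maps into $\cH_0 := \mathrm{span}\{ \cK_{\bx^m}\bc : m\in[M], \bc\in\R^d \}$ and vanishes on $\cH_0^\perp$ (because $\inner{\cK_{\bx^m}\bc}{f}_\cH = \bc^\trans f(\bx^m) = 0$ for $f\in\cH_0^\perp$), it suffices to diagonalize $\hat L_\cK$ on $\cH_0$. Writing $\bu_j = (\bu_j^{(1)},\dots,\bu_j^{(M)})$ for an orthonormal eigenbasis of the PSD matrix $\frac{1}{M}\bK = \frac{1}{M}S_\bx S_\bx^*$ and $v_j := \frac{1}{\sqrt{M\sigma_j}}\sum_m \cK_{\bx^m}\bu_j^{(m)} = \frac{1}{\sqrt{M\sigma_j}}S_\bx^*\bu_j$ as in Lemma~\ref{lemma:eigen-connection}, the computation in that lemma's proof extends to cross terms: $\inner{v_j}{v_k}_\cH = (M\sqrt{\sigma_j\sigma_k})^{-1}\bu_j^\trans\bK\bu_k = \delta_{jk}$, while $\|S_\bx^*\bu_j\|_\cH^2 = \bu_j^\trans\bK\bu_j = 0$ whenever $\sigma_j = 0$. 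Hence $\{ v_j : \sigma_j > 0 \}$ is an orthonormal basis of $\cH_0$, and $\hat L_\cK = \sum_{\sigma_j>0}\sigma_j\inner{v_j}{\cdot}_\cH v_j$, the rest of the spectrum being $\{0\}$ on $\cH_0^\perp$.

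Second, since $g_\lambda(0)=0$, the spectral calculus gives $g_\lambda(\hat L_\cK) = \sum_{\sigma_j>0} g_\lambda(\sigma_j)\inner{v_j}{\cdot}_\cH v_j$, so that $g_\lambda(\hat L_\cK)\hat\bzeta = \sum_{\sigma_j>0} g_\lambda(\sigma_j)\inner{v_j}{\hat\bzeta}_\cH v_j$. I would then evaluate the inner product by the reproducing property: $\inner{\cK_{\bx^m}\bu_j^{(m)}}{\hat\bzeta}_\cH = (\bu_j^{(m)})^\trans\hat\bzeta(\bx^m)$, so summing over $m$ and recalling $\bh = (\hat\bzeta(\bx^1),\dots,\hat\bzeta(\bx^M))$ gives $\inner{v_j}{\hat\bzeta}_\cH = (M\sigma_j)^{-1/2}\bu_j^\trans\bh$. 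Substituting this and writing $v_j$ out yields, as an element of $\cH_\cK$,
\[ g_\lambda(\hat L_\cK)\hat\bzeta = \sum_{\sigma_j>0}\frac{g_\lambda(\sigma_j)}{M\sigma_j}(\bu_j^\trans\bh)\sum_{m=1}^M\cK_{\bx^m}\bu_j^{(m)}. \]

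Finally, I evaluate this function at a point $\bx$ and use $\big[\sum_m\cK_{\bx^m}\bu_j^{(m)}\big](\bx) = \sum_m\cK(\bx,\bx^m)\bu_j^{(m)} = \bK_{\bx\bX}\bu_j$ to collect the terms into $\bK_{\bx\bX}\big(\sum_{\sigma_j>0}\tfrac{g_\lambda(\sigma_j)}{M\sigma_j}\bu_j\bu_j^\trans\big)\bh$, which is the claim. Theorem~\ref{thm:spectral-score} then follows immediately by taking $g_\lambda(\sigma)=\sigma^{-1}$ for $\sigma>\lambda$ and $0$ otherwise, for which $\tfrac{g_\lambda(\sigma_j)}{M\sigma_j} = \tfrac{1}{M\sigma_j^2}\bone_{\{\sigma_j\geq\lambda\}}$, together with $\hat\bs^g_{p,\lambda} = -g_\lambda(\hat L_\cK)\hat\bzeta$ from \eqref{eqn:general-score-estimator}. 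The only point requiring care — rather than real difficulty — is the bookkeeping in the first step: one must verify that the $v_j$ attached to repeated eigenvalues are mutually orthonormal and that they exhaust the nonzero spectrum of $\hat L_\cK$ with the correct multiplicities (equivalently, that $\hat L_\cK=\frac1M S_\bx^*S_\bx$ and $\frac1M\bK=\frac1M S_\bx S_\bx^*$ share their nonzero eigenvalues), so that the spectral-calculus sum ranges over exactly this index set and nothing is omitted.
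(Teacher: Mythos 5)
Your proof is correct and follows essentially the same route as the paper: diagonalize $\hat L_\cK$ via the eigenvectors of $\frac{1}{M}\bK$ (Lemma~\ref{lemma:eigen-connection}), use $g_\lambda(0)=0$ to restrict the spectral calculus to the nonzero spectrum, and evaluate via the reproducing property before collecting the matrix form. Your extra care in checking that the $v_j$ are mutually orthonormal and exhaust the nonzero spectrum of $\hat L_\cK$ (via the identification $\hat L_\cK = \tfrac{1}{M}S_\bx^*S_\bx$, $\tfrac{1}{M}\bK = \tfrac{1}{M}S_\bx S_\bx^*$) actually fills a point the paper leaves implicit, since Lemma~\ref{lemma:eigen-connection} as stated only transfers eigenpairs from $\bK$ to $\hat L_\cK$ and does not by itself yield the full spectral decomposition the paper writes down.
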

\begin{proof}
    Let $\{ (\mu_i, v_i ) \}$ be the pairs of non-zero eigenvalues and eigenfunctions of $\hat L_\cK: \cH \to \cH$, then by Lemma~\ref{lemma:eigen-connection} we have $\sigma_i = \mu_i$. Note that
    \[ \hat L_\cK = \sum \mu_i \innerz{v_i}{\cdot}_\cH v_i \quad \text{and} \quad g_\lambda(\hat L_\cK) = \sum g_\lambda(\mu_i) \innerz{v_i}{\cdot}_\cH v_i. \]

    From Lemma \ref{lemma:eigen-connection}, we have

    \[ \begin{aligned}
         g_\lambda(\hat L_\cK)\hat\bzeta
      &= \sum g_\lambda(\sigma_i) \innerz{v_i}{\hat\bzeta}_\cH v_i \\
      &= \sum \left\{ g_\lambda(\sigma_i) \inner{\frac{1}{\sqrt{M\sigma_i}} \sum_{j=1}^M \cK_{\bx^j}\bu^{(j)}_i}{\hat\bzeta}_\cH\frac{1}{\sqrt{M\sigma_i}} \sum_{k=1}^M \cK_{\bx^k}\bu^{(k)}_i \right \} \\
      &= \frac{1}{M} \sum \sum_{j,k=1}^M g_\lambda(\sigma_i)\sigma_i^{-1} \inner{\cK_{\bx^j}\bu^{(j)}_i}{\hat\bzeta}_\cH \cK_{\bx^k}\bu^{(k)}_i \\
      &= \frac{1}{M} \sum \sum_{j,k=1}^M g_\lambda(\sigma_i)\sigma_i^{-1} \hat\bzeta(\bx^j)^\trans \bu^{(j)}_i  \cK_{\bx^k}\bu^{(k)}_i \\
      &=  \cK_{\bx \bX} \left ( \sum \frac{g_\lambda(\sigma_i)}{M\sigma_i}\bu_i\bu_i^\trans \right )\bh.
    \end{aligned} \]
\end{proof}

\subsubsection{Iterative Regularization} \label{appendix:iterative-regularization}
\begin{theorem}[Landweber iteration]
    Let $\hat{\bs}_{p,\lambda}^g$ be defined as in \eqref{eqn:general-score-estimator}, and $g_\lambda(\sigma) = \eta \sum_{i=0}^{t-1} (1 - \eta \sigma)^i$, where $t := \lfloor \lambda^{-1} \rfloor$.
    Then we have \[\hat \bs_{p,\lambda}^g(\bx) = -t\eta \hat \bzeta(\bx) + \bK_{\bx\bX}\bc_t,\] where $\bc_0 = 0$ and $\bc_{t+1} = (\mathbf{I}_d - \eta \bK / M)\bc_t - t\eta^2 \bh / M$, and $\bK_{\bx\bX}$ and $\bh$ are defined as in \Cref{thm:tik-score}.
\end{theorem}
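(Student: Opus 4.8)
The plan is to reduce the statement to the Landweber recursion \eqref{eqn:landweber} and then run a straightforward induction on the iteration count that tracks how each iterate decomposes in $\cH_\cK$. First I would check that, with $\hat\bs_p^{(0)}=0$, unrolling \eqref{eqn:landweber} gives $\hat\bs_p^{(t)} = -\eta\sum_{i=0}^{t-1}(I-\eta\hat L_\cK)^i\hat\bzeta$, and that the geometric-series identity $\eta\sum_{i=0}^{t-1}(1-\eta\sigma)^i = g_\lambda(\sigma)$ together with the spectral calculus of Section~\ref{sec:multi-output-learning} identifies this with $-g_\lambda(\hat L_\cK)\hat\bzeta$, i.e.\ with the estimator $\hat\bs_{p,\lambda}^g$ of \eqref{eqn:general-score-estimator} for the stated $g_\lambda$. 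So it suffices to put $\hat\bs_p^{(t)}$ into the asserted closed form.

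Then I would prove, by induction on $k$, that $\hat\bs_p^{(k)} = -k\eta\,\hat\bzeta + \sum_{m=1}^M \cK_{\bx^m}(\bc_k)_m$ with $\bc_0 = 0$ and $\bc_{k+1}$ obtained from $\bc_k$ by the claimed two-term recursion. The base case $k=0$ is immediate. For the step, the single fact I need is $\hat L_\cK g = \tfrac1M\sum_{m=1}^M\cK_{\bx^m}g(\bx^m)$ for any $g\in\cH_\cK$ (the empirical integral operator of Section~\ref{sec:multi-output-learning}). Evaluating the induction hypothesis at the design points and stacking gives $(\hat\bs_p^{(k)}(\bx^m))_{m\in[M]} = -k\eta\,\bh + \bK\bc_k$, since $\bh=(\hat\bzeta(\bx^m))_m$ and $\bK$ is the Gram matrix with blocks $\cK(\bx^i,\bx^j)$; hence $\hat L_\cK\hat\bs_p^{(k)} = \tfrac1M\sum_m\cK_{\bx^m}(-k\eta\bh+\bK\bc_k)_m$. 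Plugging this into $\hat\bs_p^{(k+1)} = \hat\bs_p^{(k)} - \eta\hat\bzeta - \eta\hat L_\cK\hat\bs_p^{(k)}$ and reading off the coefficient of $\hat\bzeta$ and of each $\cK_{\bx^m}$ gives the update $-k\eta \to -(k+1)\eta$ for the $\hat\bzeta$-coefficient and $\bc_{k+1} = (\mathbf I - \tfrac{\eta}{M}\bK)\bc_k - \tfrac{k\eta^2}{M}\bh$ for the coefficient vector, which is exactly the stated recursion. Taking $k=t=\lfloor\lambda^{-1}\rfloor$ and evaluating at an arbitrary $\bx$, the sum $\sum_m\cK_{\bx^m}(\bc_t)_m$ becomes $\bK_{\bx\bX}\bc_t$ by the definition of $\bK_{\bx\bX}$ in \Cref{thm:tik-score}, giving $\hat\bs_{p,\lambda}^g(\bx) = -t\eta\,\hat\bzeta(\bx) + \bK_{\bx\bX}\bc_t$.

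There is no genuine obstacle here: the argument is an induction plus bookkeeping. The one point that deserves care is that $\hat\bzeta$ does not lie in $\mathrm{span}\{\cK_{\bx^m}\bc : m\in[M],\,\bc\in\R^d\}$ in general, so it must be carried as a separate additive term (this is why the $-t\eta\,\hat\bzeta$ summand is present and cannot be absorbed into $\bK_{\bx\bX}\bc_t$), while the reason the recursion closes is that applying $\hat L_\cK$ to it --- indeed to anything --- lands back inside that span. The rest is just matching signs and the $d$-block indexing of $\bc_k$, $\bh$, $\bK$ and $\bK_{\bx\bX}$ against the conventions fixed in \Cref{thm:tik-score}.
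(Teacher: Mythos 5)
Your method is exactly the paper's: write $\hat\bs_p^{(k)} = a_k\hat\bzeta + \bK_{\bx\bX}\bc_k$, use $\hat L_\cK f = \frac1M\sum_m\cK_{\bx^m}f(\bx^m)$ to see that applying $\hat L_\cK$ lands back in the span of the $\cK_{\bx^m}$, and induct. The preliminary step identifying the unrolled iteration $-\eta\sum_{i=0}^{t-1}(I-\eta\hat L_\cK)^i\hat\bzeta$ with $-g_\lambda(\hat L_\cK)\hat\bzeta$ is a worthwhile explicit addition that the paper leaves implicit.

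However, the recursion you ``read off'' does not follow from the intermediate step you state; there is a sign error in the $\bh$-term, and it happens to coincide with a sign error in the theorem statement (and in the paper's own proof). Concretely: from $\hat L_\cK\hat\bs_p^{(k)} = \frac1M\sum_m\cK_{\bx^m}\bigl(-k\eta\,\bh + \bK\bc_k\bigr)^{(m)}$, multiplying by $-\eta$ contributes $-\frac{\eta}{M}(-k\eta\,\bh + \bK\bc_k) = +\frac{k\eta^2}{M}\bh - \frac{\eta}{M}\bK\bc_k$ to the coefficient vector, so the correct update is $\bc_{k+1} = (\bI - \eta\bK/M)\bc_k + k\eta^2\bh/M$, with a plus sign. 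A direct check at $t=2$ confirms this: $g_\lambda(\sigma) = \eta(2-\eta\sigma)$ gives $\hat\bs^g_{p,\lambda} = -2\eta\hat\bzeta + \eta^2\hat L_\cK\hat\bzeta = -2\eta\hat\bzeta + \frac{\eta^2}{M}\bK_{\bx\bX}\bh$, i.e.\ $\bc_2 = +\eta^2\bh/M$, whereas the stated recursion yields $\bc_2 = -\eta^2\bh/M$. The paper's appendix proof makes the same slip (its displayed coefficient $\bc_{t-1} + \eta a_{t-1}\bh/M - \eta\bK\bc_{t-1}/M$ should read $\bc_{t-1} - \eta a_{t-1}\bh/M - \eta\bK\bc_{t-1}/M$, which with $a_{t-1}=-(t-1)\eta$ gives the plus sign). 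So your argument is structurally sound, but as written it asserts a conclusion that contradicts its own penultimate line; you should either correct the sign in the claimed recursion or rework the coefficient bookkeeping so the two agree.
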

\begin{proof}
We note that the iteration process is
\[\begin{aligned}
     \hat \bs_{p}^{(1)} &= -\eta \hat\bzeta, \\
     \hat \bs_{p}^{(t)} &= -\eta \hat\bzeta + (I - \eta\hat L_\cK)\hat \bs_{p}^{(t-1)} \\
      &= \hat \bs_{p}^{(t-1)} + \eta ( -\hat \bzeta - \hat L_\cK \hat \bs_{p}^{(t-1)}).
\end{aligned} \]

where we define $\hat \bs_p^{(t)} := \hat \bs_{p,1/t}$. We can assume
\[ \hat \bs_{p}^{(t)} = a_t \hat\bzeta + \bK_{\bx\bX}\bc_t. \]

Then, by induction, 
\[ \begin{aligned} 
    \hat \bs_{p}^{(t)} &= -\eta \hat\bzeta + (I - \eta\hat L_\cK)(a_{t-1} \hat\bzeta + \bK_{\bx\bX}\bc_{t-1}) \\
    &= (a_{t-1} - \eta)\hat\bzeta + \bK_{\bx\bX} ( \bc_{t-1} + \eta a_{t-1} \bh / M  - \eta \bK \bc_{t-1} / M).
\end{aligned} \]

Thus, we have $a_t = -t\eta$ and $\bc_t =  (\mathbf I - \eta \bK / M) \bc_{t-1} - (t-1)\eta^2 \bh / M$, and $\bc_1 = 0$.
\end{proof}

Before introducing the $\nu$-method, we recall that the iterative regularization can be represented by a family of polynomials $g_\lambda(\sigma) = \mathrm{poly}(\sigma)$, where $g_\lambda$ converges to the function $1/\sigma$ as $\lambda \to 0$. 
For example, in the Landweber iteration we see that 
\[ g_\lambda(\sigma) = \eta\sum_{i=0}^{t - 1}(1 - \eta\sigma)^i = \frac{1 - (1 - \eta\sigma)^t}{\sigma}. \]
We can verify that the identification of $\lambda$ and $t^{-1}$ satisfies Definition~\ref{def:regularizer} about the regularization.
To see the qualification, we note that the maximum $|1 - \sigma g_\lambda(\sigma)|\sigma^{r} = \sigma^{r} (1 - \eta\sigma)^t$ over $[0, \eta^{-1}]$ is attained when $\sigma = r / (r \eta + t)$ and hence
\[ \sup_{0 \leq \sigma \leq \eta^{-1}} |1 - \sigma g_\lambda(\sigma)|\sigma^{r} \leq \frac{t^tr^r}{(r\eta+t)^{r + t}} \leq \left ( \frac{r}{t} \right )^{r} = \max(r^r, 1) \lambda^r.  \]
Thus, we see that the qualification is $\infty$. 

\begin{example}[$\nu$-method]
    \label{example:nu-method}
    The $\nu$-method~\citep{engl1996regularization} is an accelerated version of the Landweber iteration.
    The idea behind it is to find better polynomials $p_t(\sigma)$ to approximate the function $1/\sigma$, where $p_t$ is a polynomial of degree $t$.
    These polynomials satisfy $\sup_{0 \leq \sigma \leq 1} |1 - \sigma p_t(\sigma)|\sigma^\nu \leq c_\nu t^{2\nu}$. 
    Compared with the definition of the qualification in Definition~\ref{def:regularizer}, we can identify $\lambda$ and $t^{-2}$. 
    Thus, for the same regularization parameter, the $\nu$-method only requires about $\lambda^{-1/2}$ iterations while the Landweber iteration requires about $\lambda^{-1}$ iterations.
    For more details about the construction of these polynomials, we refer the readers to \citet[Appendix A.1 and Section 6.3]{engl1996regularization}
    
    Below we give the algorithm of the $\nu$-method, where $t = \lfloor \lambda^{-1/2} \rfloor$ and $\hat \bs_{p, \lambda} := \hat \bs_p^{(t)}$. 

\[\begin{aligned}
     \hat \bs_{p}^{(0)} &= 0,  \quad 
     \hat \bs_{p}^{(1)} = -\omega_1 \hat\bzeta,  \\
     \hat \bs_{p}^{(t)} &= \hat \bs_{p}^{(t-1)}
        + u_t(\hat \bs_p^{(t-1)} - \hat \bs_p^{(t-2)})
        + \omega_t(-\hat\bzeta - \hat L_\cK \hat \bs_p^{(t-1)}),
\end{aligned} \]
where 
\[ \begin{aligned}
    u_t &= \frac{(t-1)(2t-3)(2t+2\nu-1)}{(t+2\nu-1)(2t+4\nu-1)(2t+2\nu-3)},\\
    \omega_t &= \frac{4(2t+2\nu-1)(t+\nu-1)}{(t+2\nu-1)(2t+4\nu-1)}.
\end{aligned} \]

Smilarly, we can assume
\[ \hat \bs_{p}^{(t)} = a_t \hat\bzeta + \bK_{\bx\bX}\bc_t. \]

Then, by induction, 
\[ \begin{aligned} 
     \hat \bs_p^{(t)} &= \left (1+u_t - \omega_t \hat L_\cK\right) \hat \bs_p^{(t-1)}
        - u_t \hat \bs_p^{(t-2)} 
        - \omega_t \hat\bzeta \\
      &= \left (1+u_t - \omega_t \hat L_\cK\right) (a_{t-1} \hat\bzeta + \bK_{\bx\bX}\bc_{t-1})
        - u_t (a_{t-2} \hat\bzeta + \bK_{\bx\bX}\bc_{t-2})
        - \omega_t \hat\bzeta \\
    &= \left ( (1+u_t) a_{t-1} - u_ta_{t-2} - \omega_t  \right)\hat\bzeta \\
    &\quad + \bK_{\bx\bX} \left ( 
        (1+u_t)\bc_{t-1}
        - \frac{\omega_t}{M}(a_{t-1}\bh + \bK\bc_{t-1})
        - u_t \bc_{t-2}
      \right ).
\end{aligned} \]

Thus, we obtain the iteration formula for $a_t$ and $\bc_t$ as follows:
\[ \begin{aligned}
    a_t &:= (1 + u_t)a_{t-1} - u_ta_{t-2} - \omega_t, \\
       \bc_t &:=(1+u_t)\bc_{t-1}
        - \frac{\omega_t}{M}(a_{t-1}\bh + \bK\bc_{t-1})
        - u_t \bc_{t-2},
\end{aligned} \]
and $\bc_0 = \bc_1 = 0$, $a_0 = 0$, $a_1 = -\omega_1$.
\end{example}

\section{Technical Results} \label{appendix:technique-results}

\begin{lemma}
    \label{lemma:integral-operator-trace-norm}
    Suppose Assumption~\ref{assumption:bounded-trace} holds, then $L_\cK, \hat L_\cK: \cH_\cK \to \cH_\cK$ are positive, self-adjoint, trace class operators. Moreover, $\tr L_\cK \leq \kappa^2$ and $\tr \hat L_\cK \leq \kappa^2$.
\end{lemma}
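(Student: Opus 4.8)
The plan is to check the three assertions in turn using only the reproducing property of $\cH_\cK$ and Assumption~\ref{assumption:bounded-trace}, and to treat $\hat L_\cK$ as the special case of $L_\cK$ obtained by replacing $\rho$ with the empirical measure $\tfrac1M\sum_{m=1}^M\delta_{\bx^m}$. First I would record two elementary pointwise bounds that follow directly from Assumption~\ref{assumption:bounded-trace} together with the vector-valued reproducing identity $\innerz{h}{\cK_\bx\bc}_{\cH_\cK}=\innerz{h(\bx)}{\bc}_{\R^d}$: namely $\normz{\cK_\bx\bc}_{\cH_\cK}^2=\bc^\trans\cK(\bx,\bx)\bc\le\kappa^2\normz{\bc}_2^2$ and, dually, $\normz{f(\bx)}_2\le\kappa\normz{f}_{\cH_\cK}$. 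Together these make the Bochner integral $L_\cK f=\int_\cX\cK_\bx f(\bx)\,d\rho$ absolutely convergent, so $L_\cK$ is a well-defined bounded operator on $\cH_\cK$ (and $\hat L_\cK=\tfrac1M S_\bx^{*}S_\bx$ is visibly well defined).

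For self-adjointness and positivity I would compute, for $f,g\in\cH_\cK$, using the reproducing identity,
\[
\innerz{L_\cK f}{g}_{\cH_\cK}=\int_\cX\innerz{\cK_\bx f(\bx)}{g}_{\cH_\cK}\,d\rho=\int_\cX\innerz{f(\bx)}{g(\bx)}_{\R^d}\,d\rho=\innerz{f}{g}_\rho .
\]
The right-hand side is symmetric in $f$ and $g$, giving $L_\cK^{*}=L_\cK$, and taking $g=f$ gives $\innerz{L_\cK f}{f}_{\cH_\cK}=\llnormz{f}^2\ge0$, giving positivity; the same lines with the empirical measure handle $\hat L_\cK$ (alternatively, $\hat L_\cK=\tfrac1M S_\bx^{*}S_\bx$ is positive and self-adjoint directly).

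For the trace bound I would fix an arbitrary orthonormal basis $\{e_j\}_{j\ge1}$ of $\cH_\cK$; since $L_\cK$ is positive its trace is basis-independent, and
\[
\tr L_\cK=\sum_j\innerz{L_\cK e_j}{e_j}_{\cH_\cK}=\sum_j\int_\cX\normz{e_j(\bx)}_2^2\,d\rho=\int_\cX\Big(\sum_j\normz{e_j(\bx)}_2^2\Big)d\rho ,
\]
the interchange being legitimate by Tonelli since every term is nonnegative. For fixed $\bx$, expanding in the standard basis $\{\be_i\}_{i=1}^d$ of $\R^d$ and using Parseval in $\cH_\cK$ yields
\[
\sum_j\normz{e_j(\bx)}_2^2=\sum_{i=1}^d\sum_j\innerz{e_j}{\cK_\bx\be_i}_{\cH_\cK}^2=\sum_{i=1}^d\normz{\cK_\bx\be_i}_{\cH_\cK}^2=\sum_{i=1}^d\be_i^\trans\cK(\bx,\bx)\be_i=\tr\cK(\bx,\bx)\le\kappa^2 ,
\]
so $\tr L_\cK\le\kappa^2<\infty$. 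A positive self-adjoint operator with summable diagonal in some (equivalently, every) orthonormal basis is trace class, so $L_\cK$ is trace class; the identical computation with the empirical measure gives $\tr\hat L_\cK=\tfrac1M\sum_{m=1}^M\tr\cK(\bx^m,\bx^m)\le\kappa^2$, and $\hat L_\cK$ has finite rank, hence is trivially trace class.

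The only point worth flagging is the measurability of $\bx\mapsto\normz{e_j(\bx)}_2$ needed for the Tonelli step: each $e_j$ is a limit in $\cH_\cK$ of finite combinations $\sum_i\cK_{\bx^i}\bc_i$, which are continuous since the entries of $\cK$ are continuous (Assumption~\ref{assumption:boundary-extension}), so $e_j$ is Borel measurable pointwise. Beyond this mild measurability check, everything is a direct consequence of the reproducing property and Assumption~\ref{assumption:bounded-trace}, so there is no real obstacle.
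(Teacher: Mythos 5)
Your proposal is correct and follows essentially the same route as the paper: the trace is computed in an arbitrary orthonormal basis, Parseval over the standard basis of $\R^d$ reduces the diagonal sum at each $\bx$ to $\tr\cK(\bx,\bx)\le\kappa^2$, and positivity/self-adjointness follow from the reproducing identity $\innerz{L_\cK f}{g}_{\cH_\cK}=\innerz{f}{g}_\rho$. The only differences are that you spell out the Tonelli/measurability justification and the positivity computation that the paper leaves as "easy to see," which is a harmless (indeed welcome) amount of extra care.
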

\begin{proof}
    The result follows from a simple calculation. It is easy to see $L_\cK$ and $\hat L_\cK$ are positive and self-adjoint. We prove they are in trace class. Let $\{ \varphi_i\}$ be a orthonormal basis of $\cH_\cK$ and $\{ \be_i \}$ be the standard basis of $\R^d$, then
    \[ \begin{aligned}
         \tr L_\cK &= \sum_i \innerz{L_\cK \varphi_i}{\varphi_i}_\cH 
         = \int_\cX \sum_i \innerz{ \cK_\bx \varphi_i }{ \varphi_i }_\cH d\rho 
          = \sum_{k=1}^d \int_\cX \sum_i \innerz{ \innerz{\cK_\bx \be_k}{\varphi_i}_\cH \cK_\bx \be_k }{ \varphi_i }_\cH d\rho     \\
         & = \sum_{k=1}^d \int_\cX \sum_i |\innerz{\cK_\bx \be_k}{\varphi_i}_\cH|^2 d\rho     
          = \sum_{k=1}^d \int_\cX \normz{\cK_\bx \be_k}_\cH^2 d\rho     
          = \int_\cX \tr \cK(\bx, \bx) d\rho \leq \kappa^2
    \end{aligned} \]
    Similarly, we have $\tr \hat L_\cK \leq \kappa^2$.
\end{proof}

We need the following concentration inequality in Hilbert spaces used in~\citet{bauer2007regularization}.

\begin{lemma}[\citet{bauer2007regularization}, Proposition 23]
    \label{lemma:hilbert-space-concentration}
    Let $\xi$ be a random variable with values in a real Hilbert space $H$. Assume there are two constants $\sigma, H$, such that
    \[ \E [ \normz{\xi - \E\xi}_H^m ] \leq \frac{1}{2}m! \sigma^2 H^{m-2}, \quad \forall m \geq 2.\]
    Then, for all $n \in \N$, $0 < \delta < 1$, the following inequality holds with probability at least $1 - \delta$
    \[ \normz{\hat \xi - \E\xi}_H \leq 2 \left ( \frac{H}{n} + \frac{\sigma}{\sqrt n} \right ) \log \frac{2}{\delta}, \]
    where $\hat\xi = \frac{1}{n}\sum_{i=1}^n \xi_i$ and $\{ \xi_i \}$ are independent copies of $\xi$.
\end{lemma}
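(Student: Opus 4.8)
The plan is to reduce the claim to a dimension-free Bernstein-type tail bound for the Hilbert-space norm of a sum of independent mean-zero vectors, and then to convert that exponential tail into the stated high-probability estimate by solving one quadratic inequality.

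First I would center the variables. Set $\zeta_i := \xi_i - \E\xi$, so that $\zeta_1,\dots,\zeta_n$ are i.i.d., mean zero, and — since each $\xi_i$ is an independent copy of $\xi$, so this is exactly the hypothesis — still satisfy $\E\normz{\zeta_i}_H^m \le \tfrac12 m!\,\sigma^2 H^{m-2}$ for all $m\ge 2$. Writing $S := \sum_{i=1}^n \zeta_i$ we have $\hat\xi - \E\xi = S/n$, so it suffices to control $\normz{S}_H$; passing to the closed linear span of the $\zeta_i$ we may assume $H$ is separable.

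The core step is the Hilbert-space Bernstein inequality
\[ \Prob\!\left(\normz{S}_H \ge t\right) \;\le\; 2\exp\!\left(-\frac{t^2}{2\,(n\sigma^2 + H t)}\right), \qquad t \ge 0, \]
which rests on the aggregated moment bounds $\sum_{i=1}^n \E\normz{\zeta_i}_H^m \le \tfrac12 m!\,(n\sigma^2)\,H^{m-2}$. This is where the geometry of a Hilbert space is used in an essential way — the norm is $2$-smooth — and it is the main obstacle. A self-contained argument would run a supermartingale estimate for $\cosh(\lambda\normz{S_k}_H)$ along the partial sums $S_k=\sum_{i\le k}\zeta_i$ in the spirit of Pinelis, bounding the one-step conditional increment using the smoothness of $\normz{\cdot}_H$ together with the moment condition; alternatively one simply invokes the classical inequalities of Pinelis–Sakhanenko or Yurinskii. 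A net/union-bound over the unit ball of $H$ must be avoided, since it would reintroduce a dependence on the dimension.

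Finally I would turn the tail into the bound. Let $L := \log(2/\delta)$. The map $t \mapsto t^2/(2(n\sigma^2 + Ht))$ is increasing on $[0,\infty)$ and equals $L$ at the positive root of $t^2 - 2LHt - 2Ln\sigma^2 = 0$, namely $t_\ast = LH + \sqrt{L^2H^2 + 2Ln\sigma^2} \le 2LH + \sigma\sqrt{2Ln}$ by subadditivity of the square root. Hence $\Prob(\normz{S}_H \ge t_\ast) \le \delta$, so with probability at least $1-\delta$,
\[ \normz{\hat\xi - \E\xi}_H \;=\; \frac{\normz{S}_H}{n} \;\le\; \frac{2LH}{n} + \frac{\sqrt{2L}\,\sigma}{\sqrt n}. \]
To conclude, note that $\sqrt{2L} \le 2L$ is equivalent to $2L \ge 1$, i.e. $\log(2/\delta) \ge \tfrac12$, which holds for every $0 < \delta < 1$ since $2/\delta > 2$ and $\log 2 > \tfrac12$; therefore the right-hand side is at most $2\big(\tfrac{H}{n} + \tfrac{\sigma}{\sqrt n}\big)\log\tfrac{2}{\delta}$, which is the assertion.
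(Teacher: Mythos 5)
The paper does not prove this lemma; it is imported verbatim from \citet{bauer2007regularization} (their Proposition~23). Your argument is correct and follows the standard route that underlies the cited result: reduce to a centered i.i.d.\ sum, invoke a Pinelis/Yurinskii-type Bernstein tail bound $\Prob(\normz{S}_H \ge t) \le 2\exp\bigl(-t^2/(2(n\sigma^2 + Ht))\bigr)$ valid in any $2$-smooth Banach space (in particular any Hilbert space) under the aggregated moment condition, and then invert the tail by solving the quadratic in $t$. Your bookkeeping is also right: the subadditivity step $\sqrt{L^2H^2 + 2Ln\sigma^2} \le LH + \sigma\sqrt{2Ln}$ and the final simplification via $\sqrt{2L}\le 2L$ (using $L=\log(2/\delta) > \log 2 > 1/2$) both hold, yielding exactly the stated $2\bigl(H/n + \sigma/\sqrt n\bigr)\log(2/\delta)$. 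The only part you leave as a black box is the dimension-free Bernstein inequality itself, which is appropriate since that is precisely what \citet{bauer2007regularization} also delegate to the literature.
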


\begin{lemma}
    \label{lemma:xi-concentration}
    Under Assumption~\ref{assumption:bounded-div}, we have for all $M\in \N, 0 < \delta < 1$, the following inequality holds with probability at least $1 - \delta$
    \begin{equation}
        \label{eqn:xi-concentration}
        \normz{\hat \bzeta - \bzeta}_\cH \leq 2\left ( \frac{K}{M} + \frac{\Sigma}{\sqrt M} \right ) \log \frac{2}{\delta}, \\
    \end{equation}
    where $\hat \zeta = \frac{1}{M}\sum_{m=1}^M \divgers{\bx^m}\cK^\trans_{\bx^m}$ and $\{ \bx^m\}$ is the set of i.i.d. samples from $\rho$.
\end{lemma}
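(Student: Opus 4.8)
The plan is to obtain Lemma \ref{lemma:xi-concentration} as a direct consequence of the Hilbert-space Bernstein inequality in Lemma \ref{lemma:hilbert-space-concentration}, once Assumption \ref{assumption:bounded-div} is repackaged into the moment form that Lemma appears in. First I would fix $\xi_\bx := \divgers{\bx}\cK_\bx^\trans$, which is a well-defined $\cH_\cK$-valued random variable under Assumptions \ref{assumption:domain}--\ref{assumption:boundary-extension} (each column of $\cK_\bx^\trans$ is continuously differentiable, so its divergence lies in $\cH_\cK$), note that its Bochner mean is $\E[\xi_\bx]=\int_\cX \xi_\bx\,d\rho = \bzeta$, and observe that $\hat\bzeta = \frac1M\sum_{m=1}^M \xi_{\bx^m}$ is precisely the empirical average of $M$ independent copies of $\xi_\bx$ drawn from $\rho$.

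Next I would convert the exponential-moment hypothesis of Assumption \ref{assumption:bounded-div} into the Bernstein moment bound required by Lemma \ref{lemma:hilbert-space-concentration}. Setting $Y := \normz{\xi_\bx - \bzeta}_\cH / K \geq 0$, the Taylor expansion
\[
e^{Y} - Y - 1 = \sum_{m \geq 2} \frac{Y^m}{m!}
\]
has only nonnegative terms, so the assumed inequality $\int_\cX (e^{Y} - Y - 1)\,d\rho \leq \Sigma^2/(2K^2)$ forces, term by term, $\E[Y^m]/m! \leq \Sigma^2/(2K^2)$ for every $m \geq 2$. Multiplying by $m!\,K^m$ yields
\[
\E\!\left[\normz{\xi_\bx - \bzeta}_\cH^m\right] \leq \tfrac12\, m!\, \Sigma^2 K^{m-2}, \qquad m \geq 2,
\]
which is exactly the moment condition of Lemma \ref{lemma:hilbert-space-concentration} with $\sigma = \Sigma$ and $H = K$.

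Finally I would apply Lemma \ref{lemma:hilbert-space-concentration} with $n = M$, $\xi_i = \xi_{\bx^i}$, and $\hat\xi = \hat\bzeta$, obtaining that for every $M \in \N$ and $0 < \delta < 1$, with probability at least $1-\delta$,
\[
\normz{\hat\bzeta - \bzeta}_\cH \leq 2\left( \frac{K}{M} + \frac{\Sigma}{\sqrt M} \right)\log\frac{2}{\delta},
\]
which is the claim. I do not expect a genuine obstacle here; the only point deserving a line of care is the measurability/integrability of $\xi_\bx$ as an $\cH_\cK$-valued random variable, so that $\bzeta$ and the empirical mean are well-posed — this follows from the differentiability assumptions on $\cK$ together with the fact that Assumption \ref{assumption:bounded-div} already implies $\E\normz{\xi_\bx - \bzeta}_\cH < \infty$, hence $\E\normz{\xi_\bx}_\cH < \infty$.
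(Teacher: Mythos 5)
Your proof is correct and follows essentially the same route as the paper's: define $\xi_\bx := \divgers{\bx}\cK_\bx^\trans$, observe $\hat\bzeta$ is the empirical mean of i.i.d.\ copies of $\xi_\bx$, convert Assumption~\ref{assumption:bounded-div} into the Bernstein moment bound $\E[\normz{\xi_\bx-\bzeta}_\cH^m]\leq\tfrac12 m!\,\Sigma^2K^{m-2}$ via the nonnegativity of the Taylor series of $e^Y-Y-1$, and invoke Lemma~\ref{lemma:hilbert-space-concentration} with $\sigma=\Sigma$, $H=K$, $n=M$. The paper's one-line inequality $\normz{\xi_\bx-\xi}_\cH^m\leq m!\,K^m(e^{\normz{\xi_\bx-\xi}_\cH/K}-\normz{\xi_\bx-\xi}_\cH/K-1)$ is exactly your term-by-term comparison, just written pointwise before taking expectations.
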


\begin{proof}
    Define an $\cH_\cK$-valued random variable $\xi_\bx := \divgers{\bx} \cK^\trans_\bx$. It is easy to see $\E_{\bx\sim\nu}[ \xi_\bx ]= -L_\cK \bs_p =: \xi$. From Assumption~\ref{assumption:bounded-div}, we have for $m \geq 2$, 
    \[ \E_{\nu} [ \normz{ \xi_\bx - \xi }_\cH^m]
     \leq m! K^m \E_\nu \left [ \exp\left (\frac{\normz{ \xi_\bx - \xi }_\cH}{K}\right)  - \frac{\normz{ \xi_\bx - \xi }_\cH}{K} - 1\right ] \leq \frac{1}{2} m! \Sigma^2 K^{m-2}. \]
     Note that $\hat \bzeta = \frac{1}{M}\sum_{m=1}\xi_{\bx^m}$ and $\E_\nu \hat \bzeta = \xi$. Then \refeq{eqn:xi-concentration} follows from Lemma~\ref{lemma:hilbert-space-concentration}.
\end{proof}

\begin{lemma}
    \label{lemma:L-concentration}
    Under Assumption~\ref{assumption:bounded-trace}, we have for all $M\in \N, 0 < \delta < 1$, the following inequality holds with probability at least $1 - \delta$
    \begin{equation}
        \normz{\hat L_\cK - L_\cK}_\cH \leq \frac{2\sqrt 2 \kappa^2}{\sqrt M} \sqrt{\log \frac{2}{\delta}}.
    \end{equation}
\end{lemma}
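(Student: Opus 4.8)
The plan is to realize $\hat L_\cK$ and $L_\cK$ as the empirical and population means of a single bounded random operator, and then invoke a Hoeffding/Pinelis-type concentration inequality for i.i.d.\ sums valued in the Hilbert space $\mathrm{HS}(\cH_\cK)$ of Hilbert--Schmidt operators on $\cH_\cK$. Since the operator norm is dominated by the Hilbert--Schmidt norm, it suffices to prove the stated estimate for $\hsnormz{\hat L_\cK - L_\cK}$, which is the quantity actually needed in the proof of Theorem~\ref{thm:appendix-error-bound} anyway.

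First I would introduce, for each $\bx \in \cX$, the operator $A_\bx : \cH_\cK \to \cH_\cK$ defined by $A_\bx f := \cK_\bx f(\bx)$. Using the vector-valued reproducing property $\innerz{f}{\cK_\bx \be_k}_\cH = \innerz{f(\bx)}{\be_k}_{\R^d}$, one checks that $A_\bx = \sum_{k=1}^d (\cK_\bx\be_k)\otimes(\cK_\bx\be_k)$, a finite-rank, positive, self-adjoint operator, and that $\hat L_\cK = \frac1M\sum_{m=1}^M A_{\bx^m}$ while $L_\cK = \E_\rho[A_\bx]$ as a Bochner integral in $\mathrm{HS}(\cH_\cK)$. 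The boundedness input is then exactly the computation already carried out in the proof of Lemma~\ref{lemma:integral-operator-trace-norm}: $\hsnormz{A_\bx} \le \tr A_\bx = \tr\cK(\bx,\bx) \le \kappa^2$ by Assumption~\ref{assumption:bounded-trace}, and likewise $\hsnormz{L_\cK} \le \tr L_\cK \le \kappa^2$. Consequently the centered i.i.d.\ summands $\eta_m := A_{\bx^m} - L_\cK$ satisfy $\E\eta_m = 0$ and $\hsnormz{\eta_m} \le 2\kappa^2$ almost surely; measurability of $\bx\mapsto A_\bx$ and separability of $\mathrm{HS}(\cH_\cK)$ follow from the continuity hypotheses on $\cK$ in Assumption~\ref{assumption:boundary-extension}, so the Bochner-integral and concentration machinery applies.

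Next I would apply the Pinelis-type martingale tail bound for a Hilbert space: for i.i.d.\ zero-mean $\eta_1,\dots,\eta_M$ with $\hsnormz{\eta_m}\le b$ a.s., $\Prob\big(\hsnormz{\sum_{m=1}^M \eta_m} \ge r\big) \le 2\exp\!\big(-r^2/(2Mb^2)\big)$. With $b = 2\kappa^2$ this is $2\exp\!\big(-r^2/(8M\kappa^4)\big)$; setting $r = M\epsilon$ and equating $2\exp\!\big(-M\epsilon^2/(8\kappa^4)\big) = \delta$ yields $\epsilon = \tfrac{2\sqrt2\,\kappa^2}{\sqrt M}\sqrt{\log(2/\delta)}$, which is precisely the claimed bound on $\hsnormz{\hat L_\cK - L_\cK}$, hence also on $\normz{\hat L_\cK - L_\cK}$. (A McDiarmid argument, using that $\bx^{1:M}\mapsto \hsnormz{\hat L_\cK-L_\cK}$ has bounded differences $2\kappa^2/M$ together with $\E\hsnormz{\hat L_\cK-L_\cK}\le \sqrt{\kappa^4/M}$, would give an estimate of the same order but of a slightly different shape, so I would favour the Pinelis route.)

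The only genuine obstacle is arranging the constant to come out as exactly $2\sqrt2\,\kappa^2$: the factor is sensitive to which version of the Hilbert-space concentration inequality one cites and to whether one uses the crude a.s.\ bound $\hsnormz{\eta_m}\le 2\kappa^2$ or a variance-based (Bernstein) refinement. Using the crude bound in the Pinelis inequality is what produces exactly the stated constant, so I would deliberately not sharpen via the variance. Everything else --- the operator representation $A_\bx = \sum_k(\cK_\bx\be_k)\otimes(\cK_\bx\be_k)$, the trace bound, and the inequality $\normz{\cdot}_{\mathrm{op}}\le\hsnormz{\cdot}$ --- is routine and essentially contained in Lemma~\ref{lemma:integral-operator-trace-norm}.
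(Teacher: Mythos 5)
Your argument is correct and follows the same route as the paper, which simply invokes \citet[Lemma 8]{vito2005learning} together with Lemma~\ref{lemma:integral-operator-trace-norm}: write $\hat L_\cK - L_\cK$ as a centered average of the bounded HS-valued random variables $A_\bx = \cK_\bx\,(\cdot)(\bx)$, use $\hsnormz{A_\bx}\le\tr\cK(\bx,\bx)\le\kappa^2$, and apply a Pinelis/Hoeffding concentration inequality in the Hilbert space of Hilbert--Schmidt operators. You have essentially unpacked the cited lemma; the constants and the reduction to the trace bound match exactly.
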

\begin{proof}
    This is a direct consequence of \citet[Lemma 8]{vito2005learning} and Lemma~\ref{lemma:integral-operator-trace-norm}.
\end{proof}

The following useful lemma is from~\citet[Lemma 7]{de2014learning} and \citet[Lemma 15]{sriperumbudur2017density}
\begin{lemma}
    \label{lemma:operator-lipechitz}
    Suppose $S$ and $T$ are two self-adjoint Hilbert-Schmidt operators on a separable Hilbert space $H$ with spectrum contained in the interval $[a, b]$. Given a Lipschitz function $r: [a, b] \to \R$ with Lipschitz constant $L_r$, we have
    \[ \hsnorm{r(S) - r(T)} \leq L_r \hsnorm{S - T}. \]
\end{lemma}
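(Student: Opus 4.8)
The plan is to reduce the statement to an elementary scalar inequality by diagonalising $S$ and $T$ against one another. Being Hilbert–Schmidt, both operators are compact, so the spectral theorem for compact self-adjoint operators furnishes orthonormal bases $\{e_i\}$ and $\{f_j\}$ of $H$ consisting of eigenvectors, with $S e_i = \lambda_i e_i$, $T f_j = \mu_j f_j$, and all $\lambda_i,\mu_j \in [a,b]$; the possibly extra eigenvalue $0$ is harmless, since $0 \in [a,b]$ whenever $H$ is infinite dimensional, and when $\ker S$ (resp. $\ker T$) is nontrivial one simply adjoins an arbitrary orthonormal basis of it. Because $r$ is real-valued and continuous on $[a,b] \supseteq \mathrm{spec}(S) \cup \mathrm{spec}(T)$, the continuous functional calculus gives $r(S) e_i = r(\lambda_i) e_i$ and $r(T) f_j = r(\mu_j) f_j$, and $r(S), r(T)$ are self-adjoint.

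The key computation is the matrix coefficient of $r(S) - r(T)$ in the basis pair $(\{e_i\},\{f_j\})$: using self-adjointness of $r(T)$,
\[ \innerz{(r(S) - r(T)) e_i}{f_j} = \bigl(r(\lambda_i) - r(\mu_j)\bigr) \innerz{e_i}{f_j}. \]
I would then expand the Hilbert–Schmidt norm through these coefficients, $\hsnormz{A}^2 = \sum_{i,j} |\innerz{A e_i}{f_j}|^2$ (valid for any bounded $A$, the right-hand side equalling $+\infty$ precisely when $A$ is not Hilbert–Schmidt), and apply the Lipschitz estimate $|r(\lambda_i) - r(\mu_j)| \le L_r |\lambda_i - \mu_j|$ term by term:
\[ \hsnormz{r(S) - r(T)}^2 = \sum_{i,j} |r(\lambda_i) - r(\mu_j)|^2 |\innerz{e_i}{f_j}|^2 \le L_r^2 \sum_{i,j} |\lambda_i - \mu_j|^2 |\innerz{e_i}{f_j}|^2. \]
Finally, running the identical coefficient computation with the function $\sigma \mapsto \sigma$ identifies the last sum as $\hsnormz{S - T}^2$, which completes the argument (and incidentally shows $r(S) - r(T)$ is Hilbert–Schmidt, the bound being finite).

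The closest thing to an obstacle here is purely technical: making sure the spectral decomposition is legitimately available in the (possibly) non-injective, infinite-dimensional setting, handled by the completion of eigenvectors noted above, on whose kernel $r(S)$ acts as $r(0)\,I$; and justifying the interchange of summation order in the non-negative double series, which is immediate by Tonelli. The mathematical content is the two-line scalar inequality. Since the statement is quoted verbatim from \citet[Lemma 7]{de2014learning} and \citet[Lemma 15]{sriperumbudur2017density}, one could alternatively just defer to those references; the sketch above is essentially their argument.
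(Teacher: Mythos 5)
Your proof is correct, and it is indeed the standard argument behind the cited result. Note that the paper itself offers no proof of this lemma: it is stated as ``from \citet[Lemma 7]{de2014learning} and \citet[Lemma 15]{sriperumbudur2017density}'' and left at that, so there is no in-paper proof to compare against. Your reconstruction --- diagonalise $S$ and $T$ against separate orthonormal eigenbases $\{e_i\}$, $\{f_j\}$, compute the mixed matrix coefficient $\innerz{(r(S)-r(T))e_i}{f_j} = (r(\lambda_i)-r(\mu_j))\innerz{e_i}{f_j}$, expand $\hsnormz{\cdot}^2 = \sum_{i,j}|\innerz{\cdot\,e_i}{f_j}|^2$, apply the scalar Lipschitz bound termwise, and recognise the resulting sum as $\hsnormz{S-T}^2$ by repeating the coefficient computation for the identity function --- is exactly the argument in the references. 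Your care about the non-injective and infinite-dimensional case (adjoining an arbitrary orthonormal basis of the kernel, and observing that $0\in[a,b]$ is automatic since Hilbert--Schmidt operators are compact) closes the only potential gap, and Tonelli handles the reordering of the non-negative double sum. Nothing to fix.
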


 \def\generatesamples{0}
\ifx\generatesamples\undefined
\else
\clearpage
\section{Samples} \label{appendix:samples}
\begin{table}[H]
	\centering
    \caption{WAE samples on MNIST.}
    \label{table:samples/mnist}
    \vskip 0.05in
	\begin{tabular}{ccccc}
		 & $d = 8$ & $d = 32$ & $d = 64$ & $d = 128$ \\
		\rotatebox{90}{Stein} &
		\parbox[c]{0.18\textwidth}{\includegraphics[width=\linewidth]{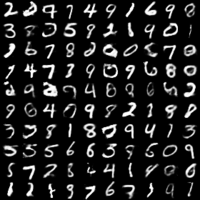}}  &
		\parbox[c]{0.18\textwidth}{\includegraphics[width=\linewidth]{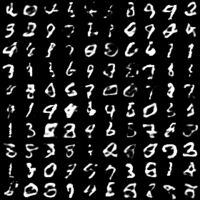}}  &
		\parbox[c]{0.18\textwidth}{\includegraphics[width=\linewidth]{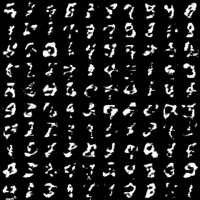}} &
		\parbox[c]{0.18\textwidth}{\includegraphics[width=\linewidth]{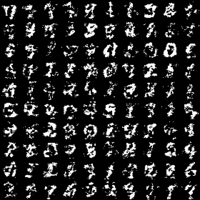}} \\ \\
		\rotatebox{90}{SSGE} &
		\parbox[c]{0.18\textwidth}{\includegraphics[width=\linewidth]{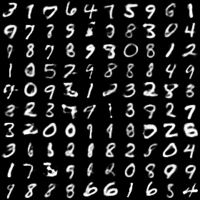}}  &
		\parbox[c]{0.18\textwidth}{\includegraphics[width=\linewidth]{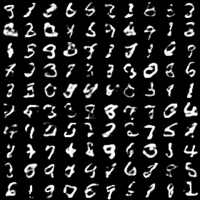}}  &
		\parbox[c]{0.18\textwidth}{\includegraphics[width=\linewidth]{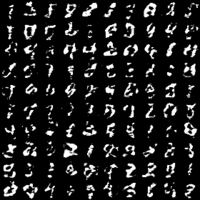}} &
		\parbox[c]{0.18\textwidth}{\includegraphics[width=\linewidth]{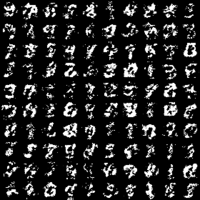}} \\ \\
		\rotatebox{90}{SSM} &
		\parbox[c]{0.18\textwidth}{\includegraphics[width=\linewidth]{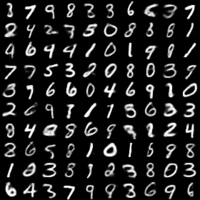}}  &
		\parbox[c]{0.18\textwidth}{\includegraphics[width=\linewidth]{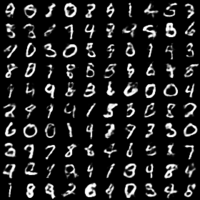}}  &
		\parbox[c]{0.18\textwidth}{\includegraphics[width=\linewidth]{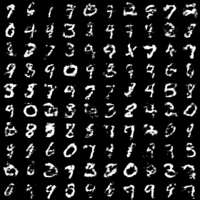}} &
		\parbox[c]{0.18\textwidth}{\includegraphics[width=\linewidth]{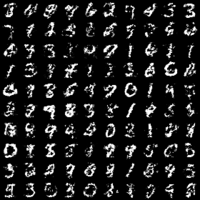}} \\ \\
		\rotatebox{90}{NKEF$_2$} &
		\parbox[c]{0.18\textwidth}{\includegraphics[width=\linewidth]{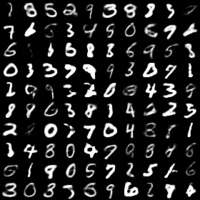}}  &
		\parbox[c]{0.18\textwidth}{\includegraphics[width=\linewidth]{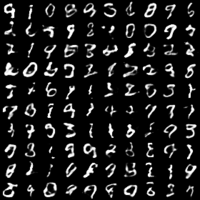}}  &
		\parbox[c]{0.18\textwidth}{\includegraphics[width=\linewidth]{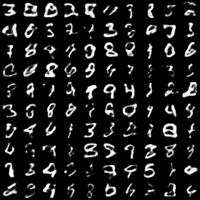}} &
		\parbox[c]{0.18\textwidth}{\includegraphics[width=\linewidth]{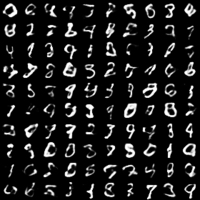}} \\ \\
		\rotatebox{90}{$\nu$-method} &
		\parbox[c]{0.18\textwidth}{\includegraphics[width=\linewidth]{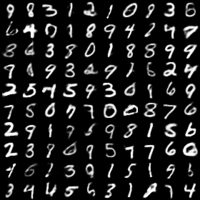}}  &
		\parbox[c]{0.18\textwidth}{\includegraphics[width=\linewidth]{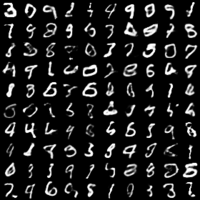}}  &
		\parbox[c]{0.18\textwidth}{\includegraphics[width=\linewidth]{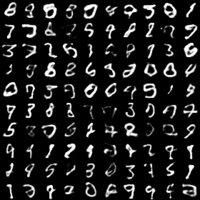}} &
		\parbox[c]{0.18\textwidth}{\includegraphics[width=\linewidth]{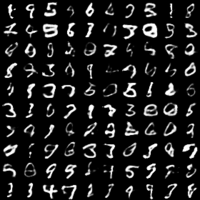}} \\ \\
		\rotatebox{90}{KEF-CG} &
		\parbox[c]{0.18\textwidth}{\includegraphics[width=\linewidth]{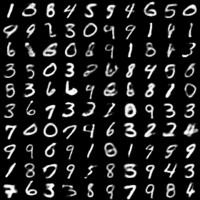}}  &
		\parbox[c]{0.18\textwidth}{\includegraphics[width=\linewidth]{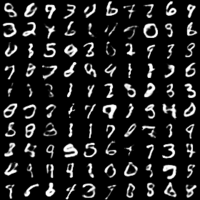}}  &
		\parbox[c]{0.18\textwidth}{\includegraphics[width=\linewidth]{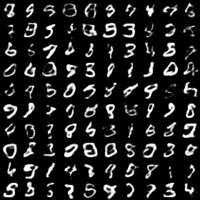}} &
		\parbox[c]{0.18\textwidth}{\includegraphics[width=\linewidth]{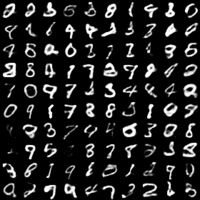}} \\
	\end{tabular}
\end{table}
\begin{table}[H]
	\centering
    \caption{WAE samples on CelebA.}
    \label{table:samples/celeba}
    \vskip 0.05in
	\begin{tabular}{ccccc}
		 & $d = 8$ & $d = 32$ & $d = 64$ & $d = 128$ \\
		\rotatebox{90}{Stein} &
		\parbox[c]{0.18\textwidth}{\includegraphics[width=\linewidth]{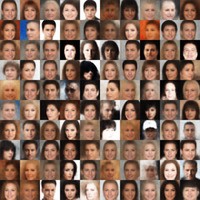}}  &
		\parbox[c]{0.18\textwidth}{\includegraphics[width=\linewidth]{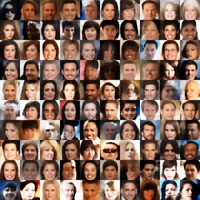}}  &
		\parbox[c]{0.18\textwidth}{\includegraphics[width=\linewidth]{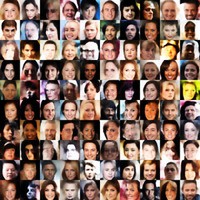}} &
		\parbox[c]{0.18\textwidth}{\includegraphics[width=\linewidth]{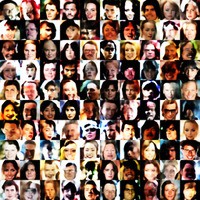}} \\ \\
		\rotatebox{90}{SSGE} &
		\parbox[c]{0.18\textwidth}{\includegraphics[width=\linewidth]{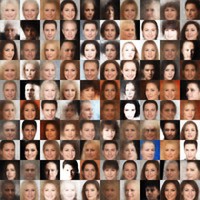}}  &
		\parbox[c]{0.18\textwidth}{\includegraphics[width=\linewidth]{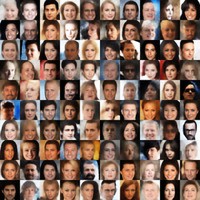}}  &
		\parbox[c]{0.18\textwidth}{\includegraphics[width=\linewidth]{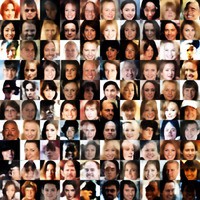}} &
		\parbox[c]{0.18\textwidth}{\includegraphics[width=\linewidth]{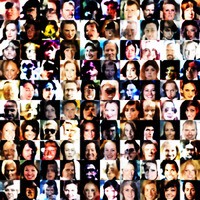}} \\ \\
		\rotatebox{90}{SSM} &
		\parbox[c]{0.18\textwidth}{\includegraphics[width=\linewidth]{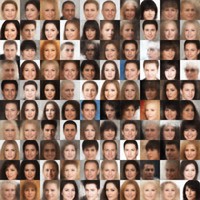}}  &
		\parbox[c]{0.18\textwidth}{\includegraphics[width=\linewidth]{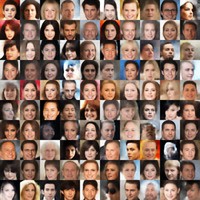}}  &
		\parbox[c]{0.18\textwidth}{\includegraphics[width=\linewidth]{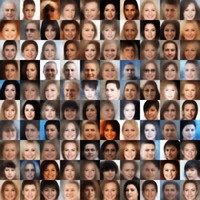}} &
		\parbox[c]{0.18\textwidth}{\includegraphics[width=\linewidth]{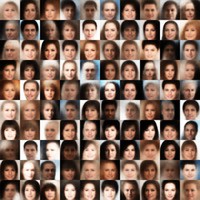}} \\ \\
		\rotatebox{90}{NKEF$_2$} &
		\parbox[c]{0.18\textwidth}{\includegraphics[width=\linewidth]{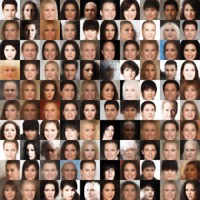}}  &
		\parbox[c]{0.18\textwidth}{\includegraphics[width=\linewidth]{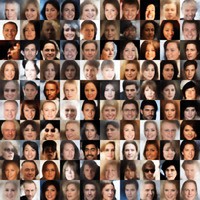}}  &
		\parbox[c]{0.18\textwidth}{\includegraphics[width=\linewidth]{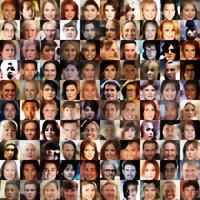}} &
		\parbox[c]{0.18\textwidth}{\includegraphics[width=\linewidth]{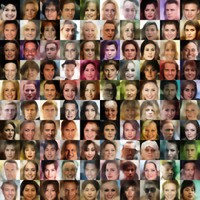}} \\ \\
		\rotatebox{90}{$\nu$-method} &
		\parbox[c]{0.18\textwidth}{\includegraphics[width=\linewidth]{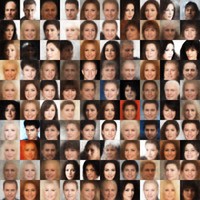}}  &
		\parbox[c]{0.18\textwidth}{\includegraphics[width=\linewidth]{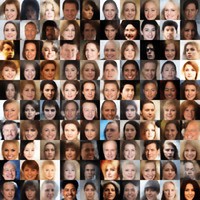}}  &
		\parbox[c]{0.18\textwidth}{\includegraphics[width=\linewidth]{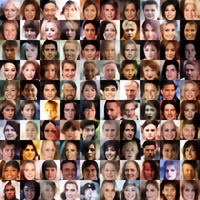}} &
		\parbox[c]{0.18\textwidth}{\includegraphics[width=\linewidth]{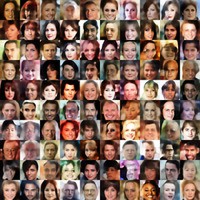}} \\ \\
		\rotatebox{90}{KEF-CG} &
		\parbox[c]{0.18\textwidth}{\includegraphics[width=\linewidth]{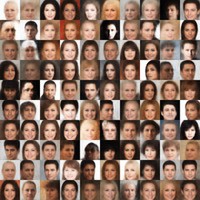}}  &
		\parbox[c]{0.18\textwidth}{\includegraphics[width=\linewidth]{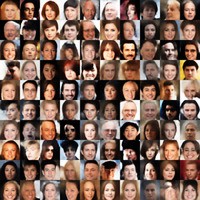}}  &
		\parbox[c]{0.18\textwidth}{\includegraphics[width=\linewidth]{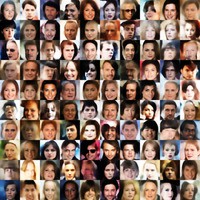}} &
		\parbox[c]{0.18\textwidth}{\includegraphics[width=\linewidth]{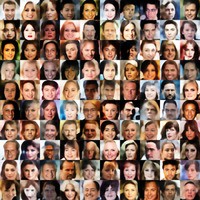}} \\
	\end{tabular}
\end{table}
\fi

\end{document}